\pgfplotsset{compat=1.10}
\tikzset{cross/.style={cross out, draw=black, minimum size=2*(#1-\pgflinewidth), inner sep=0pt, outer sep=0pt},
cross/.default={2pt}}
\newcommand{\model}[1]{\mathcal{M}_{#1}}
\newcommand{\abst}[2]{\mathcal{I}_{(#1,#2)}}
\newcommand{\card}[1]{\lvert#1\rvert}
\newcommand{\powerset}[1]{\mathbb{I}(#1)}
\DeclareMathOperator*{\argmin}{arg\,min}
\newcommand{\mshift}{S}
\newcommand{\dist}[3]{d_{#1}(#2,#3)}
\newcommand{\distance}[3]{\lVert #1 - #2\rVert_{#3}}
\newcommand{\vectorisation}[1]{\mathbf{vec}(#1)}
 \newcommand{\smallquad}{\hspace{0.5em}} 
\newcommand{\R}{\mathbb{R}}
\newcommand{\name}{\Delta}
\newcommand{\candidates}{\mathcal{S}}
\newcommand{\dataset}{\mathcal{D}}
\newcommand{\tree}{\mathcal{T}}
\newtheorem{lemma}{Lemma}
 \newcommand{\relua}{\draw[line width=1.5pt] (-1em,0) -- (0,0)
                                (0,0) -- (0.75em,0.75em);}
 \newcommand{\sigmoida}{\draw[line width=1.5pt,scale=0.08,samples=100,domain=-6:6,smooth,variable=\x] plot ({\x},{4/(1+exp(-\x))});}
 \newcommand{\deleteFT}[1]{\textcolor{pink}{
}}
\newcommand{\FL}[1]{\textcolor{black}{#1}}
\newcommand{\FT}[1]{\textcolor{black}{#1}}
\newcommand{\JJ}[1]{\textcolor{black}{#1}}
\newcommand{\red}[1]{\textcolor{black}{#1}}
\newcommand{\blue}[1]{\textcolor{black}{#1}}
\newcommand{\ouralgo}{RNCE}
\newcommand{\revision}[1]{\textcolor{black}{#1}}
\newtheorem{definition}{Definition}
\newtheorem{remark}{Remark}
\newtheorem{example}{Example}
\algnewcommand\algorithmicforeach{\textbf{for each}}
\journal{Artificial Intelligence Journal}
\begin{document}

\begin{frontmatter}



\title{Interval Abstractions for \\ Robust Counterfactual Explanations}


\author{Junqi Jiang\corref{cor1}}
\ead{junqi.jiang@imperial.ac.uk}
\author{Francesco Leofante}
\ead{f.leofante@imperial.ac.uk}
\author{Antonio Rago}
\ead{a.rago@imperial.ac.uk}
\author{Francesca Toni}
\ead{f.toni@imperial.ac.uk}


\cortext[cor1]{Corresponding author.}
\affiliation{organization={Department of Computing, Imperial College London},
            addressline={180 Queen’s Gate},
            city={\\London},
            postcode={SW7 2AZ}, 
            country={United Kingdom}}

\begin{abstract}
Counterfactual Explanations (CEs) have emerged as a major paradigm in explainable AI research, providing recourse recommendations for users affected by the decisions of machine learning models. 
\JJ{However, \FL{CEs found by existing methods often become invalid when slight changes occur in the parameters of the model they were generated for}}. 
The literature lacks a way to \FL{provide exhaustive} robustness guarantees for CEs under model changes, in that existing methods to improve CEs' robustness are \FL{mostly} heuristic, and the robustness performances are evaluated empirically using only a limited number of retrained models. 
To bridge this gap, we propose a novel interval abstraction technique for parametric machine learning models, \JJ{which allows us to obtain provable robustness guarantees \FL{for} CEs under \FL{a} possibly infinite 
set of plausible model changes $\Delta$.} 
\FL{Based on this idea,} we formalise \FL{a} robustness notion for CEs, \FL{which we call} $\Delta$-robustness, in both binary and multi-class classification settings. We \FL{present} procedures to verify $\Delta$-robustness based on Mixed Integer Linear Programming, using which we further propose algorithms to generate CEs that are $\Delta$-robust. In an extensive empirical study \FL{involving neural networks and logistic regression models}, we demonstrate \FL{the practical applicability of our approach. We} discuss two strategies for determining the appropriate hyperparameter\FL{s} in our method, and we quantitatively benchmark CEs generated by eleven methods, highlighting the effectiveness of our algorithms in finding robust CEs.
\end{abstract}



\begin{keyword}
Explainable AI \sep Counterfactual Explanations \sep Algorithmic Recourse \sep Robustness of Explanations


\end{keyword}

\end{frontmatter}


\section{Introduction}
\label{sec:intro}

As the field of explainable AI (XAI) has matured, \emph{counterfactual explanations} (CEs) have risen to prominence as one of the dominant post-hoc methods for explaining the outputs of AI models (see \cite{Guidotti_22,DBLP:journals/csur/KarimiBSV23survey} for overviews). 
For a given input to a model, a CE essentially presents a user with a \FL{minimally} modified input which results in a different output from the model,
thus \FL{revealing how a different outcome could be achieved if the proposed changes were to be enforced}.
CEs have been \FL{shown} 
to improve human understanding and trust~\cite{Miller_19}, \FL{as they provide information about alternative possibilities that humans can use to build rich mental representations}~\cite{Celar2023}. 
\FL{Initial approaches to generate CEs}~\cite{Tolomei_17,Wachter_17} were optimised for \emph{validity},
i.e. \FL{the ability of a CE to correctly change} the output, and \emph{proximity},
with respect to some distance measure between the original and modified inputs.
{Since then,} 
additional metrics have been proposed (see \cite{Guidotti_22} for an overview), such as \emph{diversity} \cite{mothilal2020explaining}, i.e. how widely 
{CEs} differ from one another, and \emph{plausibility}~\cite{dhurandhar2018explanations}, i.e. whether the 
{CEs lie} within the data distribution.

A further metric which is receiving increasing attention of late is \emph{robustness}~\cite{DBLP:journals/corr/jiangsurvey24}, i.e. how the validity of a CE is affected by changes in the scenario for which the CE was initially generated. 
In this paper, we focus on robustness to slight changes in the AI model parameters induced by, 
for example, retraining~\cite{upadhyay2021towards,DBLP:conf/iclr/BuiNN22,blackconsistent,pmlr-v162-dutta22a,nguyen2022robust,oursaaai23,pmlr-v222-jiang24a}.  This form of robustness is of critical importance in practice.
For illustration, consider a mortgage applicant who was rejected by a model and received a CE demonstrating changes they could make to their situation in order to have their application accepted. If retraining occurs while the applicant makes those 
changes, without robustness, their modified case may still result in a rejected application, leaving the mortgage provider liable due to their conflicting statements. This is 
especially concerning when CEs are optimised for 
proximity, as they are likely to be close to the model's decision boundary 
and thus at high risk of being invalid if small changes in this boundary occur \cite{upadhyay2021towards}. 

Methods recently introduced to target this problem 
typically 
\FL{induce robustness to model changes either by using dedicated continuous optimisation procedures} \cite{nguyen2022robust,upadhyay2021towards}, or  
\FL{by applying post-hoc refinements to} candidate CEs found by non-robust CE generation methods \cite{blackconsistent,pmlr-v162-dutta22a,DBLP:conf/icml/HammanNMMD23}. 
\FL{However}, due to their \FL{approximate} nature, these methods \FL{are unable to provide formal robustness} guarantees, which are advocated as being vital towards 
achieving trustworthy AI \cite{Marques-Silva_22}. 



In this work, we \FL{fill this gap by} present\FL{ing} a method which provides \FL{formal} robustness guarantees for CEs \FL{under model shifts}. Specifically, we \FL{target plausible model shifts~\cite{upadhyay2021towards} that can be encoded by a set $\Delta$ of norm-bounded perturbations to the parameters of machine learning models}. 
\FL{Given this setting}, we propose a novel \emph{interval abstraction} technique, which over-approximates the output ranges of parametric machine learning models (including neural networks and logistic regressions) when subject to \JJ{ the model shifts encoded in $\Delta$.} 
This technique is inspired by the abstraction method presented in \cite{PrabhakarA19}, originally proposed for estimating neural networks' outputs by grouping weight edges into weight intervals. We show that using our interval abstraction, a \FL{precise notion of robustness for CEs under $\Delta$} can be stated and formally verified. \FL{In the following, we refer to such notion of robustness as the} $\Delta$-robustness of CEs. 
Unlike most previous robust CE methods which only apply to binary classification, our focus on 
\JJ{computing output ranges} 
allows our method to also work on multi-class classification. 

Using \FL{interval abstractions, we propose two procedures to generate provably robust CEs}: an iterative algorithm \FL{augmenting existing CE generation methods,} and \JJ{a sound and complete} Robust Nearest-neighbour Counterfactual Explanations (RNCE) method. \FL{Since this work uses} \JJ{Mixed Integer Linear Programming (MILP) to practically test $\Delta$-robustness, \FL{the ensuing presentation will only focus on machine learning models whose forward pass can be encoded into a MILP program}.}

Finally, in an extensive empirical study, we demonstrate the effectiveness of our CE generation algorithms in a benchmarking study against seven existing methods. Notably, \JJ{one configuration of our iterative algorithm finds $\Delta$-robust CEs with the lowest costs among all the robust baselines, and our new RNCE
 algorithm 
 achieves perfectly robust results} while finding CEs close to the data manifold.

The paper is structured as follows. In Section~\ref{sec:related} we cover the related work, and in Section~\ref{sec:background} we introduce background notions on computing CEs. Sections~\ref{sec:inns}, \ref{sec:interval_abs_multi_class}, and \ref{sec:milp_delta_rob} introduce a formalisation of $\Delta$-robustness and a MILP encoding for testing it. Then, the two CEs generation algorithms are presented in Section~\ref{sec:algorithms}, which are extensively evaluated through experiments in Section~\ref{sec:experiments}. We conclude in Section~\ref{sec:conclusion} with future research directions. The core contributions of our work are summarised as follows:
\begin{itemize}
    \item \JJ{We propose a novel interval abstraction method to test $\Delta$-robustness, \FL{formally} verifying whether a CE is robust against plausible model changes}. 
    \item Our method explicitly characterises robustness for CEs in multi-class classification, { and, to the best of our knowledge, is the first to do so}.
    \item We present a principled workflow demonstrating the usefulness of $\Delta$-robustness for evaluating and generating robust CEs in practice. 
    \item We introduce an iterative algorithm and the RNCE algorithm to generate provably robust CEs, which are demonstrated to have superior performances against seven baselines.
\end{itemize}

This paper builds upon our previous work \cite{oursaaai23} with significant extensions. 
Specifically, Section~\ref{sec:inns} extends the corresponding section in \cite{oursaaai23} to account for different parametric machine learning models in addition to feed-forward neural networks. We present in-depth discussions and relaxations for the soundness of $\Delta$ (Definition~\ref{def:delta_robustness}), and formalisation for multi-class classifications are included with an empirical study (Sections~\ref{sec:interval_abs_multi_class}, \ref{ssec:experiments_multi_class}). Section~\ref{sec:milp_delta_rob} formalises testing procedures for $\Delta$-robustness in terms of MILP programs, which were only briefly mentioned in Appendix B of our previous work. Section~\ref{sec:algorithms} significantly extends the algorithm proposed in \cite{oursaaai23}, which could fail to find provably robust CEs. \revision{We additionally propose a new algorithm, RNCE, (Algorithm~\ref{alg:algo1}), which is guaranteed to return provably robust CEs, while also addressing plausibility, another desirable property of CEs.} In Section~\ref{sec:experiments}, we propose and comprehensively investigate two strategies to find the optimal hyperparameters in our approach, which is not presented in the previous work. Further, the empirical study additionally includes \revision{benchmarking of the runtime performance of our approach,} new results obtained for logistic regression models, and four more CE generation baselines, two of which generate robust CEs, giving a more thorough experimental evaluation of both our approach and the \FL{robustness} research landscape in general. 
Finally, throughout the paper, we have added discussion and examples to give more intuition on the introduced concepts, as well as a more in-depth view of the existing literature.

\section{Related work}
\label{sec:related}

\subsection{Counterfactual explanations}
\label{ssec:related_counterfactual_Explanation}

Various methods for generating CEs in classification tasks have been proposed throughout the recent surge in XAI research, often optimising for one or more metrics characterising desirable properties of CEs. \citet{Tolomei_17} focused on tree-based classifiers and evaluated the CEs' validity, whereas \citet{Wachter_17} formulated the CE search problem for differentiable models as a gradient-based optimisation problem and evaluated also CEs' proximity (we refer their method as \emph{GCE} in Section~\ref{sec:experiments}).
These \FL{two} metrics remain a prominent research focus, e.g. 
\citet{mohammadi2021scaling} treat CE generation in neural networks as a constrained optimisation problem such that formal validity and proximity guarantees can be given. Various works have considered plausibility, e.g. \citet{NiceNNCE} find dataset points which are naturally on {the} data manifold as CEs (we refer to their method as \emph{NNCE} in Section~\ref{sec:experiments}). Meanwhile, variational auto-encoders have been used to generate plausible\footnote{\JJ{``Plausible'' has been used to describe both the property of CEs and the form of model changes; the specific meanings are clear from the context.}} CEs 
\citep{dhurandhar2018explanations,pawelczyk2020learning,van2021interpretable}. 
Actionability ensures that the CEs only coherently change the mutable features. This is usually dealt with by customising constraints in the optimisation process of finding CEs \cite{ustun2019actionable}.  
\citet{mothilal2020explaining} and \citet{dandl2020multi} build optimisation frameworks 
for the diversity of the generated CEs. Another line of research focuses on building links between CEs and the causality literature, formulating the problem of finding CEs as intervention operations in causal frameworks \cite{DBLP:conf/nips/KarimiKSV20,DBLP:conf/fat/KarimiSV21}.

Methods for generating CEs have also been defined for other classifiers, e.g. 
\citet{ustun2019actionable} consider different types of linear classification models, 
\citet{albini2020relation} 
focus on different forms of Bayesian classifier,  and 
\citet{Kanamori_20} target logistic regression and random forest classifiers. Outside the scope of tabular data classification, studies have also investigated CEs for, e.g., graph data tasks \cite{DBLP:conf/nips/BajajCXPWLZ21}, visual tasks \cite{DBLP:conf/nips/AugustinBC022}, time series prediction tasks \cite{DBLP:conf/iccbr/DelaneyGK21}, etc. We refer to \cite{Guidotti_22,DBLP:journals/csur/KarimiBSV23survey} for recent overviews.

\subsection{Robustness of counterfactual explanations}
\label{ssec:related-robustcounterfactuals}

In this work, we consider the robustness of CEs against model changes. 
When using traditional methods (i.e. methods that focus on the properties introduced in Section~\ref{ssec:related_counterfactual_Explanation}) to find CEs for some classification models, the resulting CEs are highly unlikely to remain valid when the model parameters are updated  \cite{Rawal_20X,upadhyay2021towards}. As illustrated in Section~\ref{sec:intro}, this could cause issues for both the users receiving the CEs and the {providers of} the explanations. 

Many research studies have been conducted to tackle this problem, aiming at \FL{generating} high-quality, robust CEs. \citet{upadhyay2021towards} adopt a gradient-based robust optimisation approach to generate CEs that are robust to model parameter changes. A similar gradient-based approach is taken by \cite{nguyen2022robust,DBLP:conf/iclr/BuiNN22,DBLP:conf/iclr/NguyenBN23} under probabilistic frameworks where model changes are expressed by probability distributions associated with ambiguity sets. \citet{robovertime} proposes a retraining procedure using counterfactual data augmentation to mitigate the invalidation of previously generated non-robust CEs, while \citet{DBLP:conf/cikm/GuoJCSY23} introduce a robust training framework which jointly optimises the accuracy of neural networks and the robustness of CEs. Another line of work~\cite{blackconsistent,pmlr-v162-dutta22a,DBLP:conf/icml/HammanNMMD23} places more focus on designing heuristics aimed at increasing the model confidence (predicted class probability) to induce more robust CEs . These heuristics are then used as part of certain search-based refining processes to improve the robustness of CEs found by any base CEs generation methods. \revision{Differently from previous works, we work target exhaustive robustness guarantees for CEs against norm-bounded model parameter changes. To practically verify and to compute provably robust CEs, we adopt MILP-based approaches. Though some studies introduce methods {which provide empirical measures of CEs' robustness \cite{DBLP:conf/iclr/BuiNN22,pmlr-v162-dutta22a,DBLP:conf/icml/HammanNMMD23}, no existing approach is able to give the strong formal guarantees which our method affords, to the best of our knowledge.}}

Other forms of robustness of CEs have also been studied in the literature. 
Robustness against input perturbations requires that the CEs generation method not produce drastically different CEs for very similar inputs \cite{Slack_21,DBLP:journals/corr/leofanteaaai24}. Robustness against changes in the training dataset, especially those that resulted in data deletion requests, aims to ensure the CEs' validity under the consequently retrained models \cite{DBLP:conf/icml/KrishnaML23,pawelczykRTBF23righttoforgotten}. The methods in \cite{pawelczyk2020counterfactual,Leofante2023modelmultiplicity,DBLP:journals/corr/jiangaamas24} focus on the implications of the inconsistencies of CEs under predictive multiplicity, where multiple comparable models exist for the same task while assigning conflicting labels for the same inputs. Finally, \JJ{users might only implement the recourse indicated by CEs to an approximate level, instead of achieving the prescribed feature values exactly. It is therefore desirable that CEs stay valid when {subject to} such noisy execution 
\cite{Dominguez-Olmedo_22,LeofanteLomuscio23b,LeofanteLomuscio23a,Virgolin_23,maragno2024finding}. }
The study of these forms of robustness is outside the scope of this paper as our focus is on model changes, we refer to \cite{DBLP:journals/corr/mishrasurvey21,DBLP:journals/corr/jiangsurvey24} for recent surveys.

\subsection{Robustness and verification in machine learning}

The robustness problem has been extensively studied in the machine learning literature. \FL{Studies in this area are} usually concerned with \FL{proving} the consistency of neural network predictions when various types of small perturbations occur in the input \cite{Carlini_17,Weng_18} 
or in the model parameters \cite{DBLP:conf/aaai/WengZ0CLD20,TsaiHYC21}. 
One popular approach to check the robustness of learning models such as neural networks relies on abstraction techniques. These methods derive over-\FL{approximations} of the neural networks' output ranges, which are then integrated into the training loop to ensure provable robustness guarantees against small perturbations \cite{DBLP:conf/icml/WongK18,DBLP:conf/icml/MirmanGV18,DBLP:conf/iccv/GowalDSBQUAMK19,DBLP:conf/iclr/ZhangCXGSLBH20,DBLP:conf/aaai/HenriksenL23}. 
Notably, interval bound propagation-based methods have been found effective, using interval arithmetic to propagate the perturbations in the input space to the output layer  \cite{DBLP:conf/icml/MirmanGV18,DBLP:conf/iccv/GowalDSBQUAMK19}. Tighter bounds are further obtained by symbolic interval propagation \cite{DBLP:conf/iclr/ZhangCXGSLBH20,DBLP:conf/aaai/HenriksenL23}. Finally, the most relevant work to our study is the Interval Neural Networks (INNs) proposed by \citet{PrabhakarA19}. INNs also use interval arithmetic to estimate the model output ranges, but with a focus on aggregating adjacent weights to simplifying the neural network architecture for easier verification. By solving MILP programs, over-approximations of model output ranges can be obtained. Differently from previous work, we propose to use INNs to obtain a novel abstraction technique that represents the robustness property of CEs under a pre-defined set of plausible model changes.

\section{Background}
\label{sec:background}
\paragraph{\textbf{Notation}} We use $[k]$ to denote the set $\{1,\ldots,k\}$, for $k \in \mathbb{Z}^+$. Given a vector $x \in \mathbb{R}^n$ we use $x_i$ to denote the $i$-th component. Similarly, for a matrix $W \in \mathbb{R}^n \times \mathbb{R}^m$, we use $W_{i,j}$ to denote the $i,j$-th element and $\vectorisation{W}$ to refer to $W$'s vectorisation $\vectorisation{W} = [W_{1,1}, \ldots, W_{n,1}, W_{1,2}, \ldots, W_{n,2}, \ldots, W_{n,m}]$. Finally, $\powerset{\mathbb{R}}$ denotes the set of all closed intervals over $\mathbb{R}$.

\paragraph{\textbf{Classification models}} A classification model is a parametric model 
characterised by a set of equations over a parameter space $\Theta \subseteq \R^d$, for some 
$d >0$. We use $\model{\Theta}$ to denote the family of classifiers spanning $\Theta$ and 
$\model{\theta}$ to refer to a specific concretisation obtained for some $\theta \in \Theta$. The latter is typically obtained by training on a set of labelled inputs
. Then, for any unlabelled input $x \in \mathcal{X}$,  $\model{\theta}$ 
can be used to infer (predict) 
its label. 

\begin{example}
    Consider an input $x \in \mathcal{X}$ and assume $\model{\Theta}$ implements a \emph{logistic regression classifier}, characterised by the following equation: 
    
    $$\model{\Theta}(x) = \sigma \left( \sum_{i=1}^{d-1} \theta_i x_i + \theta_d \right)$$
    
    where $\sigma$ is a logistic function defined as usual. Then, ${\theta} = [\theta_1, \ldots, \theta_d]$.
    \label{ex:log_reg}
\end{example}

\begin{example}
    Consider an input $x \in \mathcal{X}$ and assume $\model{\Theta}$ implements a \emph{fully connected neural network} with $k$ hidden layers, characterised by the following equations:
    \begin{itemize}
    \item $V^{(0)} = x${;}
    \item $V^{(i)} \!=\! \phi(W^{(i)} \cdot V^{(i-1)} + B^{(i)})$ for $i \in [k]$, where $W^{(i)}$ and $B^{(i)}$ are 
    weights and biases, respectively, associated with layer $i$, and $\phi$ is an activation function applied element-wise; 
    \item $\model{\Theta}(x) = \sigma(V^{(k+1)}) = \sigma(W^{(k+1)} \cdot V^{(k)} + B^{(k+1)})$, where $\sigma$ is a logistic function defined as usual. 
    \end{itemize}
    Then, ${\theta} = [\vectorisation{W^{(1)}} \smallquad \vectorisation{B^{(1)}} \ldots \vectorisation{W^{(k+1)}} \smallquad \vectorisation{B^{(k+1)}}]$.  \\ 
    \label{ex:nn}
\end{example}

We now define the classification outcome of 
$\model{\theta}$; while we focus on binary classification tasks for legibility, i.e. with 
label $c \in \{0,1\}$, the ensuing definitions can be generalised to any classification setting.

\begin{definition} Given input $x  \in \mathcal{X}$ and model $\model{{\theta}}$, we say that $\model{{\theta}}$ \emph{classifies $x$ as $1$} if $\model{{\theta}}(x) \geq 0.5$, and otherwise $x$ is classified as $0$.
\label{def:classification}
\end{definition}

In the following, we abuse notation and use $\model{{\theta}}(x) = 1$ (respectively $\model{{\theta}}(x) = 0$) to denote when $x$ is classified as $1$ (respectively $0$). \JJ{Note that classes $0$ and $1$ are often assumed {to be} the undesirable and the desirable classes for the user \cite{upadhyay2021towards,mohammadi2021scaling,pmlr-v162-dutta22a}.}

\paragraph{\textbf{Counterfactual explanations}} Consider a classification model $\model{{\theta}}$ trained to solve a binary classification problem. Assume an input $x$ is given for which 
$\model{{\theta}}(x) =  0$. Intuitively, a counterfactual explanation is a new input $x'$ which is somehow similar to $x$, e.g. in terms of some specified distance between features values, and for which $\model{{\theta}}(x') = 1$. Formally, existing methods in the literature may be understood to compute counterfactuals as follows. 

\begin{definition}\label{def:cfx} 
Consider an input $x \in \mathcal{X}$ and a (binary) classification model $\model{{\theta}}$ s.t. $\model{{\theta}}(x) = 0$. Given a distance metric $d: \mathcal{X} \times \mathcal{X} \rightarrow \mathbb{R}^+$, a \emph{counterfactual explanation} is any $x'$ such that:
\begin{subequations}
\begin{alignat}{2}
&\argmin_{x'\in \mathcal{X}}  && d(x,x')\label{eqn:obj} \\
&\text{subject to} && \quad \model{{\theta}}(x') = 1\label{eqn:eq-1}
\end{alignat}
\label{eqn:all-lines}
\end{subequations}
\end{definition}

A counterfactual explanation thus corresponds to the closest input $x'$ (Eq.~\ref{eqn:obj}) belonging to the original input space that makes the classification flip (Eq.~\ref{eqn:eq-1}). 
Eq.~\ref{eqn:obj} and ~\ref{eqn:eq-1} are typically referred to as \emph{proximity} and \emph{validity} properties, respectively, of counterfactual explanations. A common choice for the distance metric $d$ is the normalised $L_1$ distance~\cite{Wachter_17} \JJ{for sparse changes in the CEs}, which we also adopt here. Under this choice of $d$, we note that when $\model{{\theta}}$ is a piece-wise linear model, an exact solution {to} Eqs.~\ref{eqn:all-lines}a-b can be computed with MILP -- see, e.g.,~\cite{mohammadi2021scaling}. 

\begin{example}
    \label{ex:cfx}
    (Continuing from Example \ref{ex:log_reg}.) 
    Assume a logistic regression classification model $\model{{\theta}}(x) = \sigma(-x_1 + x_2)$ for any input $x = [x_1,x_2]$, where $\sigma$ is the standard sigmoid function. For a concrete input $x = [0.7, 0.5]$, we have
    $\model{{\theta}}(x) = 0$. A possible counterfactual explanation for $x$ may be $x' = [0.7,0.7]$, for which $\model{{\theta}}(x') = 1$.
\end{example}

\begin{example}
    \label{ex:ffnn}
    (Continuing from Example \ref{ex:nn}.) 
     Consider the fully-connected feed-forward neural network $\model{{\theta}}$ below, where weights are as indicated in the diagram, biases are zero. Hidden layers use ReLU activations, whereas the output node uses a sigmoid function.
%
%
        %
	The network receives a two-dimensional input $x = [x_1,x_2]$ and produces output $\model{{\theta}}(x)$. 
	
	\begin{figure}[ht]
		\centering
		\scalebox{1}{

  
  
  
  
  
  
  
  



  

\begin{tikzpicture}[scale=0.8, every node/.style={scale=0.7}]

  \node[] (phantom_1) at (-1.5,0) {\Large$x_1$};
  \node[] (phantom_2) at (-1.5,-2) {\Large$x_2$};
  
  \node[circle,draw=black, minimum width=1cm] (input_1) at (0,0) {};
  \node[circle,draw=black, minimum width=1cm] (input_2) at (0,-2) {};
  
  \node[circle,draw=black, minimum width=1cm] (hidden_1) at (3,0) {};
  \begin{scope}[xshift=3cm,scale=0.7]
        \relua
    \end{scope}
  
  \node[circle,draw=black, minimum width=1cm] (hidden_2) at (3,-2) {};
  \begin{scope}[xshift=3cm, yshift=-2cm,scale=0.7]
        \relua
    \end{scope}
  
    \node[circle,draw=black, minimum width=1cm] (output_1) at (5,-1) {};
    \begin{scope}[xshift=5cm,scale=0.65,yshift=-1.7cm]
        \sigmoida
    \end{scope}

  \node[] (phantom_3) at (6.5,-1) {};

  \draw[->] (phantom_1) -- (input_1);
  \draw[->] (phantom_2) -- (input_2);
  \draw[->] (output_1) -- (phantom_3);
  
  \draw[->] (input_1) edge node[above]{{$1$}} (hidden_1);
  \draw[->] (input_1) edge node[below, xshift=-0.4cm, yshift=-0.5cm]{$0$} 
  (hidden_2);
  
  \draw[->] (input_2) edge node[above, xshift=-0.4cm, yshift=0.5cm]{$0$} 
  (hidden_1);
  \draw[->] (input_2) edge node[below]{$1$} 
  (hidden_2);
  
  \draw[->] (hidden_1) edge node[above]{{$1$}} (output_1);

  \draw[->] (hidden_2) edge node[below, xshift=-0.15cm, yshift=-0.1cm]{$-1$}  (output_1); 
  
\end{tikzpicture}}
		\label{fig:original_net}
	\end{figure}

 The symbolic expressions for the output is $\model{\theta}(x) = \sigma(\max(0,x_1) - \max(0, x_2))$, where $\sigma$ denotes a sigmoid function with the usual meaning.  
	Given a concrete input $x=[1,2]$, we have $\model{{\theta}}(x) = 0$. A possible counterfactual explanation may be $x'= [2.1,2]$, for which $\model{{\theta}}(x') = 1$.
\end{example}

\section{Interval abstractions and robustness for binary classification}
\label{sec:inns}

In this section, we introduce a novel interval abstraction technique inspired by INNs to reason about the provable robustness guarantees of CEs, under the possibly infinite family of classification models obtained by applying a pre-defined set of plausible model changes, $\name$. We formalise the novel notion of $\Delta$-robustness, which, once satisfied, will ensure that the validity of CEs is not compromised by any model parameter change encoded in $\Delta$.

\subsection{Plausible model changes $\Delta$}

First, in this section, we formalise the type of model changes central to our method. We begin by defining a notion of distance between two concretisations of a parametric classifier.

\begin{definition}
Let $\model{\theta}$ and $\model{\theta'}$ be two concretisations of a parametric classification model $\model{\Theta}$. 
For $0 \leq p  \leq \infty$, the \emph{p-distance between $\model{\theta}$ and $\model{\theta'}$} is defined as:

$$\dist{p}{\model{\theta}}{\model{\theta'}} = \distance{\theta}{\theta'}{p}$$.
\label{def:distance_between_models}
\end{definition}

\begin{example}
    Consider two models $\model{\theta} = \sigma(-x_1 + x_2)$ and $\model{\theta'} = \sigma(0.8 \cdot x_1 + x_2)$. Assume $p=\infty$. Then, their p-distance is $\dist{p}{\model{\theta}}{\model{\theta'}} = \max_i \lvert \theta_i - \theta'_i\rvert = 1.8$.
    \label{ex:comp}
\end{example}

Intuitively $p$-distance compares $\model{\theta}$ and $\model{\theta'}$ in terms of their parameters and computes the distance between them as the $p$-norm of the difference of their parameter vectors. Using this notion, we next characterise a model shift as follows. 

\begin{definition}
Given $0 \leq p \leq \infty$, a \emph{model shift} is a function $\mshift$ mapping a classification model $\model{\theta}$ into another $\model{\theta'} = \mshift(\model{\theta})$ such that:   
\begin{itemize}
    \item $\model{\theta}$ and $\model{\theta'}$ are concretisations of the same parameterised family $\model{\Theta}$;
    \item $\dist{p}{\model{\theta}}{\model{\theta'}} > 0$.
\end{itemize}
\label{def:model_shift}
\end{definition}

Model shifts are typically observed in real-world applications when a model is regularly retrained to incorporate new data. 
In such cases, models are likely to see only small changes at each update. In the same spirit as~\cite{upadhyay2021towards}, we capture this 
as follows.

\begin{definition}
Given a classification model $\model{\theta}$, a threshold $\delta \in \mathbb{R}_{>0}$ and $0 \leq p \leq \infty$, the \emph{set of plausible model shifts} is defined as:

$$\Delta = \{ \mshift \mid \dist{p}{\model{\theta}}{\mshift(\model{\theta})} \leq \delta\}.$$ 
\label{def:set_of_plausible_shifts}
\end{definition}


\JJ{When a set of plausible model shifts $\Delta$ is applied on $\model{\theta}$, the resulting maximum change on each model parameter in the original model is conservatively upper-bounded:}

\begin{lemma}
\JJ{Consider a classification model $\model{\theta}$ and a set of plausible model shifts $\Delta$ with threshold $\delta$ and $0 \leq p \leq \infty$. Then, $\forall \model{\theta'} = \mshift(\model{\theta})$, $\mshift \in \Delta$, and $\forall \theta_i$, $\theta'_i$, $i\in [d]$, we have $\theta'_i \in [\theta_i - \delta, \theta_i + \delta]$.}
\label{lemma:bounds}
\end{lemma}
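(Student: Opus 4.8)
The plan is to unwind the definitions and reduce the claim to an elementary fact about $p$-norms. By Definition~\ref{def:set_of_plausible_shifts}, every $\mshift \in \Delta$ satisfies $\dist{p}{\model{\theta}}{\mshift(\model{\theta})} \leq \delta$, and by Definition~\ref{def:distance_between_models} this means exactly $\distance{\theta}{\theta'}{p} \leq \delta$, where $\theta' = \mshift(\theta)$. So the whole statement comes down to: if a vector $\theta - \theta' \in \R^d$ has $p$-norm at most $\delta$, then each of its components has absolute value at most $\delta$, i.e. $\lvert \theta_i - \theta'_i \rvert \leq \delta$ for all $i \in [d]$, which rearranges to $\theta'_i \in [\theta_i - \delta, \theta_i + \delta]$.

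First I would dispatch the case $p = \infty$, where $\distance{\theta}{\theta'}{\infty} = \max_{i \in [d]} \lvert \theta_i - \theta'_i \rvert \leq \delta$ immediately gives $\lvert \theta_i - \theta'_i \rvert \leq \delta$ for every $i$. Then for finite $p \geq 1$, I would use the standard monotonicity of $\ell_p$ norms, namely $\distance{\theta}{\theta'}{\infty} \leq \distance{\theta}{\theta'}{p}$: for any fixed index $j$, $\lvert \theta_j - \theta'_j \rvert^p \leq \sum_{i=1}^d \lvert \theta_i - \theta'_i \rvert^p$, and taking $p$-th roots of both sides (valid since $t \mapsto t^{1/p}$ is increasing on $\R_{\geq 0}$) yields $\lvert \theta_j - \theta'_j \rvert \leq \distance{\theta}{\theta'}{p} \leq \delta$. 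The remaining range $0 \leq p < 1$ only arises if the paper intends the quasi-norm $\sum_i \lvert \theta_i - \theta'_i \rvert^p$ (or the counting "norm" at $p=0$); in those cases each term $\lvert \theta_i - \theta'_i\rvert^p$ (resp. each nonzero coordinate) is still bounded by the total, and the same root-extraction argument (resp. a direct argument that a bound $\delta < 1$ on the count forces all coordinates to agree when $\delta<1$, and is vacuous otherwise) goes through; I would note this briefly rather than belabour it, since $p \in \{1,2,\infty\}$ are the cases of interest.

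The argument is genuinely routine, so there is no real obstacle in the mathematics; the only thing to be careful about is stating precisely which convention is used for $\distance{\cdot}{\cdot}{p}$ across the full range $0 \leq p \leq \infty$ so that the one-line norm-monotonicity step is unambiguous. I would therefore phrase the proof around the inequality $\lvert \theta_i - \theta'_i \rvert \leq \distance{\theta}{\theta'}{p}$ holding for each $i \in [d]$ and all admissible $p$, cite it as the elementary fact that the sup-norm is dominated by every $\ell_p$ norm, and then close by chaining it with the hypothesis $\distance{\theta}{\theta'}{p} \leq \delta$ to conclude $\theta'_i \in [\theta_i - \delta, \theta_i + \delta]$.
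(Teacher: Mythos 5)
Your proposal is correct and follows essentially the same route as the paper's proof: reduce via Definitions~\ref{def:set_of_plausible_shifts} and~\ref{def:distance_between_models} to a $p$-norm bound, then observe that each non-negative addend $\lvert\theta_i-\theta'_i\rvert^p$ is dominated by the total, and extract $p$-th roots. You are in fact slightly more careful than the paper in explicitly dispatching the $p=\infty$ and $0\leq p<1$ cases, which the paper's single displayed chain of inequalities glosses over.
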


\begin{proof}
    
%
%

combining Definition~\ref{def:set_of_plausible_shifts} with Definition~\ref{def:distance_between_models}, we obtain:

\begin{equation*}
    \left(\sum_{i=1}^{d}\lvert \theta_i - \theta'_i\rvert^p\right)^{\frac{1}{p}} \leq \delta
\end{equation*}

We raise both sides to the power of $p$:

\begin{equation*}
    \sum_{i=1}^{d}\left(\lvert \theta_i - \theta'_i\rvert^p\right) \leq \delta^p
\end{equation*}

where the inequality is preserved as both sides are always positive. We now observe this inequation bounds each addend from above, i.e.,

\begin{equation*}
 \lvert \theta_i - \theta'_i\rvert^p \leq \delta^p 
\end{equation*}


Solving the inequation for each addend we obtain $\theta'_i \in [\theta_i - \delta, \theta_i + \delta]$, which gives a conservative upper-bound on the maximum change that can be applied to each parameter in $\model{\theta'}$. 
\end{proof}

\subsection{Interval abstractions}


To guarantee robustness to the plausible model changes $\Delta$, \JJ{methods are needed to compactly} represent and reason about the \JJ{behaviour of a CE under the} potentially infinite family of models originated by applying each $\mshift \in \Delta$ to $\model{\theta}$. In the following, we introduce an abstraction framework that can be used to this end.

\begin{definition} Consider a classification model $\model{\theta}$ with $\theta = [\theta_1, \ldots, \theta_d]$. Given a set of plausible model shifts $\Delta$, we define the \emph{interval abstraction of $\model{\theta}$ under $\Delta$} as $\abst{\theta}{\Delta}: \mathcal{X} \rightarrow \powerset{\mathbb{R}}$ such that:
\begin{itemize}
\item $\model{\theta}$ and $\abst{\theta}{\Delta}$ \revision{have the same model architecture}; 
\item $\abst{\theta}{\Delta}$ is parameterised by an interval-valued vector $\bm{\theta} = [\bm{\theta}_1, \ldots, \bm{\theta}_d]$;
\item $\bm{\theta}_i$, for $i \in [d]$,  encodes the range of possible changes induced by the application of any $\mshift{} \in \Delta$ to $\model{\theta}$ such that $\bm{\theta}_i = [\theta_i - \delta, \theta_i + \delta]$, where $\delta$ is the maximum shift obtainable as per Definition~\ref{def:set_of_plausible_shifts}.
\end{itemize}
\label{def:interval_abstraction}
\end{definition}

\begin{lemma}
$\abst{\theta}{\Delta}$ over-approximates the set of models $\model{\theta'}$ that can be obtained from $\model{\theta}$ via $\Delta$. 
\label{lemma:over-approx}
\end{lemma}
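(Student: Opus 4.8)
The plan is to unpack what ``over-approximates'' should mean here and then reduce the claim to Lemma~\ref{lemma:bounds}. Concretely, $\abst{\theta}{\Delta}$ is an interval-parameterised model of the same architecture as $\model{\theta}$; evaluating it on an input $x$ yields an interval $\abst{\theta}{\Delta}(x) \in \powerset{\mathbb{R}}$ obtained by interval arithmetic through the network's equations. The statement ``$\abst{\theta}{\Delta}$ over-approximates the set of models $\model{\theta'}$ obtainable from $\model{\theta}$ via $\Delta$'' should be read as: for every $\mshift \in \Delta$, with $\model{\theta'} = \mshift(\model{\theta})$, and for every input $x \in \mathcal{X}$, we have $\model{\theta'}(x) \in \abst{\theta}{\Delta}(x)$. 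So the first step is to state this containment precisely as the proof obligation.

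Next I would fix an arbitrary $\mshift \in \Delta$ and set $\model{\theta'} = \mshift(\model{\theta})$. By Lemma~\ref{lemma:bounds}, each parameter satisfies $\theta'_i \in [\theta_i - \delta, \theta_i + \delta] = \bm{\theta}_i$, i.e. the concrete parameter vector $\theta'$ is a ``selection'' from the interval-valued vector $\bm{\theta}$ that parameterises $\abst{\theta}{\Delta}$. The remaining step is then a monotonicity/soundness property of interval arithmetic: since $\model{\theta'}$ and $\abst{\theta}{\Delta}$ share the same architecture, $\model{\theta'}(x)$ is computed by the same sequence of operations (affine combinations, activation functions applied element-wise, the final sigmoid) as those used to compute the interval $\abst{\theta}{\Delta}(x)$, only with concrete parameters in place of intervals. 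Because each such operation is inclusion-isotonic — the interval extension of $+$, $\times$, ReLU, $\sigma$, etc. contains the image of the corresponding concrete operation whenever the concrete arguments lie in the input intervals — an induction over the layers (for the neural network case) or a direct application (for logistic regression) gives $\model{\theta'}(x) \in \abst{\theta}{\Delta}(x)$. Since $\mshift$ and $x$ were arbitrary, this proves the lemma.

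I would organise the induction explicitly for the feed-forward case of Example~\ref{ex:nn}: the base case is $V^{(0)} = x$, which is a degenerate (point) interval and trivially contains the concrete input; the inductive step propagates an interval enclosure of $V^{(i-1)}$ through the affine map $W^{(i)} \cdot (\cdot) + B^{(i)}$ with interval weights $\bm{W}^{(i)}, \bm{B}^{(i)}$ and then through the monotone activation $\phi$, invoking the fundamental theorem of interval arithmetic at each node; the output layer is handled identically with the final $\sigma$. For logistic regression the argument collapses to a single affine expression followed by $\sigma$, so no induction is needed. I would remark that the argument only uses that every operation appearing in the model's defining equations admits an inclusion-isotonic interval extension, which holds for all the parametric models considered in the paper.

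The main obstacle is conceptual rather than technical: pinning down what ``over-approximates'' means, since $\abst{\theta}{\Delta}$ is an interval-valued function whereas the objects being over-approximated are a family of ordinary real-valued classifiers, so the containment has to be stated pointwise in $x$ and uniformly in $\mshift \in \Delta$. Once that is fixed, the proof is a routine combination of Lemma~\ref{lemma:bounds} with the soundness of interval arithmetic; the only mild subtlety worth flagging is that the enclosures produced by naive interval propagation are generally not tight (e.g. the dependency problem can inflate $\abst{\theta}{\Delta}(x)$ beyond the exact reachable output set), but tightness is not claimed — only soundness of the over-approximation — so this does not affect the proof.
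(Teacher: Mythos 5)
Your proposal is correct and follows essentially the same route as the paper: both reduce the claim to Lemma~\ref{lemma:bounds}, observing that each interval $\bm{\theta}_i = [\theta_i - \delta, \theta_i + \delta]$ contains every parameter value reachable under any $\mshift \in \Delta$, so the abstraction captures all shifted models by construction (and, conservatively, possibly more). The only difference is that you additionally spell out the passage from parameter containment to output containment via inclusion-isotonicity of interval arithmetic (with the layer-wise induction), a step the paper's proof treats as immediate by construction; this is a welcome elaboration rather than a different argument.
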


\begin{proof}
Lemma~\ref{lemma:over-approx} states that $\abst{\theta}{\Delta}$ captures all models $\model{\theta'}$ that can be obtained from $\model{\theta}$ applying a $\mshift\in\Delta$, and possibly more \JJ{models} that violate the plausibility constraint \JJ{(Definition~\ref{def:set_of_plausible_shifts}, the p-distances are upper-bounded by $\delta$).} The former can be seen by observing that each parameter $\theta_i$ in $\abst{\theta}{\Delta}$ is initialised in such a way to contain all possible parameterisations obtainable starting from the initial value $\theta_i$ and perturbing it up to $\pm \delta$ \JJ{(Lemma~\ref{lemma:bounds})}. As a result, $\abst{\theta}{\Delta}$ captures all shifted models by construction. \JJ{$\abst{\theta}{\Delta}$ also captures more concrete models for which the $p$-distance is greater than $\delta$, as the bounds used in the interval abstraction, $\bm{\theta}_i = [\theta_i - \delta, \theta_i + \delta]$, are conservative. }
\end{proof}

The following two examples demonstrate the interval abstraction in the context of different classification models.

\begin{example} 
Continuing from Example \ref{ex:cfx}. Consider the same model $\model{\theta}$ and assume a set of plausible model shifts $\Delta = \{ \mshift \mid \dist{\infty}{\model{\theta}}{\mshift(\model{\theta})} \leq 0.1\}$. The interval abstraction $\abst{\theta}{\Delta}$ is defined by the model equation $\abst{\theta}{\Delta}(x) = \sigma \left( {\bm{\theta}}_1 x_1 + {\bm{\theta}}_2 x_2 \right)$,
where $\bm{\theta}_1 = [-1 - 0.1,  -1 + 0.1]$ and $\bm{\theta}_2 = [1 - 0.1,  1 + 0.1]$. Then, $\bm{\theta} = [{\bm{\theta}}_1, {\bm{\theta}}_2]$.
\label{ex:interval_abst}
\end{example}

	\begin{figure}[h]
		\centering
		\scalebox{1}{\begin{tikzpicture}[scale=0.8, every node/.style={scale=0.7}]

  \node[] (phantom_1) at (-1.5,0) {\Large$x_1$};
  \node[] (phantom_2) at (-1.5,-2) {\Large$x_2$};
  
  \node[circle,draw=black, minimum width=1cm] (input_1) at (0,0) {};
  \node[circle,draw=black, minimum width=1cm] (input_2) at (0,-2) {};
  
  \node[circle,draw=black, minimum width=1cm] (hidden_1) at (3,0) {};
  \begin{scope}[xshift=3cm,scale=0.7]
        \relua
    \end{scope}
  
  \node[circle,draw=black, minimum width=1cm] (hidden_2) at (3,-2) {};
  \begin{scope}[xshift=3cm, yshift=-2cm,scale=0.7]
        \relua
    \end{scope}
  
    \node[circle,draw=black, minimum width=1cm] (output_1) at (5,-1) {};
    \begin{scope}[xshift=5cm,scale=0.65,yshift=-1.7cm]
        \sigmoida
    \end{scope}

  \node[] (phantom_3) at (6.5,-1) {};

  \draw[->] (phantom_1) -- (input_1);
  \draw[->] (phantom_2) -- (input_2);
  \draw[->] (output_1) -- (phantom_3);
  
  \draw[->] (input_1) edge node[above]{{$[0.95,1.05]$}} (hidden_1);
  
  \draw[->] (input_1) edge node[below, xshift=-2cm, yshift=0.1cm]{$[-0.05,0.05]$} 
  (hidden_2);
  
  \draw[->] (input_2) edge node[above, xshift=-2cm, yshift=0.cm]{$[-0.05,0.05]$} 
  (hidden_1);
  \draw[->] (input_2) edge node[below,yshift=-0.1cm]{$[0.95,1.05]$} 
  (hidden_2);
  
  \draw[->] (hidden_1) edge node[above, xshift=0.8cm, yshift=0.1cm]{{$[0.95,1.05]$}} (output_1);

  \draw[->] (hidden_2) edge node[below, xshift=1cm, yshift=-0.1cm]{$[-1.05,-.095]$}  (output_1); 
  
\end{tikzpicture}}
        \caption{Illustration of Example~\ref{ex:interval_ffnn}.}
        		\label{fig:interval_net}
	\end{figure}

\begin{example}
    \label{ex:interval_ffnn}
     Continuing from Example \ref{ex:ffnn}. Consider the same model $\model{\theta}$ and assume a set of plausible model shifts $\Delta = \{ \mshift \mid \dist{\infty}{\model{\theta}}{\mshift(\model{\theta})} \leq 0.05\}$. The resulting interval abstraction $\abst{\theta}{\Delta}$ is \FL{represented in Figure~\ref{fig:interval_net},} where the symbolic expression of the output interval is $\sigma([0.95,1.05] \cdot \max(0, [0.95,1.05] \cdot x_1 + [-0.05,0.05] \cdot x_2) + [-1.05, -0.95] \cdot \max(0,[-0.05,0.05] \cdot x_1 +[0.95,1.05] \cdot x_2 ))$.

\end{example}

\revision{Given an input, our interval abstraction maps it to an output interval, constructed by taking the lowest and the highest values from all the possible classification outcomes by any shifted model $\model{\theta'} \in \Delta$. This output interval therefore over-approximates the outputs obtainable from any shifted models in $\Delta$.} 
The classification semantics of the \revision{interval abstraction} thus departs from Definition~\ref{def:classification} and needs to be generalised to account for this new behaviour.

\begin{definition}
    Let $\abst{\theta}{\Delta}$ be the interval abstraction of a classification model $\model{\theta}$. Given an input $x \in \mathcal{X}$, let $[l,u]$ be the output interval obtained by applying $\abst{\theta}{\Delta}$ to $x$. We say that \emph{$\abst{\theta}{\Delta}$ classifies $x$ as $1$}, if $l \geq 0.5$\JJ{, $0$ if $u < 0.5$, and \emph{undefined} otherwise}.
    \label{def:classification_interval}
\end{definition}

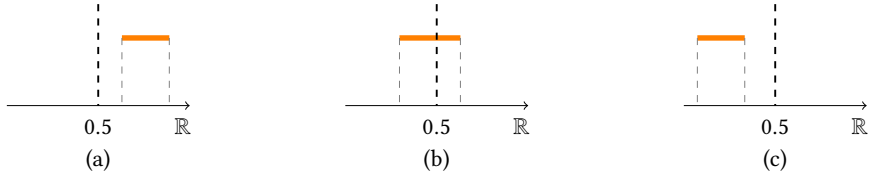
\begin{figure}
    \centering
    \scalebox{0.9}{\begin{tikzpicture}

 
  
 

  

  

\draw[line width=0.8mm, color=orange] (4,0) -- (4.7,0);
\draw[dashed,gray] (4,0) -- (4,-1);
\draw[dashed,gray] (4.7,0) -- (4.7,-1);

\draw[dashed,black, thick] (3.65,0.5) edge node[above,yshift=-1.3cm]{\color{black}$0.5$} (3.65,-1);

\draw[->] (2.3,-1) -- (5,-1);
\node[] (phantom_r1) at (4.9,-1.3) {$\R$};

\node[] (phantom_1) at (3.65,-1.8) {(a)};


\draw[line width=0.8mm, color=orange] (8.1,0) -- (9.0,0);
\draw[dashed,gray] (8.1,0) -- (8.1,-1);
\draw[dashed,gray] (9.,0) -- (9.,-1);

\draw[dashed,black, thick] (8.65,0.5) edge node[above,yshift=-1.3cm]{\color{black}$0.5$} (8.65,-1);

\draw[->] (7.3,-1) -- (10,-1);
\node[] (phantom_r2) at (9.9,-1.3) {$\R$};

\node[] (phantom_1) at (8.65,-1.8) {(b)};


\draw[dashed,black, thick] (13.65,0.5) edge node[above,yshift=-1.3cm]{\color{black}$0.5$} (13.65,-1);

\draw[line width=0.8mm,color=orange] (12.5,0) -- (13.2,0);
\draw[dashed,gray] (12.5,0) -- (12.5,-1);
\draw[dashed,gray] (13.2,0) -- (13.2,-1);

\draw[->] (12.3,-1) -- (15,-1);
\node[] (phantom_r3) at (14.9,-1.3) {$\R$};

\node[] (phantom_1) at (13.65,-1.8) {(c)};

\end{tikzpicture}}
    \caption{Visual representation of Definition~\ref{def:classification_interval}. In (a), $\abst{\theta}{\Delta}$ classifies an input as $1$ because the output range for that input is always greater than $0.5$. In (b), the output range includes value 0.5 therefore the classification result \JJ{is undefined}. In (c), {in a} similar {manner} to {the way in which the input in} (a) {is classified as $1$}, the input is classified as $0$.}
    
    \label{fig:inn_classfication}
\end{figure}

As in Section \ref{sec:background}, we will slightly abuse notation and use $\abst{\theta}{\Delta}(x) = 1$ (respectively, $\abst{\theta}{\Delta}(x) = 0$) to denote the case where the abstraction classifies an input as $1$ (respectively, $0$). A visual representation of this interval-based classification semantics is given in Figure~\ref{fig:inn_classfication}. In this work, we are interested in the conservative worst-case robustness of CEs. Therefore, we leave for future work (Section~\ref{sec:conclusion}) the explorations of the undefined case, as in Figure~\ref{fig:inn_classfication} (b).

\subsection{$\Delta$-robustness}

Using the interval abstraction as a tool to represent infinite families of shifted models, we now present $\Delta$-robustness, a notion of robustness to model changes which is central to this contribution. First, we define the conditions within which the robustness of a counterfactual can be assessed by introducing a notion of soundness as follows.

\begin{definition}
    Consider an input $x \in \mathcal{X}$ and a model $\model{\theta}$ such that $\model{\theta}{(x)} = 0$. Let $\abst{\theta}{\Delta}$ be the interval abstraction of $\model{\theta}$ for a set of plausible model shifts $\Delta$. We say that \emph{$\Delta$ is sound for $x$} iff $\abst{\theta}{\Delta}(x) = 0$.
             \label{def:soundness_interval}
    \end{definition}

In other words, soundness requires that shifts in $\Delta$ do not alter the class predicted for the original input $x$. This is a safety requirement that we introduce to ensure consistency in the predictions produced by the interval abstraction. 

We then can reason about the robustness properties defined as follows.

\begin{definition}
    Consider an input $x \in \mathcal{X}$ and a model $\model{\theta}$ such that $\model{\theta}{(x)} = 0$. Let $\abst{\theta}{\Delta}$ be the interval abstraction of $\model{\theta}$ for a set of plausible model shifts $\Delta$.  We say that a counterfactual explanation $x'$ is:  
 
        \begin{itemize}
        \item \emph{$\name$-robust} iff $\abst{\theta}{\Delta}(x') = 1$ and 
        
        \item \emph{strictly $\name$-robust} iff it is $\name$-robust \& 
        $\Delta$ is sound for $x$.
        \end{itemize}
\label{def:delta_robustness}
\end{definition}

The soundness of $\Delta$ ensures the theoretical correctness of $\Delta$-robustness. We point out that, in practice, this requirement may be relaxed such that 
for any input $x$ labelled as class 0 by the original model, one can find a CE $x'$ that is classified as class 1 by all the model shifts in $\Delta$. In this case, strictly speaking, $x'$ cannot be said to be \emph{valid} for $\Delta$ because it is possible that for some $\mathcal{M}_i$ induced by $\Delta$, $\mathcal{M}_i(x)=\mathcal{M}_i(x')=1$. However, in many applications soundness with respect to $\Delta$ is not required (as are the setups in e.g. \citep{upadhyay2021towards,pmlr-v162-dutta22a}), in which case \FL{computing $\Delta$-robust CEs may be more practical than 
{computing those which are} strictly $\Delta$-robust}. 

We conclude with an example summarising the main concepts presented in this section.

\begin{example}    \label{ex:rob}
    (Continuing from Examples~\ref{ex:cfx} and~\ref{ex:interval_abst}). We observe that $\abst{\theta}{\Delta}(x)$ produces an output interval $[0.42,0.48]$, i.e. $\abst{\theta}{\Delta} (x) = 0$. Since $\model{\theta}(x) = 0$, we can conclude that the set of model shifts $\Delta$ is sound. 
    We then check whether the previously computed counterfactual explanation $x' = [0.7,0.7]$ is $\name$-robust. Running $x'$ through $\abst{\theta}{\Delta}$ we obtain an output interval $[0.45,0.53]$, i.e. $\abst{\theta}{\Delta}(x) = 0$. Therefore, the counterfactual is not $\name$-robust. Assume a new counterfactual is obtained as $x'' = [0.7, 0.86]$. The output interval produced by $\abst{\theta}{\Delta}$ is now $[0.5, 0.58]$, i.e. $\abst{\theta}{\Delta}(x'') = 1$. Thus, the counterfactual $x''$ is strictly $\name$-robust. 
\end{example}

\section{Interval abstractions and robustness for multi-class classification}
\label{sec:interval_abs_multi_class} 

Next, we introduce the notion of $\Delta$-robustness in multi-class classification settings. 
Unlike the binary case, where it is usually assumed that class $0$ (unwanted class) is the original prediction result for the input and class $1$ is the desirable target class for the CE to achieve, in the multi-class settings, the classes need to be explicitly specified \cite{dandl2020multi,mothilal2020explaining}. Given a set of labels $\{1,\ldots,\ell\}$, we generalise Definitions~\ref{def:classification} and \ref{def:cfx} as follows.

\begin{definition} Given input $x \in \mathcal{X}$ and model $\model{{\theta}}$, we say that $\model{{\theta}}$ \emph{classifies $x$ as $c \in \{1, \ldots,\ell\}$} if $\model{{\theta}}(x)_c \geq \model{{\theta}}(x)_{c'}$, for all $c' \in \{1, \ldots,\ell\}$ such that $c' \neq c$.
\label{def:multi_classification}
\end{definition}

\begin{definition}\label{def:cfx_multiclass} 
Consider an input $x \in \mathcal{X}$ and a classification model $\model{{\theta}}$ s.t. $\model{{\theta}}(x) = c \in \{1, \ldots,\ell\}$. \JJ{Assume} a desirable target class $c' \in \{1, \ldots,\ell\}$ such that $c' \neq c$ and given a distance metric $d: \mathcal{X} \times \mathcal{X} \rightarrow \mathbb{R}^+$, a \emph{counterfactual explanation} for class $c'$ is any $x'$ such that:
\begin{subequations}
\begin{alignat*}{2}
&\argmin_{x'\in \mathcal{X}}  && d(x,x')\\
&\text{subject to} && \quad \model{{\theta}}(x') = c'
\end{alignat*}
\label{eqn:all-lines-multiclass}
\end{subequations}
\end{definition}

Example~\ref{ex:ffnn_multi} illustrates \FL{Definition~\ref{def:cfx_multiclass} on a} toy example where the classifier is a neural network. 

\begin{example}
    \label{ex:ffnn_multi}
     Consider the fully-connected feed-forward neural network $\model{{\theta}}$ below, where weights are as indicated in the diagram, biases are zero. Hidden layers use ReLU activations, whereas the output layer uses a softmax function. Output classes are $\{c_1, c_2, c_3\}$.

	\begin{figure}[h!]
		\centering
		\scalebox{1}{\begin{tikzpicture}[scale=0.8, every node/.style={scale=0.7}]

  \node[] (phantom_1) at (-1.5,0) {\Large$x_1$};
  \node[] (phantom_2) at (-1.5,-2) {\Large$x_2$};
  
  \node[circle,draw=black, minimum width=1cm] (input_1) at (0,0) {};
  \node[circle,draw=black, minimum width=1cm] (input_2) at (0,-2) {};
  
  \node[circle,draw=black, minimum width=1cm] (hidden_1) at (3,0) {};
  \begin{scope}[xshift=3cm,scale=0.7]
        \relua
    \end{scope}
  
  \node[circle,draw=black, minimum width=1cm] (hidden_2) at (3,-2) {};
  \begin{scope}[xshift=3cm, yshift=-2cm,scale=0.7]
        \relua
    \end{scope}
  
    \node[circle,draw=black, minimum width=1cm] (output_1) at (5,1) {};
    \begin{scope}[xshift=5cm,scale=0.65,yshift=1.4cm]
        \sigmoida
    \end{scope}

    \node[circle,draw=black, minimum width=1cm] (output_2) at (5,-1) {};
    \begin{scope}[xshift=5cm,scale=0.65,yshift=-1.7cm]
        \sigmoida
    \end{scope}

    \node[circle,draw=black, minimum width=1cm] (output_3) at (5,-3) {};
    \begin{scope}[xshift=5cm,scale=0.65,yshift=-4.8cm]
        \sigmoida
    \end{scope}

    \node[] (phantom_3) at (6.5,1) {\Large$c_1$};
    \node[] (phantom_4) at (6.5,-1) {\Large$c_2$};
    \node[] (phantom_5) at (6.5,-3) {\Large$c_3$};

  \draw[->] (phantom_1) -- (input_1);
  \draw[->] (phantom_2) -- (input_2);
  \draw[->] (output_1) -- (phantom_3);
  \draw[->] (output_2) -- (phantom_4);
  \draw[->] (output_3) -- (phantom_5);
  
  \draw[->] (input_1) edge node[above]{{$1$}} (hidden_1);
  \draw[->] (input_1) edge node[below, xshift=-0.4cm, yshift=-0.5cm]{$0$} 
  (hidden_2);
  
  \draw[->] (input_2) edge node[above, xshift=-0.4cm, yshift=0.5cm]{$0$} 
  (hidden_1);
  \draw[->] (input_2) edge node[below]{$1$} 
  (hidden_2);
  
  \draw[->] (hidden_1) edge node[above]{{$1$}} (output_1);
  \draw[->] (hidden_1) edge node[above, xshift=0.7cm, yshift=-0.2cm]{{$0$}} (output_2);
  \draw[->] (hidden_1) edge node[below, xshift=1cm, yshift=-0.6cm]{{$-1$}} (output_3);

    \draw[->] (hidden_2) edge node[below, xshift=1cm, yshift=1.2cm]{$-1$}  (output_1); 
    \draw[->] (hidden_2) edge node[above, xshift=0.8cm, yshift=-0.5cm]{$0.5$}  (output_2); 
    \draw[->] (hidden_2) edge node[below, xshift=-0.15cm, yshift=-0.1cm]{$1$}  (output_3); 
  
\end{tikzpicture}}
		\label{fig:interval_net_multiclass}
	\end{figure}
     The symbolic expressions for the output is $\model{\theta}(x) = Softmax([\max(0,x_1) - \max(0, x_2), 0.5\max(0, x_2), \max(0,x_2) - \max(0, x_1)])$.  
	Given a concrete input $x=[2,2]$, we have $\model{{\theta}}(x) = c_2$. A possible counterfactual explanation for class $c_1$ may be $x'= [3,1]$, for which $\model{{\theta}}(x') = c_1$.
 
\end{example}

Then, the classification semantics of an interval abstraction for multiclass problems can be obtained by extending Definition~\ref{def:classification_interval} as follows.

\begin{definition}
    Let $\abst{\theta}{\Delta}$ be the interval abstraction of a classification model $\model{\theta}$. Given an input $x \in \mathcal{X}$, let $[l_{i},u_{i}]$ be the output interval obtained for each $i \in \{1,\ldots,\ell\}$ by applying $\abst{\theta}{\Delta}$ to $x$. We say that \emph{$\abst{\theta}{\Delta}$ classifies $x$ as class $c$}, \JJ{if 
    $l_c \geq u_{c'}$ for all $c' \in \{1, \ldots,\ell\} \setminus \{c\}$. Otherwise we say the classification result from $\abst{\theta}{\Delta}$ is \emph{undefined} if $\forall c' \in \{1, \ldots,\ell\}$, $\exists c'' \in \{1, \ldots,\ell\}\setminus \{c'\}$ such that $u_{c''} \geq l_{c'}$.}
    \label{def:multi_classification_interval}
\end{definition}



Figure~\ref{fig:multi_inn_classfication} intuitively illustrates the classification semantics of interval abstractions in multi-class settings. Next, we formalise $\Delta$-robustness for CEs. 

\begin{figure}
    \centering
    \scalebox{0.9}{\begin{tikzpicture}

 
  
 

  

  

\draw[line width=0.8mm, color=orange] (4,0.5) edge node[above]{$o$} (4.7,0.5);
\draw[dashed] (4,0.5) -- (4,-1);
\draw[dashed] (4.7,0.5) -- (4.7,-1);

\draw[line width=0.8mm,color=blue] (2.5,-0.7) edge node[above]{$b$} (3.5,-0.7);
\draw[dashed] (2.5,-0.7) -- (2.5,-1);
\draw[dashed] (3.5,-0.7) -- (3.5,-1);

\draw[line width=0.8mm,color=green] (3,-0.1) edge node[above]{$g$} (3.80,-0.1);
\draw[dashed] (3,-0.1) -- (3,-1);
\draw[dashed] (3.8,-0.1) -- (3.8,-1);

\draw[->] (2.3,-1) -- (5,-1);
\node[] (phantom_r1) at (4.9,-1.3) {$\R$};

\node[] (phantom_1) at (3.65,-1.8) {(a)};


\draw[line width=0.8mm, color=orange] (8.7,0.5) edge node[above]{$o$} (9.7,0.5);
\draw[dashed] (8.7,0.5) -- (8.7,-1);
\draw[dashed] (9.7,0.5) -- (9.7,-1);

\draw[line width=0.8mm,color=blue] (7.5,-0.7) edge node[above, xshift=0.0cm]{$b$} (9,-0.7);
\draw[dashed] (7.5,-0.7) -- (7.5,-1);
\draw[dashed] (9,-0.7) -- (9,-1);

\draw[line width=0.8mm,color=green] (8,-0.1) edge node[above]{$g$} (8.5,-0.1);
\draw[dashed] (8,-0.1) -- (8,-1);
\draw[dashed] (8.5,-0.1) -- (8.5,-1);

\draw[->] (7.3,-1) -- (10,-1);
\node[] (phantom_r2) at (9.9,-1.3) {$\R$};

\node[] (phantom_1) at (8.65,-1.8) {(b)};


\draw[line width=0.8mm, color=blue] (14.2,-0.7) edge node[above]{$b$} (14.7,-0.7);
\draw[dashed] (14.2,-0.7) -- (14.2,-1);
\draw[dashed] (14.7,-0.7) -- (14.7,-1);

\draw[line width=0.8mm,color=orange] (13,0.5) edge node[above]{$o$} (13.6,0.5);
\draw[dashed] (13,0.5) -- (13,-1);
\draw[dashed] (13.6,0.5) -- (13.6,-1);

\draw[line width=0.8mm,color=green] (12.5,-0.1) edge node[above]{$g$} (13.3,-0.1);
\draw[dashed] (12.5,-0.1) -- (12.5,-1);
\draw[dashed] (13.3,-0.1) -- (13.3,-1);

\draw[->] (12.3,-1) -- (15,-1);
\node[] (phantom_r3) at (14.9,-1.3) {$\R$};

\node[] (phantom_1) at (13.65,-1.8) {(c)};

\end{tikzpicture}}
    \caption{Visual representation of Definition~\ref{def:multi_classification_interval} instantiated with three colour-coded classes \{`\underline{o}range', `\underline{g}reen', `\underline{b}lue'\}. In (a), $\abst{\theta}{\Delta}$ classifies an input as $o$ since the output range for that class is always greater than those of any other classes. In (b), the output ranges for $o$ overlap with $b$, the classification result \JJ{is therefore undefined}. In (c), {in a} similar {manner} to {the way in which the input in} (a) {is classified as $o$}, $\abst{\theta}{\Delta}$ classifies an input as $b$.}
    \label{fig:multi_inn_classfication}
\end{figure}

\begin{definition}
    Consider an input $x \in \mathcal{X}$ and a model $\model{\theta}$ such that $\model{\theta}{(x)} = c \in \{1,\ldots,\ell\}$. Let $\abst{\theta}{\Delta}$ be the interval abstraction of $\model{\theta}$ for a set of plausible model shifts $\Delta$. We say that \emph{$\Delta$ is sound for $x$} iff $\abst{\theta}{\Delta}(x) = c$.
             \label{def:soundness_interval_multiclass}
    \end{definition}

\begin{definition}
Consider an input $x \in \mathcal{X}$ and a model $\model{\theta}$ such that $\model{\theta}{(x)} = c \in \{1,\ldots,\ell\}$. Let $\abst{\theta}{\Delta}$ be the interval abstraction of $\model{\theta}$ for a set of plausible model shifts $\Delta$. \JJ{Assume} a desirable target class $c' \in \{1, \ldots,\ell\}$ such that $c' \neq c$, we say that a counterfactual explanation $x'$ is:  
 
        \begin{itemize}
        \item \emph{$\name$-robust} for class $c'$ iff $\abst{\theta}{\Delta}(x') = c'$ and 
        
        \item \emph{strictly $\name$-robust} for class $c'$ iff it is $\name$-robust \& 
        $\Delta$ is sound for $x$.
        \end{itemize}
\label{def:delta_robustness_multiclass}
\end{definition}

The definition of $\name$-robustness transfers to the multi-class semantics \JJ{by estimating lower and upper bounds for each output class and then checking the robustness property}. In Example~\ref{ex:interval_ffnn_multi}, we instantiate an interval abstraction for a neural network and show how the output intervals can be calculated, thus how $\Delta$-robustness can be examined. We introduce a more general solution for testing $\Delta$-robustness in the next section.

\begin{example}
    \label{ex:interval_ffnn_multi}
     Continuing from Example \ref{ex:ffnn_multi}. Consider the same model $\model{\theta}$ and assume a set of plausible model shifts $\Delta = \{ \mshift \mid \dist{\infty}{\model{\theta}}{\mshift(\model{\theta})} \leq 0.05\}$. The resulting interval abstraction $\abst{\theta}{\Delta}$ is defined \FL{as in Figure~\ref{fig:net_multiclass}}.

	\begin{figure}[ht]
		\centering
		\scalebox{1}{\begin{tikzpicture}[scale=0.8, every node/.style={scale=0.7}]

  \node[] (phantom_1) at (-1.5,0) {\Large$x_1$};
  \node[] (phantom_2) at (-1.5,-2) {\Large$x_2$};
  
  \node[circle,draw=black, minimum width=1cm] (input_1) at (0,0) {};
  \node[circle,draw=black, minimum width=1cm] (input_2) at (0,-2) {};
  
  \node[circle,draw=black, minimum width=1cm] (hidden_1) at (3,0) {};
  \begin{scope}[xshift=3cm,scale=0.7]
        \relua
    \end{scope}
  
  \node[circle,draw=black, minimum width=1cm] (hidden_2) at (3,-2) {};
  \begin{scope}[xshift=3cm, yshift=-2cm,scale=0.7]
        \relua
    \end{scope}
  
    \node[circle,draw=black, minimum width=1cm] (output_1) at (5,1) {};
    \begin{scope}[xshift=5cm,scale=0.65,yshift=1.4cm]
        \sigmoida
    \end{scope}

    \node[circle,draw=black, minimum width=1cm] (output_2) at (5,-1) {};
    \begin{scope}[xshift=5cm,scale=0.65,yshift=-1.7cm]
        \sigmoida
    \end{scope}

    \node[circle,draw=black, minimum width=1cm] (output_3) at (5,-3) {};
    \begin{scope}[xshift=5cm,scale=0.65,yshift=-4.8cm]
        \sigmoida
    \end{scope}

    \node[] (phantom_3) at (6.5,1) {\Large$c_1$};
    \node[] (phantom_4) at (6.5,-1) {\Large$c_2$};
    \node[] (phantom_5) at (6.5,-3) {\Large$c_3$};

  \draw[->] (phantom_1) -- (input_1);
  \draw[->] (phantom_2) -- (input_2);
  \draw[->] (output_1) -- (phantom_3);
  \draw[->] (output_2) -- (phantom_4);
  \draw[->] (output_3) -- (phantom_5);
  
  \draw[->] (input_1) edge node[above]{{$[0.95,1.05]$}} (hidden_1);
  
  \draw[->] (input_1) edge node[below, xshift=-2cm, yshift=0.1cm]{$[-0.05,0.05]$} 
  (hidden_2);
  
  \draw[->] (input_2) edge node[above, xshift=-2cm, yshift=0.cm]{$[-0.05,0.05]$} 
  (hidden_1);
  \draw[->] (input_2) edge node[below,yshift=-0.1cm]{$[0.95,1.05]$} 
  (hidden_2);
  
  \draw[->] (hidden_1) edge node[above, xshift=-0.5cm, yshift=0.3cm]{{$[0.95,1.05]$}} (output_1);

    \draw[->] (hidden_2) edge node[above, xshift=-0.5cm, yshift=-1cm]{{$[0.95,1.05]$}} (output_3);

    \draw[->] (hidden_1) edge node[above, xshift=2cm, yshift=-0.2cm]{{$[-0.05,0.05]$}} (output_2);
  
    \draw[->] (hidden_2) edge node[above, xshift=1.85cm, yshift=-0.6cm]{{$[0.45,0.55]$}} (output_2);
    
  \draw[->] (hidden_2) edge node[below, xshift=2.2cm, yshift=1.3cm]{$[-1.05,-0.95]$}  (output_1); 

  \draw[->] (hidden_1) edge node[below, xshift=2.2cm, yshift=-0.6cm]{$[-1.05,-0.95]$}  (output_3); 
  
\end{tikzpicture}}
    \caption{Illustration for Example~\ref{ex:interval_ffnn_multi}}
		\label{fig:net_multiclass}
	\end{figure}

    The symbolic expression of the output interval is obtained by applying softmax activation on a vector of intervals $[\model{\theta}(x)_{c_1}, \model{\theta}(x)_{c_2}, \model{\theta}(x)_{c_3}]$ where each entry is the pre-softmax output interval for output classes \{$c_1, c_2, c_3$\}, then sorting the corresponding class interval for each class. We denote the node interval for the hidden nodes as $\model{\theta}(x)_{h_1}=\max(0, [0.95,1.05] \cdot x_1 + [-0.05,0.05] \cdot x_2)$ and $\model{\theta}(x)_{h_2}=\max(0, [-0.05,0.05] \cdot x_1 + [0.95,1.05] \cdot x_2)$. Then, the output node intervals can be expressed as $\model{\theta}(x)_{c_1} = [0.95,1.05] \cdot \model{\theta}(x)_{h_1} + [-1.05, -0.95]\cdot \model{\theta}(x)_{h_2},  \text{ }
    \model{\theta}(x)_{c_2} = [-0.05,0.05] \cdot \model{\theta}(x)_{h_1} + [0.45, 0.55]\cdot \model{\theta}(x)_{h_2}, \text{ }
    \model{\theta}(x)_{c_3} = [-1.05, -0.95] \cdot \model{\theta}(x)_{h_1} + [0.95, 1.05] \cdot \model{\theta}(x)_{h_2}$.

    
    For the input $x=[2,2]$, we observe that $\abst{\theta}{\Delta}(x)$ produces 
    output intervals 
    $[0.114,0.322], [0.356, 0.51], [0.114, 0.322]$ respectively for each class, therefore the lower bound of class output node $c_2$ $(0.356)$ is greater than the upper bound of the other two classes $(0.322)$, i.e. $\abst{\theta}{\Delta} (x) = c_2$. Since $\model{\theta}(x) = c_2$, we can conclude that the set of model shifts $\Delta$ is sound for $c_2$. 
    We then check whether the previously computed counterfactual explanation $x' = [3,1]$ is $\name$-robust. Running $x'$ through $\abst{\theta}{\Delta}$ we obtain output intervals 
    $[0.617, 0.91], [0.08, 0.345], [0.01, 0.038]$, i.e. $\abst{\theta}{\Delta}(x) = c_1$. Therefore, the counterfactual is $\name$-robust for class $c_1$. Furthermore, this CE is strictly $\Delta$-robust. 
\end{example}

\section{Computing $\name$-robustness with MILP}
\label{sec:milp_delta_rob}
To determine whether a CE $x'$ is $\Delta$-robust, we are interested in the output ranges of the interval abstraction $\abst{\theta}{\Delta}$ when $x'$ is passed in. \citet{PrabhakarA19} proposed an approach to compute the output ranges of an INN, which we adapt in this section to compute the reachable intervals for the output node in our interval abstraction $\abst{\theta}{\Delta}$. The output range estimation problem can be encoded in MILP\FL{; in the following,} we instantiate the encoding for fully connected neural networks with $k$ hidden layers (as first introduced in Example \ref{ex:nn}) with ReLU activation functions. The encoding introduces:
\begin{itemize}
    \item a real variable $v^{(0)}_{j}$ for $j \in [\card{V^{(0)}}]$, used to model the input of $\abst{\theta}{\Delta}$;
    \item a real variable $v^{(i)}_{j}$ to model the value of each hidden and output node $V^{(i)}$, for $i \in [k+1]$ and $j \in [\card{V^{(i)}}]$;
    \item a binary variable $\xi^{(i)}_{j}$ to model the activation state of each node in $V^{(i)}$, for $i \in [k]$ and $j \in [\card{V^{(i)}}]$.
\end{itemize}

Then, for each layer index $i \in [k]$ and neuron index $j \in [\card{V^{(i)}}]$, the following set of constraints are asserted:

\begin{equation}
\begin{alignedat}{2}
      & \revision{C^{(i)}_{j}}   &&= \Bigl\{ v^{(i)}_{j} \geq 0, v^{(i)}_{j} \leq M(1 - \xi^{(i)}_{j}), \\
      & && v^{(i)}_{j} \leq \sum_{l=1}^{\card{V^{(i-1)}}} (W^{(i)}_{j,l} + \delta) \revision{v^{(i-1)}_{l}} + (B^{(i)}_j + \delta) + M\xi^{(i)}_{j}, \\
      & && v^{(i)}_{j} \geq \sum_{l=1}^{\card{V^{(i-1)}}} (W^{(i)}_{j,l} - \delta) \revision{v^{(i-1)}_{l}} + (B^{(i)}_j - \delta) \Bigr\}
\end{alignedat}    
\label{eqn:node}
\end{equation}

where $M$ is a sufficiently large constant, and $\delta$ is the magnitude of model shifts in $\Delta$. Each $C^{(i)}_{j}$ uses the standard big-M formulation to encode the ReLU activation~\cite{LomuscioM17} and estimate the lower and upper bounds of nodes in the INN.

Then, constraints pertaining to the output layer $k+1$ are asserted for each class $j \in \card{V^{(k+1)}}$.

\begin{equation}
\begin{alignedat}{3}
      & \revision{C^{(k+1)}_{j}}   = \Bigl\{ &&\revision{v^{(k+1)}_{j}} &&\leq \sum_{l=1}^{\card{V^{(k)}}} (W^{(k+1)}_{j,l} + \delta) \revision{v^{(k)}_l} + (B^{(k+1)}_j + \delta), \\
      & && v^{(k+1)}_{j} &&\geq \sum_{l=1}^{\card{V^{(k)}}} (W^{(k+1)}_{j,l} - \delta) \revision{v^{(k)}_l} + (B^{(k+1)}_j \revision{- \delta})
      \Bigr\}
\end{alignedat}    
\label{eqn:output}
\end{equation}

For more details about the encoding and its properties, we refer to the original work~\cite{PrabhakarA19}. Note that, instead of directly computing the model output, $\model{\Theta}(x) = \sigma(V^{(k+1)})$, our MILP program analyses the interval of the final-layer node values before applying the final sigmoid or softmax function, e.g. $V^{(k+1)} = W^{(k+1)} \cdot V^{(k)} + B^{(k+1)}$. The model output interval can be subsequently obtained by applying these final activation functions.

For the case of binary classification (e.g. as in Example~\ref{ex:ffnn}), testing $\Delta$-robustness of a CE amounts to solving one optimisation problem as shown in the following. 

\begin{definition}
    Consider an input $x \in \mathcal{X}$ and a fully connected neural network $\model{\theta}$ with $k$ hidden layers, and $\model{\theta}{(x)} = 0$. Let $\abst{\theta}{\Delta}$ be the interval abstraction of $\model{\theta}$ for a set of plausible model shifts $\Delta$. Let ${v^{(k+1)}_1}_{lb}$ be the solution of the optimisation problem 
\begin{subequations}
\begin{alignat*}{2}
&\underset{v, \xi}{\text{min}}  && \quad{v^{(k+1)}_1} \\
&\text{subject to} &&\quad C^{(i)}_{j}, \text{  }i \in [k+1], j \in [\card{V^{(i)}}] 
\end{alignat*}
\label{eqn:delta_robustness_test}
\end{subequations}
We say that a counterfactual explanation $x'$ \JJ{is \emph{$\Delta$-robust}} if ${v^{(k+1)}_1}_{lb} \geq 0$.

\label{def:delta_robustness_test}
\end{definition}

Indeed, if for a CE $x'$ the lower bound of the output node satisfies ${v^{(k+1)}_1}_{lb} \geq 0$, then after the final sigmoid function, $\sigma({v^{(k+1)}_1}_{lb}) \geq 0.5$, meaning that $\abst{\theta}{\Delta}(x') = 1$, $x'$ is thus $\Delta$-robust. 

For multi-class classification (e.g. as in Example~\ref{ex:ffnn_multi}), the exact output range for each class $j \in \card{V^{(k+1)}}$ can be computed by solving two optimisation problems that minimise, respectively maximise, variable $v^{(k+1)}_{j}$ subject to constraints~\ref{eqn:node}-\ref{eqn:output}. Referring to Definitions~\ref{def:multi_classification_interval} and \ref{def:delta_robustness_multiclass}, we are interested in comparing the lower bound of the desirable target class output interval with the upper bound of output intervals from other classes. Therefore, testing $\Delta$-robustness in multi-class classification amounts to solving $\card{V^{(k+1)}}$ optimisation problems.

\begin{definition}
    Consider an input $x \in \mathcal{X}$ and a fully connected neural network $\model{\theta}$ with $k$ hidden layers with $\model{\theta}{(x)} = c \in \{1,\ldots,\ell\}$ and the desirable target class $c' \in \{1,\ldots,\ell\}$ such that $c' \neq c$. Let $\abst{\theta}{\Delta}$ be the interval abstraction of $\model{\theta}$ for a set of plausible model shifts $\Delta$. Let ${v^{(k+1)}_{c'}}_{lb}$ be the solution of the optimisation problem 
\begin{subequations}
\begin{alignat*}{2}
&\underset{v, \xi}{\text{min}}  && \quad{v^{(k+1)}_{c'}} \\
&\text{subject to} &&\quad C^{(i)}_{j}, \text{  }i \in [k+1], j \in [\card{V^{(i)}}] 
\end{alignat*}
\label{eqn:delta_robustness_test_multi_min_1}
\end{subequations}

For each class $c'' \in \{1,\ldots,\ell\}$ such that $c'' \neq c'$, let ${v^{(k+1)}_{c''}}_{ub}$ be the solution of the optimisation problems 
\begin{subequations}
\begin{alignat*}{2}
&\underset{v, \xi}{\text{max}}  && \quad{v^{(k+1)}_{c''}} \\
&\text{subject to} &&\quad C^{(i)}_{j}, \text{  }i \in [k+1], j \in [\card{V^{(i)}}] 
\end{alignat*}
\label{eqn:delta_robustness_test_multi_min_2}
\end{subequations}

We say that a counterfactual explanation $x'$ \JJ{is \emph{$\Delta$-robust} for class $c'$} if ${v^{(k+1)}_{c'}}_{lb} \geq {v^{(k+1)}_{c''}}_{ub}$.

\label{def:delta_robustness_test_multiclass}
\end{definition}

Similar to the binary case, if a CE passes the above $\Delta$-robustness test, then after applying the final softmax function, the predicted class probability of the desirable target class will always be greater than that of the other classes.

In practice, with white-box access to the classifier, the above optimisation problems can be conveniently encoded using any off-the-shelf optimisation solver. In this work we use the Gurobi engine\footnote{https://www.gurobi.com/solutions/gurobi-optimizer/}.

\section{Algorithms}
\label{sec:algorithms}
\JJ{We now show how the $\Delta$-robustness tests introduced above can be leveraged to} generate CEs with formal robustness guarantees. 

\subsection{Embedding $\Delta$-robustness tests in existing algorithms}


We propose an approach (Algorithm~\ref{alg:algo}) that can be applied on top of any CE generation algorithms which explicitly parameterise the tradeoff between validity and cost. For example, if the method is optimising a loss function containing a validity loss term and a cost loss term with a tradeoff hyperparameter (similar to \cite{Wachter_17}), then modifying the hyperparameter to allow more costly CE with better validity (higher class probability) could lead to more robust CEs. Such heuristics, identified as necessary conditions for more robust CEs, are discussed in several recent studies \cite{blackconsistent,pmlr-v162-dutta22a,DBLP:conf/icml/HammanNMMD23,DBLP:conf/icml/KrishnaML23}. 

The algorithm proceeds as follows. First, an interval abstraction is constructed for the classifier $\model{\theta}$ and targeted plausible model change $\Delta$; the latter is then optionally checked for soundness (Definition~\ref{def:delta_robustness}) \revision{using the same MILP programs for $\Delta$-robustness tests}, depending on whether we are aiming at finding strict or non-strict robust CEs. 
Then, the algorithm performs a $\Delta$-robustness test for the CE generated by the base method. If the test passes, then the algorithm terminates and returns the solution. Otherwise, the search continues iteratively, at each step the hyperparameter of the base method is modified such that CEs of increasing {distance} can be found. 
These steps are repeated until a 
{threshold} number of iterations $t$ is reached. As a result, the algorithm is incomplete, \FL{as} it may report that no $\name$-robust CE can be found within $t$ steps (while one may exist for larger $t$). In such cases, the last CE found before the algorithm terminates is returned\JJ{, assuming the CE found at each iteration is more robust than the previous iteration.} 
\JJ{
As we will see in Section \ref{sec:experiments}, we have identified a configuration of our iterative algorithm with a MILP-based method \cite{mohammadi2021scaling} as the base CE generation method, which empirically overcomes the above limitations and is always able to find $\Delta$-robust CEs.}


\begin{algorithm}[t]
\caption{Iterative algorithm embedding $\Delta$-robustness tests}\label{alg:algo}
\begin{algorithmic}[1]
\Require Classifier $\model{\theta}$, 
input $x$ such that $\model{\theta}(x) = 0$,
\State $\qquad\quad$ set of plausible model shifts $\Delta$ and threshold $t$
\State \textbf{Step 1}: build \FL{encoding for} interval abstraction $\abst{\model}{\Delta}$.
\State \textbf{Step 2} (Optional): check the soundness of $\Delta$
\While{iteration number $< t$}
    \State \textbf{Step 3}: compute CE $x'$ for $x$ and $\model{\theta}$ using base method
    \If{$\abst{\theta}{\Delta}(x') = 1$}
        \State \textbf{return} $x'$
    \Else
        \State \textbf{Step 4}: 
         {increase allowed distance} of next CFX 
        \State \textbf{Step 5}: increase iteration number
    \EndIf
\EndWhile
\State \textbf{return} the last $x'$ found
\end{algorithmic}
\end{algorithm}

\subsection{Robust Nearest-neighbour Counterfactual Explanations (RNCE)}
\label{ssec:rnce}

We also propose a robust and plausible CE algorithm shown in Algorithm~\ref{alg:algo1}, which is complete under mild assumptions. After some initialisation steps, an interval abstraction is constructed for the model $\model{\theta}$ and set $\name$ in Step 1 (Alg.~\ref{alg:algo1}, 6). \revision{Similarly to Algorithm~\ref{alg:algo}, the checking for soundness of $\Delta$ step is optional.} The algorithm then moves on to Step 2 
where the dataset $\dataset$ used to train $\model{\theta}$ is traversed to identify potential CEs for $x$ and filter out unsuitable inputs as described in Algorithm~\ref{alg:algo2}. 
In a nutshell, 
Algorithm{~\ref{alg:algo2}} iterates through $\dataset$ and picks instances that satisfy the counterfactual requirement, parameterised according to a robustness criterion specified by the user via the \texttt{robustInit} parameter. When \texttt{robustInit} is 
\texttt{T} (\texttt{True}), the interval abstraction is used to check, for every instance in the dataset, whether it satisfies $\name$-robustness (Definition~\ref{def:delta_robustness}) prior to adding it to $\mathcal{S}$ (Alg.~\ref{alg:algo2}, 9). 
Alternatively, when setting \texttt{robustInit} to 
\texttt{F} (\texttt{False}), instances are added to the set of candidate CEs $\mathcal{S}$ as long as their {predicted} label differs from that of $x$ (Alg.~\ref{alg:algo2}, 8). In the latter case, the $\name$-robustness guarantees are postponed to Step 4.

Once the set $\candidates$ of potential candidates is obtained from $\dataset$ (Alg.~\ref{alg:algo1}, 8), we fit a k-d tree 
$\tree$ to efficiently store spatial information about them {(Step 3)}. The algorithm then enters its final step, where the best CE is selected from $\candidates$ as described in Algorithm~\ref{alg:algo3} and returned to the user (Alg.~\ref{alg:algo1}, 10). First, $\tree$ is queried to obtain the closest robust
 {NN}CE $x'_{nn}$. When Algorithm~\ref{alg:algo2} is instantiated with \texttt{robustInit} 
$= \texttt{T}$, the first nearest neighbour returned from $\tree$ is guaranteed to be robust. Otherwise, several queries might be needed to obtain an instance from $\tree$ that also satisfies $\name$-robustness (Alg.~\ref{alg:algo3}, 4-6).\footnote{For clarity, {Algorithm~\ref{alg:algo3}} describes the latter case; the robustness check in omitted in the code when \texttt{robustInit} 
is $\texttt{T}$.}
 {Thus, instantiations of different \texttt{robustInit}} settings help to find the nearest point satisfying $\name$-robustness. 
As the nearest neighbour may not be optimal in terms of proximity to the original instance $x$, further optimisation steps can be triggered by setting the parameter \texttt{optimal} to 
$\texttt{T}$. This starts a line search to find the closest robust CE to $x$ (Alg.~\ref{alg:algo3}, 7-10). The CE computed by Algorithm~\ref{alg:algo3} is then returned to the user. 

We stress that when \texttt{robustInit}$=$\texttt{T} and \texttt{optimal}$=$\texttt{F}, once $\tree$ has been fitted, the CE generation time (Alg.~\ref{alg:algo1}, 10) for any input is $O(log(|\dataset|))$, and $\tree$ can be used to query CEs for any number of inputs efficiently.
When the number of inputs requiring CEs is far smaller than $|\dataset|$, \texttt{robustInit}$=$\texttt{F} may result in faster computation than \texttt{robustInit}$=$\texttt{T} (including time for obtaining $\tree$), see \ref{app:a_robustinit_computation_time} for more details.


\begin{algorithm}[t!]
\caption{\ouralgo: main routine} \label{alg:algo1}
\begin{algorithmic}[1]
\Require input $x$, model $\model{\theta}$, 
\State $\qquad$ training dataset $\dataset = \{(x_1,y_1),\ldots,(x_n,y_n)\}$, 
\State $\qquad$ \revision{(sound)} set of plausible model shifts $\Delta$,
\State $\qquad$ parameters \texttt{robustInit}, \texttt{optimal} $\in 
\{\texttt{F, T}\}$
\State \textbf{Init}: $c \gets \model{\theta}(x)$; $x' \gets \emptyset$
\State \textbf{Step 1}: \revision{build \FL{encoding for} interval abstraction $\abst{\model}{\Delta}$}
\State \textbf{Step 2}: select candidate counterfactuals
\State $\candidates \!\!\gets\!\! \texttt{getCandidates}(x,\! \model{\theta}, \!\dataset\!, \abst{\theta}{\Delta},\! \texttt{robustInit})$
\State \textbf{Step 3}: fit k-d tree $\tree$ on $\candidates$
\State \textbf{Step 4}: $x' \!\!\gets\!\! \texttt{getRobustCE}(x,\! \model{\theta}, \abst{\theta}{\Delta}, \!\tree\!,\! \texttt{optimal})$
\State \textbf{return} $x'$
\end{algorithmic}
\end{algorithm}

\begin{algorithm}[t!]
\caption{\ouralgo: \texttt{getCandidates} } \label{alg:algo2}
\begin{algorithmic}[1]
\Require input $x$, model $\model{\theta}$,
\State $\qquad$ dataset $\dataset= \{(x_1,y_1),\ldots,(x_n,y_n)\} $
\State $\qquad$ interval abstraction $\abst{\theta}{\Delta}$
\State $\qquad$ parameter \texttt{robustInit} $\in 
\{\texttt{F, T}\}$
\State \textbf{Init}: $c \gets \model{\theta}(x)$, $\candidates \gets \{\}$
\State \textbf{Step 1}: find candidate subset of the training dataset
\For{$(x_i, y_i) \in \dataset$}
    \If{\texttt{robustInit} 
    is $\texttt{F}$}
        \State Add $(x_i, y_i)$ to $\candidates$ if $\model{\theta}(x_i) = 1-c$
    \Else \: Add $(x_i, y_i)$ to $\candidates$ if $\abst{\theta}{\Delta}(x_i) = 1-c$
    \EndIf
\EndFor
\State \textbf{return} $\candidates$
\end{algorithmic}
\end{algorithm}

\begin{algorithm}[t!]
\caption{\ouralgo: \texttt{getRobustCE} } \label{alg:algo3}
\begin{algorithmic}[1]
\Require input $x$, model $\model{\theta}$, interval abstraction $\abst{\theta}{\Delta}$, 
\State $\qquad$ k-d tree $\tree$, parameter \texttt{optimal} $\in 
\{\texttt{F, T}\}$
\State \textbf{Init}: $c \gets \model{\theta}(x)$; 
$x' \gets \emptyset$; $a \gets 1$; $s \gets 0.05$
\State \textbf{Step 1}: find robust CE
\While{$\tree.\texttt{queryNextNeighbour}(x)$ is not $\emptyset$} 
\State $x'_{nn} \gets \tree.\texttt{queryNextNeighbour}(x)$
\If{$\abst{\theta}{\Delta}(x'_{nn}) = 1-c$} \: $x' \gets x'_{nn}$
\EndIf
\EndWhile
\If{\texttt{optimal} is $\texttt{T}$}

\While{$a > 0$} 
\State $x'_{line} \gets ax' + (1-a)x$; \: $a \gets a - s$
\If{$\abst{\theta}{\Delta}(x'_{line})=1-c$} \: $x' \gets x'_{line}$
\EndIf
\EndWhile

\EndIf
\State \textbf{return} $x'$
\end{algorithmic}
\end{algorithm}

\begin{remark}
\label{remark:soundness}
\ouralgo{} is sound.
\end{remark}

Soundness of the procedure is guaranteed by construction, as solutions can only be chosen among the set of instances {for} which the classification label flips (see Algorithm~\ref{alg:algo2}). This is also true when \ouralgo{} is instantiated with \texttt{optimal} 
set to \texttt{T}, as an additional check is performed in line 10 of Algorithm~\ref{alg:algo3} to ensure that only valid CEs are returned.

\begin{remark}
\label{remark:completeness}
\ouralgo{} is complete if there exists an $(x', y') \in \dataset$ such that $\abst{\theta}{\Delta}(x') \neq \model{\theta}(x)$. 
\end{remark}

Completeness of the procedure is conditioned on the existence of at least one $\name$-robust instance $x'$ in $\dataset$ whose label differs from that of the original input $x$. 
{Completeness is not affected by the configurations of parameters in \ouralgo{}.} When \texttt{robustInit} is $\texttt{T}$, Algorithm~\ref{alg:algo2} is guaranteed to identify such input as a candidate CE and add it to set $\candidates$; when \texttt{robustInit}
is $\texttt{F}$, the multiple queries of the next nearest neighbour and the following $\name$-robustness test (Alg.~\ref{alg:algo3}
, l\FT{ines} 4-6) will also identify a feasible result. For the parameter \texttt{optimal}, Algorithm~\ref{alg:algo3} will always return one among $x'$ or an optimised version of it (in terms of distance) for which robustness is still guaranteed (Alg.~\ref{alg:algo3}, 10). 
Our experimental analysis in Section~\ref{sec:experiments} shows that our approach is always able to find $\Delta$-robust CEs.

\begin{remark}
\label{remark:equivalencetoplainnnce}
\ouralgo{} is equivalent to a plain NNCE algorithm if \texttt{optimal} 
is \texttt{F}
and $\Delta=\emptyset$.
\end{remark}

\JJ{\ouralgo{} collapses to a plain NNCE algorithm if the set of plausible model shifts $\Delta$ is an empty set, thus $\abst{\theta}{\Delta}$ will be equivalent to the original model, $\model{\theta}$}. Then, the choice of the parameter \texttt{robustInit} makes no impact as lines 8 and 9 of Alg.~\ref{alg:algo2} become identical (the multiple queries specified in Alg.~\ref{alg:algo3}, 4-6 are also not needed). Note that the line search controlled by \texttt{optimal} can also be performed when $\Delta=\emptyset$.

\begin{figure*}[t!]
	\centering
	\begin{subfigure}{0.4\textwidth}
		\centering
		\scalebox{2}{\begin{tikzpicture}

\draw[fill=red, fill opacity=0.05] (0,0) -- (0,1.2)  -- (0.5,1.2) .. controls (1.,1.1) and (0.4,0.6).. (1.3,0) -- cycle ;

\draw[fill=green, fill opacity=0.05] (2.,0) -- (2.,1.2)  -- (0.5,1.2) .. controls (1.,1.1) and (0.4,0.6).. (1.3,0) -- cycle ;

\draw[dashed] (0.2,1.2) .. controls (1.,1.1) and (0.5,0.5).. (1.8,0) ;

\draw[red,thick,fill=red] (0.6,0.4) circle[radius=1pt];

\draw (1,0.4) node[cross=2pt,green,thick] {};
\draw (1.5,0.9) node[cross=2pt,gray,thick] {};

\end{tikzpicture}}	
		\caption{}
		\label{fig:a}
	\end{subfigure}
	\begin{subfigure}{0.4\textwidth}
		\centering
  		\scalebox{2}{\begin{tikzpicture}

\draw[fill=red, fill opacity=0.05] (0,0) -- (0,1.2)  -- (0.5,1.2) .. controls (1.,1.1) and (0.4,0.6).. (1.3,0) -- cycle ;
\draw[fill=green, fill opacity=0.05] (2.,0) -- (2.,1.2)  -- (0.5,1.2) .. controls (1.,1.1) and (0.4,0.6).. (1.3,0) -- cycle ;

\draw[dashed] (0.2,1.2) .. controls (1.,1.1) and (0.5,0.5).. (1.8,0) ;

\draw[dotted] (1.05,0.4) -- (0.6,0.4);

\draw[red,thick,fill=red] (0.6,0.4) circle[radius=1pt];

\draw (1.05,0.4) node[cross=2pt,gray,thick] {};
\draw (1.5,0.9) node[cross=2pt,gray,thick] {};subfigure
\draw (0.88,0.4) node[cross=2pt,green,thick] {};

\end{tikzpicture}}	
		\caption{}
		\label{fig:b}
	\end{subfigure}

	\begin{subfigure}{0.4\textwidth}
		\centering
		\scalebox{2}{\begin{tikzpicture}

\draw[fill=red, fill opacity=0.05] (0,0) -- (0,1.2)  -- (0.5,1.2) .. controls (1.,1.1) and (0.4,0.6).. (1.3,0) -- cycle ;

\draw[fill=green, fill opacity=0.05] (2.,0) -- (2.,1.2)  -- (0.5,1.2) .. controls (1.,1.1) and (0.4,0.6).. (1.3,0) -- cycle ;

\draw[dashed] (0.2,1.2) .. controls (1.,1.1) and (0.5,0.5).. (1.8,0) ;

\draw[red,thick,fill=red] (0.6,0.4) circle[radius=1pt];

\draw (1.5,0.9) node[cross=2pt,green,thick] {};
\draw (1.0,0.4) node[cross=2pt,gray,thick] {};

\end{tikzpicture}}	
		\caption{}
		\label{fig:c}
	\end{subfigure}
	\begin{subfigure}{0.4\textwidth}
		\centering
		\scalebox{2}{\begin{tikzpicture}

\draw[fill=red, fill opacity=0.05] (0,0) -- (0,1.2)  -- (0.5,1.2) .. controls (1.,1.1) and (0.4,0.6).. (1.3,0) -- cycle ;

\draw[fill=green, fill opacity=0.05] (2.,0) -- (2.,1.2)  -- (0.5,1.2) .. controls (1.,1.1) and (0.4,0.6).. (1.3,0) -- cycle ;

\draw[dashed] (0.2,1.2) .. controls (1.,1.1) and (0.5,0.5).. (1.8,0) ;

\draw[dotted] (0.6,0.4) -- (1.5,0.9);

\draw[red,thick,fill=red] (0.6,0.4) circle[radius=1pt];

\draw (1.5,0.9) node[cross=2pt,gray,thick] {};
\draw (1.0,0.4) node[cross=2pt,gray,thick] {};
\draw (0.95,0.58) node[cross=2pt,green,thick] {};

\end{tikzpicture}}	
		\caption{}
		\label{fig:d}
	\end{subfigure}
	\caption{
  \ouralgo{} behaviours when \JJ{$\texttt{optimal}=\texttt{F}$, $\Delta=\emptyset$ (\ref{fig:a}); $\texttt{optimal}=\texttt{T}$, $\Delta=\emptyset$ (\ref{fig:b}); $\texttt{optimal}=\texttt{F}$, $\Delta\neq\emptyset$ (\ref{fig:c});  $\texttt{optimal}=\texttt{T}$, $\Delta\neq\emptyset$ (\ref{fig:d})}. 
 Each figure depicts the CE that is chosen (green cross) among a set of candidate CEs (grey crosses) for a given input (red circle) and model $\model{\theta}$. 
 The continuous curved line represents the decision boundary of $\model{\theta}$; the dashed line represents a possible change in the decision boundary under $\name$. Figures~\ref{fig:a} and \ref{fig:b} show configurations under which \ouralgo{} may return CEs that are not $\name$-robust, whereas Figures~\ref{fig:c} and \ref{fig:d} depict robust ones. }
	\label{fig:comparison_configurations}
\end{figure*}
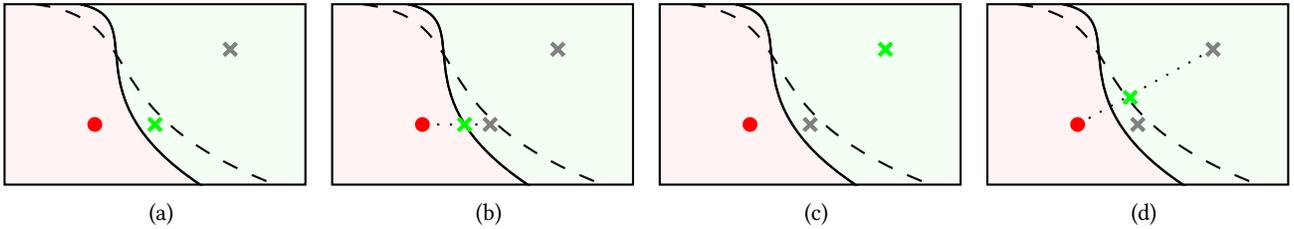

Figure~\ref{fig:comparison_configurations} shows a pictorial representation of the different behaviours that can be obtained from \ouralgo{} based on the configuration of the parameter \texttt{optimal} and whether $\Delta$ is empty. When $\Delta=\emptyset$ and \texttt{optimal} 
is \texttt{F}
(Figure~\ref{fig:a}), 
the algorithm behaves like a standard \FT{algorithm producing} NNCE\FT{s} and returns the closest counterfactual instance. However, this CE may not be robust. Analogous results may be obtained when $\Delta=\emptyset$ and \texttt{optimal} 
is $\texttt{T}$ (Figure~\ref{fig:b}). 
Conversely, when $\Delta\neq\emptyset$ (Figures~\ref{fig:c} and \ref{fig:d}), \ouralgo{} will only operate on robust instances as candidates, thus guaranteeing the CEs' $\name$-robustness. 

\section{Experiments}
\label{sec:experiments}
In this section, we demonstrate through experiments how $\Delta$-robustness can be \FL{used in practice}. We first described the experimental setup used for our experiments, and present two strategies to find realistic magnitudes in the plausible model changes given a dataset and the corresponding original classifier. Then, using these magnitudes, we construct interval abstractions to test the robustness of existing CE generation methods, showing a lack of $\Delta$-robustness in state-of-the-art methods. Finally, we benchmark the proposed algorithms to compute $\Delta$-robust CEs with evaluation metrics for proximity, plausibility, and robustness, showing the effectiveness of our methods.

Additionally, we demonstrate the applicability of our methods to multi-class classification tasks, while most existing methods focus on obtaining robust CEs only for binary classifiers. \FL{Finally, we present experiments aimed at benchmarking the runtime performance of our procedures and compare them with existing approaches.}

The code for our implementation and experiments is available at:
\begin{center}
    \url{https://github.com/junqi-jiang/interval-abstractions}.
\end{center}

\subsection{Setup and evaluation metrics}

\begin{table}[ht]
    \centering
    \resizebox{0.9\columnwidth}{!}{
    \begin{tabular}{ccccc}
        \hline 
        \textbf{dataset} & \textbf{data points} & \textbf{attributes} & \textbf{NN accuracy} & \textbf{LR accuracy} \\
        \hline
        adult & 48832 & 13 & .847$\pm$.006 & .828$\pm$.004 \\
        compas & 6172 & 7 & .844$\pm$.010 & .833$\pm$.011 \\
        gmc & 115527 & 10 & .860$\pm$.001 & .852$\pm$.002 \\
        heloc & 9871 & 21 & .728$\pm$.015 & .725$\pm$.013 \\
        \hline
        
    \end{tabular}
    }
    \caption{Dataset and classifier details.}
    \label{tab:datasetsandclassifiers}
\end{table}

We experiment on four popular tabular datasets for benchmarking performances of CE generation algorithms, \JJ{\emph{adult} income dataset \cite{dua2017uci}, \emph{compas} recidivism dataset \cite{compas}, give me some credit (\emph{gmc}) dataset \cite{gmc}, Home equity line of credit (\emph{heloc}) dataset \cite{heloc}, }from the CARLA library \cite{pawelczyk2021carla}. All datasets contain min-max scaled continuous features, one-hot encoded binary discrete features, and two output classes. Class 0 is the unwanted class while class 1 is the target class for CEs. We train neural networks with two hidden layers and 6 to 20 neurons in each layer, and logistic regression models (used in Section \ref{ssec:experiments_benchmarking}) on the datasets. Table~\ref{tab:datasetsandclassifiers} reports the dataset sizes, the number of attributes of the datasets, and the 5-fold cross-validation accuracy of the classifiers obtained. 

We randomly split each dataset $\dataset$ into two halves, $\dataset_1$ and $\dataset_2$, each including a training and a test set. We use $\dataset_1$ to train the classifiers as stated above, and we call these original models $\model{1}$. $\dataset_2$ is used to simulate scenarios with incoming data after $\model{1}$ are deployed, explained in the next sections. 

\subsection{Identifying $\delta$ values}
\label{ssec:experiments_identify_delta_values}

\JJ{Recall from Definition~\ref{def:set_of_plausible_shifts}  that }$\delta$ upper-bounds the magnitude of the model shifts in $\Delta$, as measured \FL{by the} p-distance \FL{(Definition~\ref{def:distance_between_models})}. When \JJ{practically using} $\Delta$-robustness, $\delta$, values\JJ{, regarded as the hyperparameter in our method}, need to be first determined. We now propose two realistic strategies for obtaining $\delta$ values, depending on how the underlying classifier is retrained with new data in the application. 

\paragraph{Incremental retraining} This refers to the case where $\model{1}$ is periodically fine-tuned by gradient descent on some portions of $\dataset_2$ with a small number of iterations. 
In this scenario, depending on how \JJ{many data points} in $\dataset_2$ are used for retraining, the magnitude of parameter changes could be small. Therefore, the $\delta$ values can be directly estimated by calculating the p-distance between $\model{1}$ and the updated classifiers. By observing the p-distances when retraining on various portions of $\dataset_2$, the model developers could potentially link the estimated $\delta$ values to time intervals in real-world applications, depending on \FL{the frequency at which} new data are collected. 

\paragraph{Complete or leave-one-out retraining} \emph{Complete retraining} \FL{refers to a scenario where} a model is retrained with the same hyperparameter setting \FL{as $\model{1}$}, using the concatenation of $\dataset_1$ and $\dataset_2$. \emph{Leave-one-out retraining} concerns obtaining a new model using a subset of $\dataset_1$ with 1\% of data points removed. As mentioned in \cite{DBLP:conf/icml/HammanNMMD23}, when retraining from scratch using the concatenation of $\dataset_1$ and $\dataset_2$, it becomes unrealistic to upper-bound the weights and biases differences in classifiers as the p-distance can be arbitrarily large. In this case, we can treat $\delta$ as a hyperparameter and empirically find the \revision{minimum value which can ensure robustness on a validation set}. Similarly to the standard train-validation-test split for evaluating the accuracy of machine learning models, we propose to use a held-out validation set to estimate $\delta$ values which lead to sufficient robustness under such retraining scenarios. The procedure can be summarised as follows:

\begin{enumerate}
    \item Retrain from scratch some new classifiers using $\dataset_1$ and $\dataset_2$\footnote{Complete retraining is possible in an experimental environment. In practice, however, if $\dataset_2$ is not available at the time of robust CE generation, leave-one-out retraining on $\dataset_1$ could be a viable option in step 1.}, set initial $\delta$ value to a sufficiently small value;
    \item Generate $\Delta$-robust CEs using the current $\delta$;
    \item Evaluate the percentage of the explanations which are valid under all the retrained models;
    \item Examine the above empirical validity, if it has not reached 100\% then increase the $\delta$ value and repeat steps 2-3. Choose the smallest $\delta$ value which results in 100\% validity.
\end{enumerate}

The \FL{termination} condition in step 4 balances the robustness-cost tradeoff. Once the empirical validity on multiple retrained models reaches 100\% for the validation set, increasing $\delta$ further will negatively affect the proximity evaluations. It is also expected that when finding $\Delta$-robust CEs with the same $\delta$ value for the test set, a similar level of robustness can be observed. 

We report the $\delta$ value results using both strategies in Figure~\ref{fig:analysis_delta}\JJ{, referred to as $\delta_{inc}$ (incremental retraining) and $\delta_{val}$ (validation set)}. Specifically, for the first scenario, we record and plot $\delta$ values as the average $\infty$-distances \JJ{(in the experiments, we use $p=\infty$)} between $\model{1}$ and five incrementally retrained\footnote{We re-implemented the partial\_fit function in Scikit-learn library: {https://scikit-learn.org/stable/modules/generated/sklearn.neural\_network.MLPClassifier.html\#sklearn.\\neural\_network.MLPClassifier.partial\_fit}} models using increasing sizes of the retraining dataset (a\% of $\dataset_2$). As can be observed, $\delta$ values increase with slight fluctuations as incrementally retraining on more data. The magnitudes are classifier- and dataset-dependent, though in our setting we obtain values from 0 to about 0.3. {The} $\delta_{inc}$ {values} \JJ{are obtained by recording the} $\delta$ values when retraining on $10\%$ of $\dataset_2$, and use them as one robustness target in the next experiments. 

For the second scenario, we use five completely retrained and five leave-one-out retrained classifiers as the new models in Step 1. We use RNCE-\texttt{FF} to find $\Delta$-robust CEs in Step 2. We record the obtained $\delta_{val}$ as another robustness target, and we highlight these values in the same plots in Figure~\ref{fig:analysis_delta}. After matching their magnitudes to the ones obtained for incremental retraining, we identify that these $\delta_{val}$ are much smaller than $\delta_{inc}$ and they correspond to the magnitudes of retraining on only 2\% of $\dataset_2$. The fact that being $\Delta$-robust against a very small magnitude results in 100\% empirical robustness (validity) in a validation set confirms the conservative nature of $\Delta$-robustness, as it guarantees the CE's robustness against all possible model shifts entailed by $\Delta$ (Lemma~\ref{lemma:over-approx}).

\begin{figure*}[ht!]
\centering
  
  \begin{subfigure}{0.48\columnwidth}
    \includegraphics[width=\linewidth]{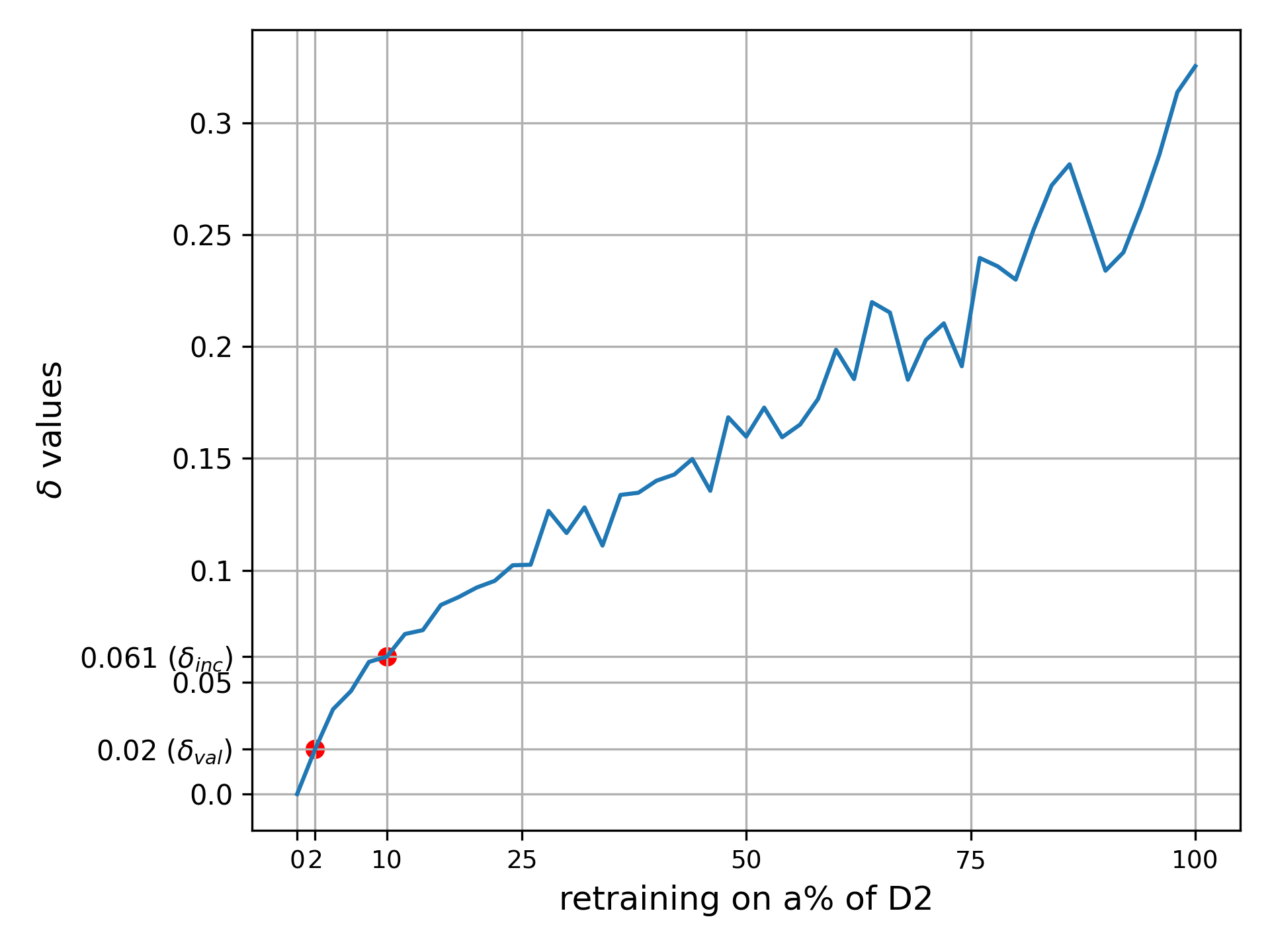}
    \caption{adult dataset}
    \label{fig:adultdelta}
  \end{subfigure}
  \begin{subfigure}{0.48\columnwidth}
    \includegraphics[width=\linewidth]{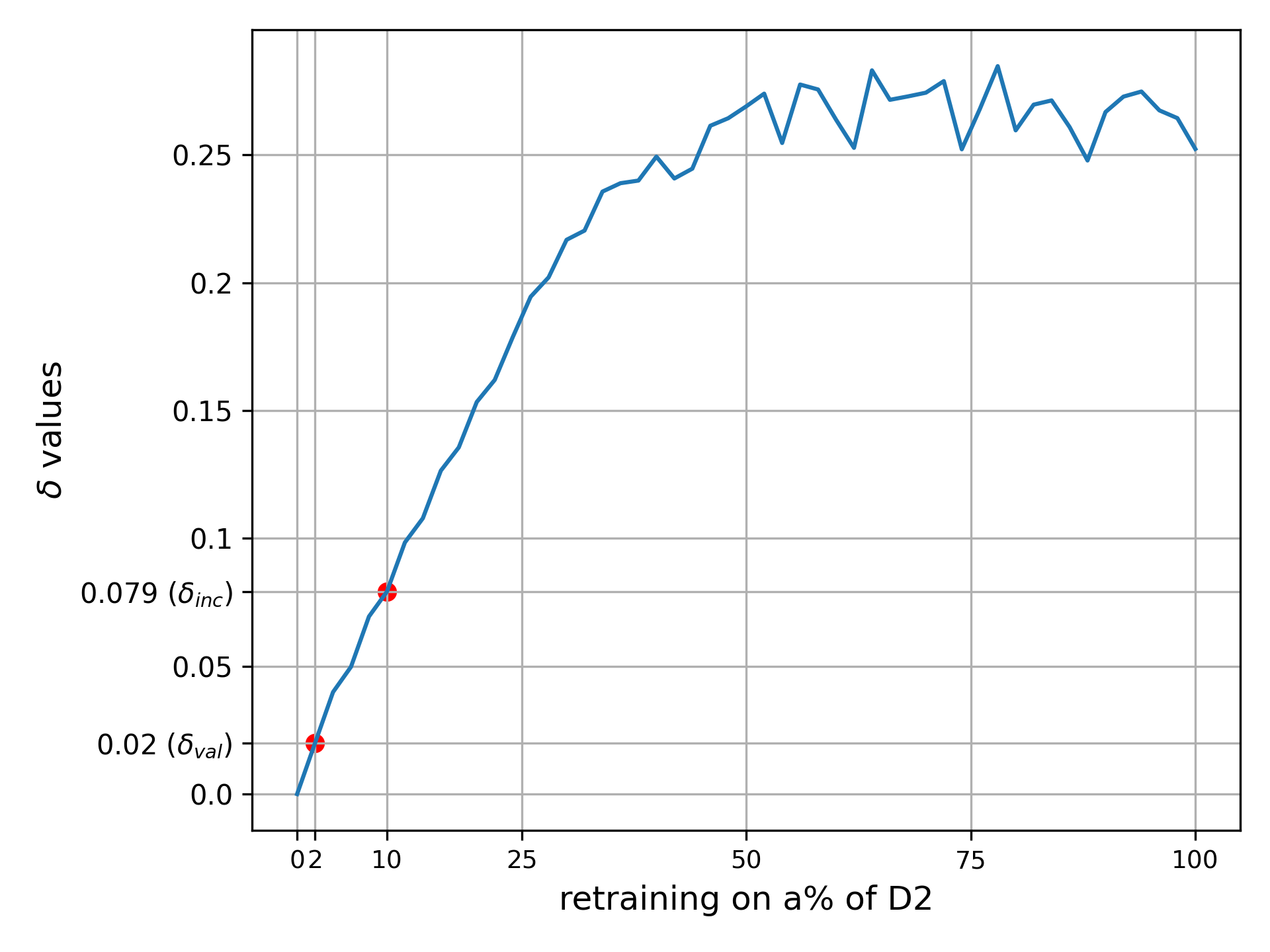}
    \caption{compas dataset}
    \label{fig:compasdelda}
  \end{subfigure}
  \begin{subfigure}{0.48\columnwidth}
    \includegraphics[width=\linewidth]{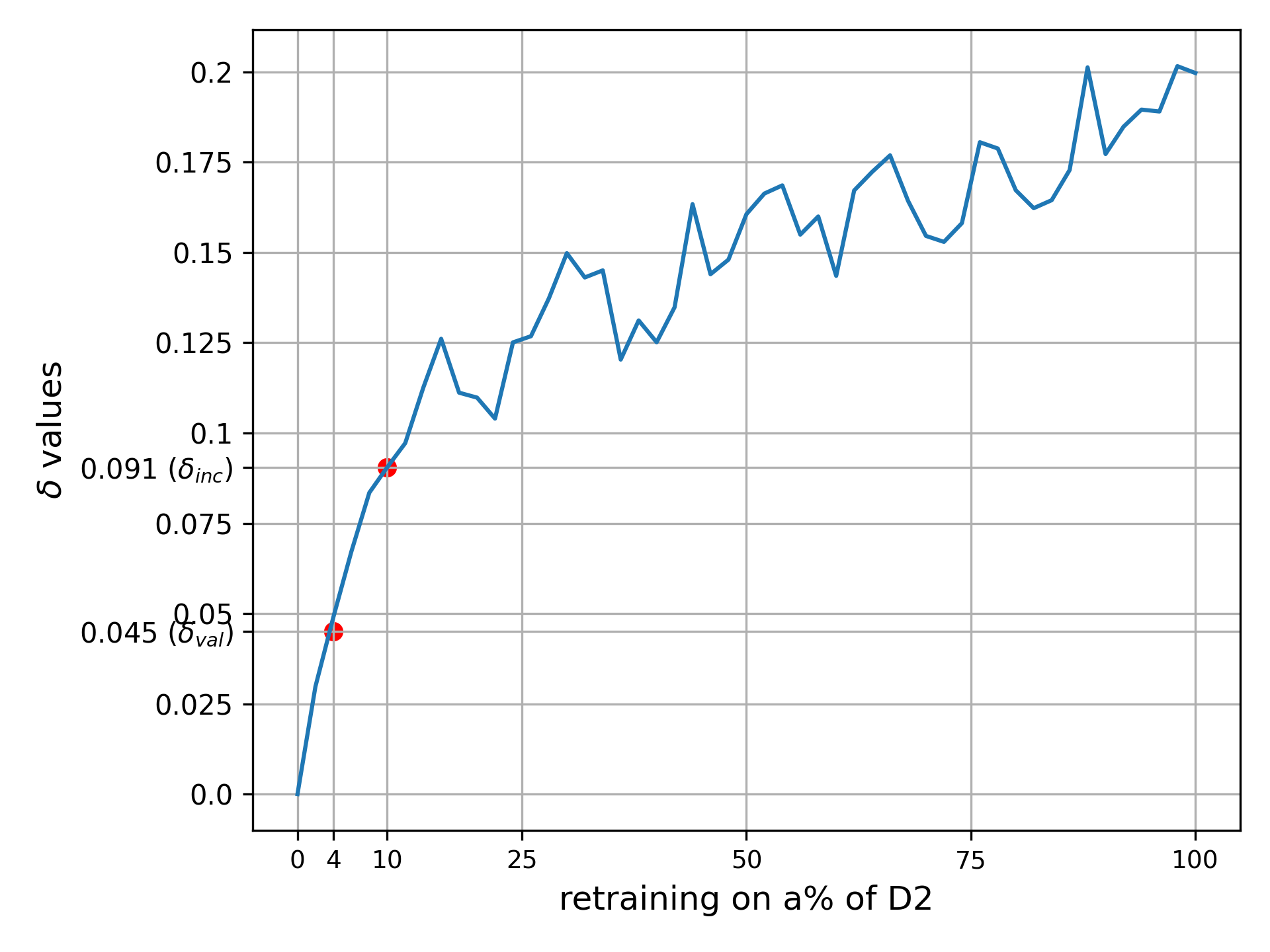}
    \caption{gmc dataset}
    \label{fig:gmcdelta}
  \end{subfigure}
  \begin{subfigure}{0.48\columnwidth}
    \includegraphics[width=\linewidth]{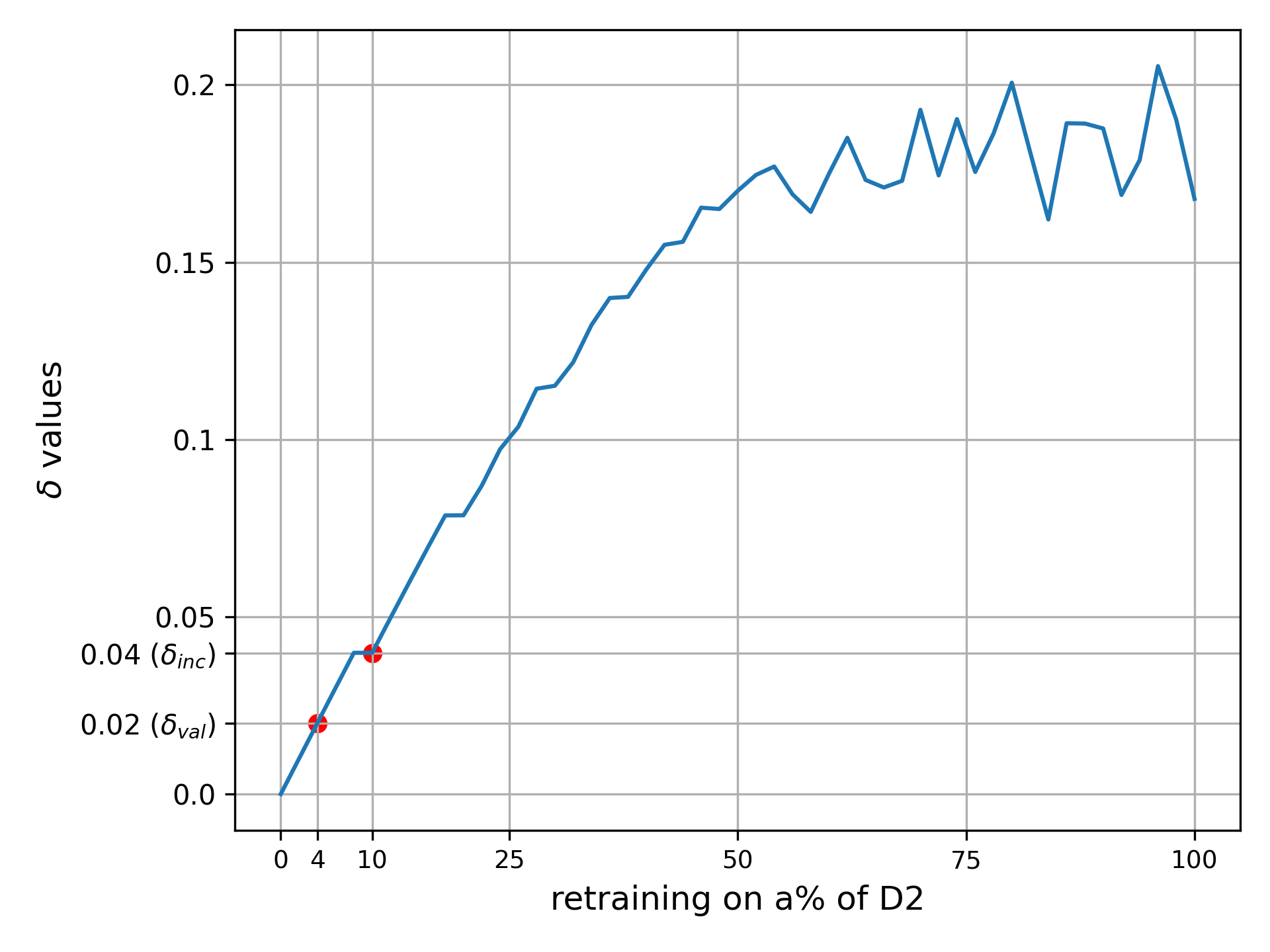}
    \caption{heloc dataset}
    \label{fig:helocdelta}
  \end{subfigure}
  
  \caption{$\delta$ values obtained by retraining on increasing portions of $\dataset_2$. The red dots highlight the $\delta_{val}$ and $\delta_{inc}$ values found by the two strategies described in Section~\ref{ssec:experiments_identify_delta_values}.}
    \label{fig:analysis_delta}
\end{figure*}

\subsection{\JJ{Verifying $\Delta$-robustness}}
\label{ssec:experiments_delta_robustness_plots}

\begin{figure*}[ht!]
\centering
\begin{subfigure}{0.8\columnwidth}
    \includegraphics[width=\linewidth]{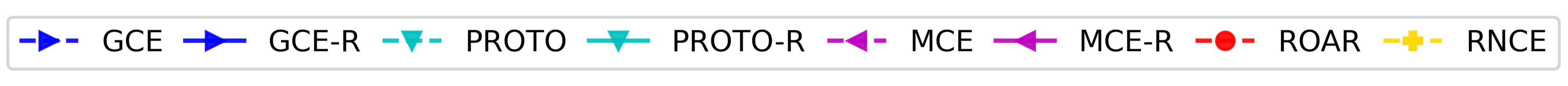}
  \end{subfigure}


  
  
    \begin{subfigure}{0.49\columnwidth}
    \includegraphics[width=\linewidth]{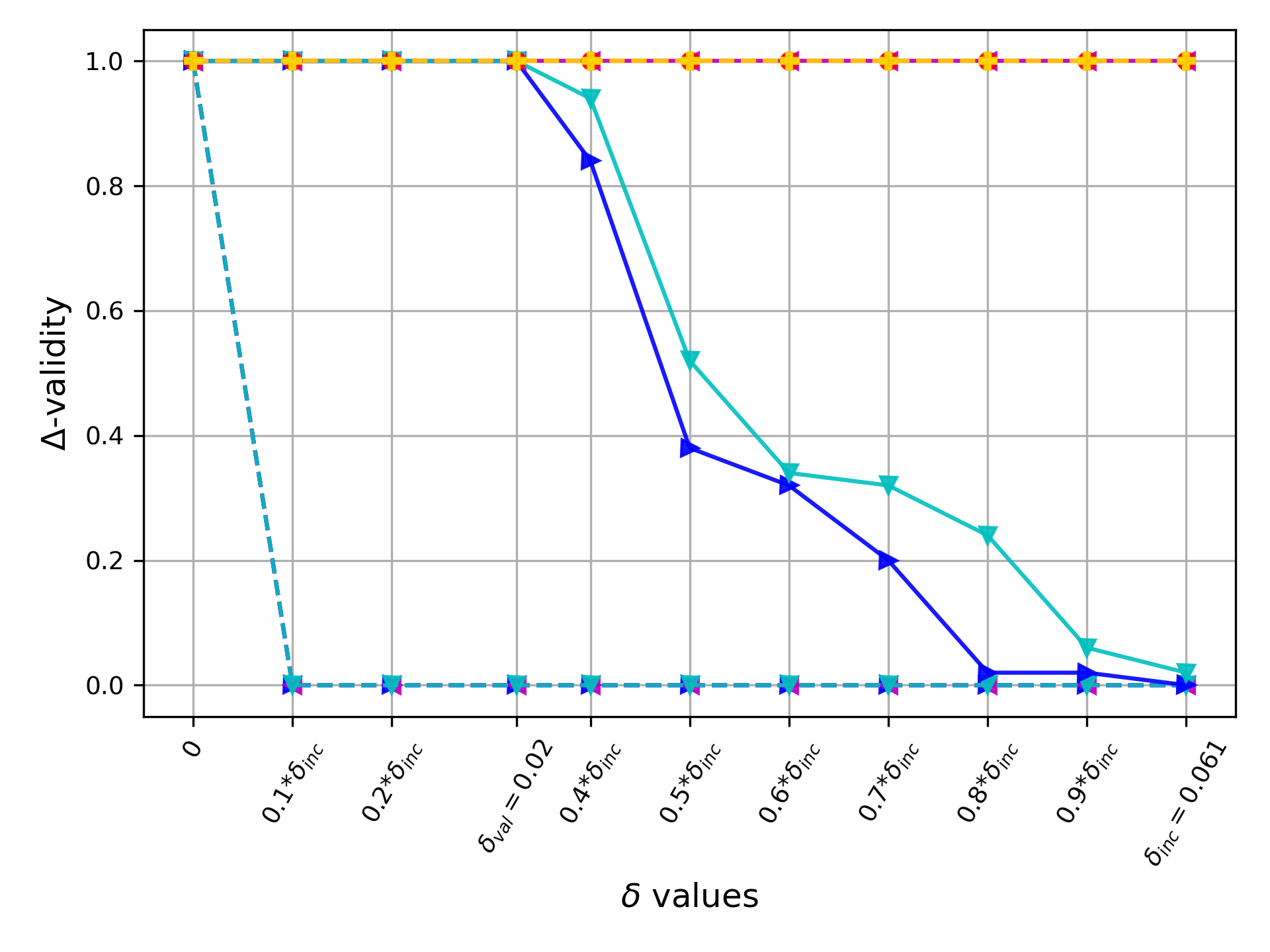}
    \caption{adult}
    \label{fig:delta_robustness_plot_adult}
  \end{subfigure}
  \begin{subfigure}{0.49\columnwidth}
    \includegraphics[width=\linewidth]{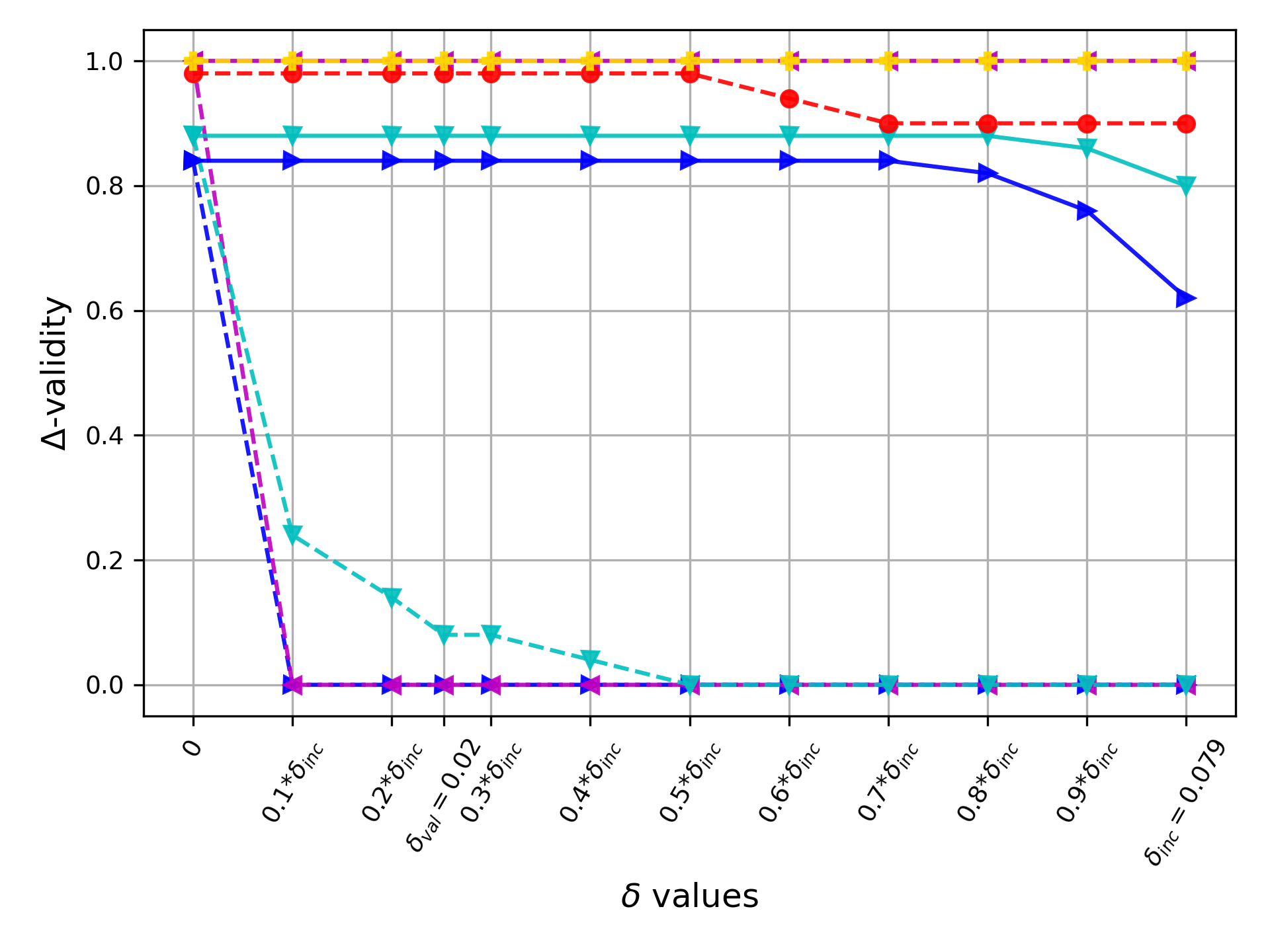}
    \caption{compas}
    \label{fig:delta_robustness_plot_compas}
  \end{subfigure}
  \begin{subfigure}{0.49\columnwidth}
    \includegraphics[width=\linewidth]{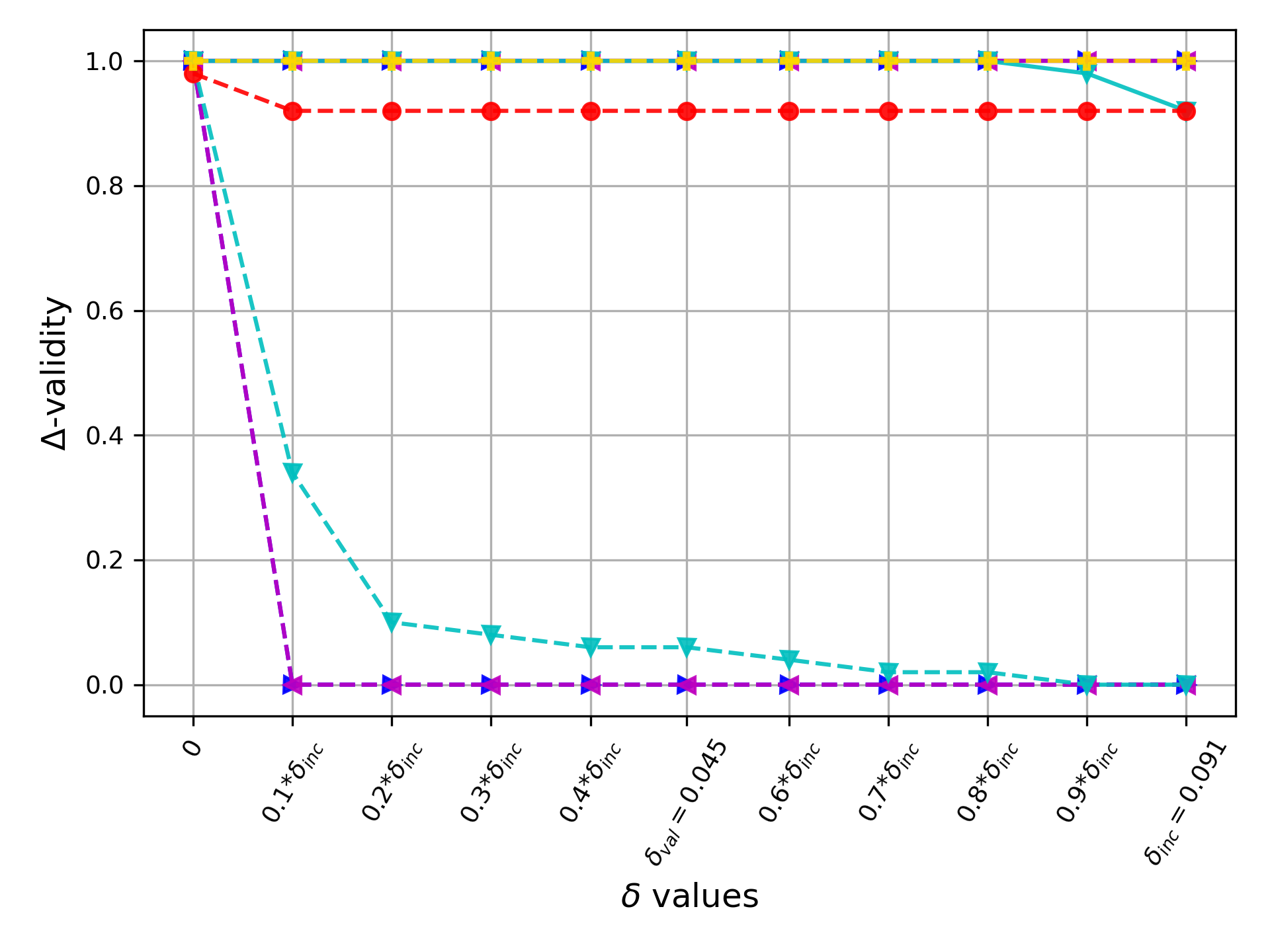}
    \caption{gmc}
    \label{fig:delta_robustness_plot_gmc}
  \end{subfigure}
  \begin{subfigure}{0.49\columnwidth}
    \includegraphics[width=\linewidth]{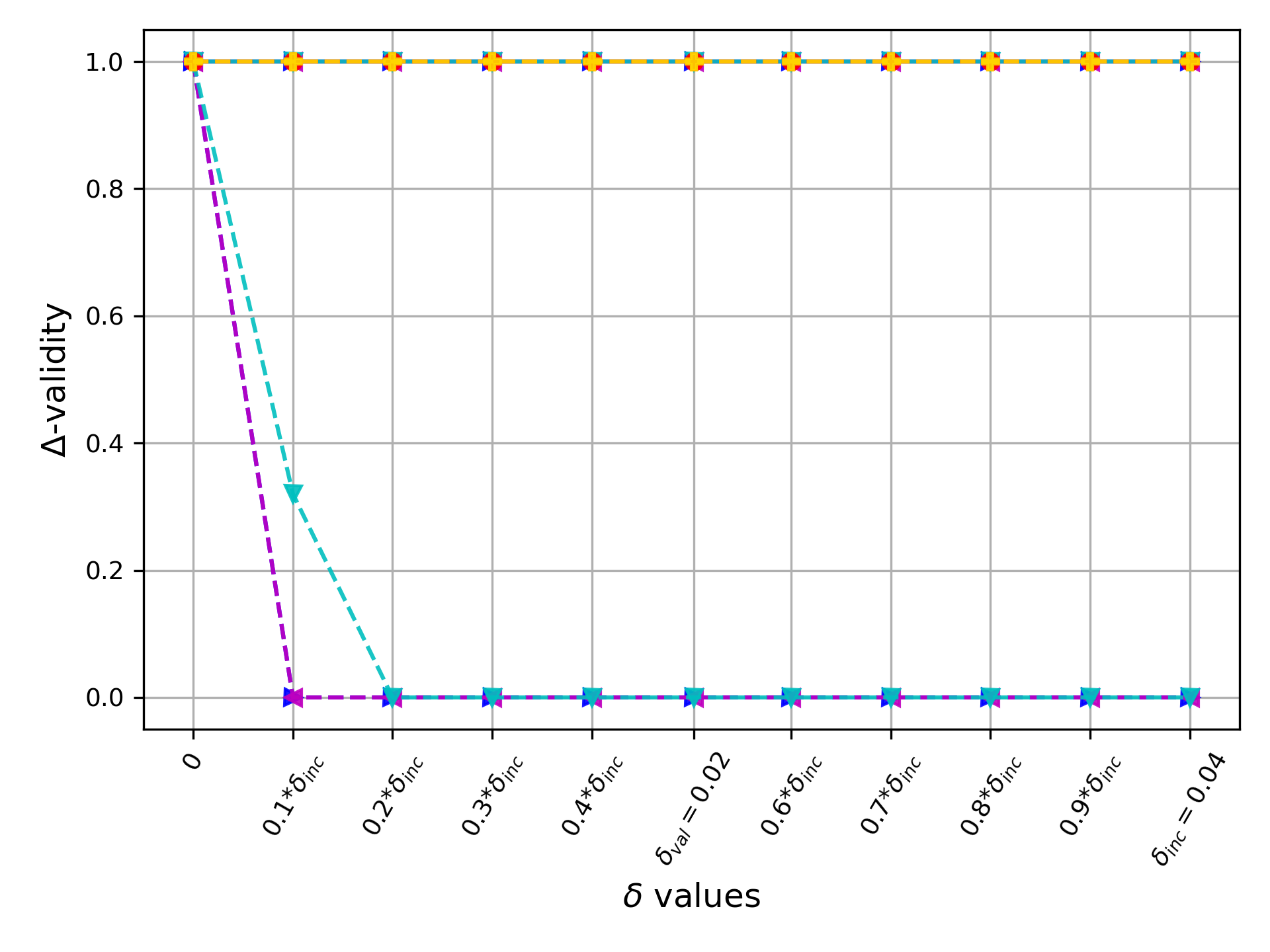}
    \caption{heloc}
    \label{fig:delta_robustness_plot_heloc}
  \end{subfigure}

  \caption{Evaluation of $\name$-validity (against increasing $\delta$ values) of the CEs found by state-of-the-art methods (GCE, PROTO, MCE, ROAR) and by our methods (GCE-R, PROTO-R, MCE-R, RNCE).}
    \label{fig:delta_robustness_plot}
\end{figure*}

In this experiment, we demonstrate how \FL{our MILP procedure} can be used as an evaluation tool to examine the robustness of CEs \FL{generated by} SOTA algorithms. We consider three traditional non-robust baselines, namely a gradient-descent based method \textit{GCE} similar to~\cite{Wachter_17}, a plausible method using gradient descent \textit{Proto}~\cite{van2021interpretable}, and 
a MILP-based method (referred to as MCE) inspired by~\cite{mohammadi2021scaling}. 
We also include \textit{ROAR}~\cite{upadhyay2021towards}, a SOTA framework specifically designed to generate robust CEs, focusing on robustness against the same notion of plausible model changes\footnote{Different{ly} to our previous work \cite{oursaaai23}, we use the implementation of ROAR in CARLA library \cite{pawelczyk2021carla} which allows more comprehensive hyperparameter tuning.}. For our algorithms, using the iterative algorithm (Algorithm~\ref{alg:algo}) proposed in Section~\ref{sec:algorithms}, we devise robustified versions of the non-robust baselines, which we call \textit{GCE-R}, \textit{PROTO-R}, and \textit{MCE-R}. 
We also include our RNCE-\texttt{FF} algorithm (Algorithm~\ref{alg:algo1}, \texttt{robustInit=False}, \texttt{optimal=False}) to show the guaranteed robustness results of this method.

For each dataset, we randomly select 50 test inputs for which we use the above baselines to generate CEs. We evaluate their robustness against model shifts of magnitudes up to $\delta_{inc}$ using \textit{$\Delta$-validity}, the percentage of test inputs whose CEs are $\Delta$-robust. For all robust methods, their \JJ{targeted $\Delta$} {values} are instantiated with $\delta_{inc}$.

Figures~\ref{fig:delta_robustness_plot} (a-d) report the results of our analysis \FT{for the four datasets}. As we can observe, all methods generate CEs that tend to be valid for the original model. For the non-robust baselines, $\Delta$-validities soon drop to value 0 as small model shifts are applied, revealing a lack of robustness for these baselines. ROAR exhibits a higher degree of $\name$-robustness, as expected. However, its heuristic nature does not allow to reason about all possible shifts in $\Delta$, which affects the $\name$-robustness of CFXs as $\delta$ grows larger. Also, the fact that it uses a local surrogate model to approximate the behaviour of neural networks could negatively affect the results. For compas and gmc datasets, its $\Delta$-robustness stays lower than 100\%.

For the two gradient-based non-robust baselines, our robustified versions (GCE-R, PROTO-R) successfully improved their robustness against small model shifts with lower $\delta$ values, however, the $\Delta$-robustness tends to drop (drastically for adult dataset) when it increases near $\delta_{inc}$. This is likely due to the vulnerability to local optimum solutions for the gradient-descent algorithms. The MILP-based method MCE-R which gives more exact solutions for the problem always finds robust CEs with 100\% $\Delta$-validity. With proven robustness guarantees, our RNCE algorithm also finds CEs that are 100\% robust.

\subsection{\JJ{Generating $\Delta$-robust CEs}}
\label{ssec:experiments_benchmarking}

\begin{table}[t!]
    \centering
    \resizebox{\columnwidth}{!}{
    \begin{tabular}{l|cccc|cc}
    
        \hline 
        \multirow{2}{*}{\textbf{Method}} & 
        \multicolumn{4}{c}{\textbf{Properties}} & 
        \multicolumn{2}{|c}{\textbf{Classifiers}} \\
         & 
        Validity & 
        Proximity & 
        Plausibility & 
        Robustness &
        NN &
        LR \\
        \hline
        GCE~\cite{Wachter_17} 
        &
        $\checkmark$ & 
        $\checkmark$ & 
         & 
         &
        $\checkmark$ & 
         \\
        PROTO~\cite{van2021interpretable} &
        $\checkmark$ & 
        $\checkmark$ & 
        $\checkmark$ & 
        &
        $\checkmark$ & 
         \\
        MCE~\cite{mohammadi2021scaling} &
        $\checkmark$ & 
        $\checkmark$ & 
         & 
         &
        $\checkmark$ & 
         \\
        NNCE \cite{NiceNNCE} &
        $\checkmark$ & 
        $\checkmark$ & 
        $\checkmark$  & 
         &
        $\checkmark$ & 
        $\checkmark$ \\
        %

        RBR \cite{nguyen2022robust} 
        &
        $\checkmark$ & 
        $\checkmark$ & 
        $\checkmark$ & 
        $\checkmark$ &
        $\checkmark$ & 
         \\
        ROAR \cite{upadhyay2021towards}
        &
        $\checkmark$ & 
        $\checkmark$ & 
         & 
        $\checkmark$ &
        $\checkmark$ & 
        $\checkmark$ \\
        
        ST-CE~\revision{\cite{DBLP:conf/icml/HammanNMMD23}}
        &
        $\checkmark$ & 
        $\checkmark$ & 
        $\checkmark$ & 
        $\checkmark$ &
        $\checkmark$ & \\
        \hline
        GCE-R (ours) 
        &
        $\checkmark$ & 
        $\checkmark$ & 
         & 
        $\checkmark$ &
        $\checkmark$ & 
         \\
        PROTO-R (ours) 
        &
        $\checkmark$ & 
        $\checkmark$ & 
        $\checkmark$ & 
        $\checkmark$ &
        $\checkmark$ & 
         \\
        MCE-R (ours) 
        &
        $\checkmark$ & 
        $\checkmark$ & 
         & 
        $\checkmark$ &
        $\checkmark$ & 
         \\
        RNCE (ours) 
        &
        $\checkmark$ & 
        $\checkmark$ & 
        $\checkmark$ & 
        $\checkmark$ &
        $\checkmark$ & $\checkmark$ 
         \\
        \hline
    \end{tabular}
    }
    \caption{Properties (validity, proximity, plausibility, robustness) addressed by baselines and whether the methods apply to neural network (NN) or logistic regression (LR), indicated by $\checkmark$.}
    \label{tab:baselines}
\end{table}

\begin{table*}[h!]
    \centering
    \resizebox{0.9\columnwidth}{!}{
    \begin{tabular}{cccccc|ccccc}

        \cline{2-11}
        & 
        \textbf{vr}$\uparrow$&  
        \textbf{v$\Delta_{val}$$\uparrow$} &
        \!\!\!\textbf{v$\Delta_{inc}$$\uparrow$}\!\!\!&
        $\ell_1$$\downarrow$ &
        \!\!\! \textbf{lof$\downarrow$} \!\!\!&
        
        \textbf{vr}$\uparrow$&  
        \textbf{v$\Delta_{val}$$\uparrow$} &
        \!\!\!\textbf{v$\Delta_{inc}$$\uparrow$}\!\!\!&
        $\ell_1$$\downarrow$ &
        \!\!\! \textbf{lof$\downarrow$} \!\!\! \\

        \cline{2-11}
        &
         \multicolumn{5}{c|}{\textbf{adult}} &
         \multicolumn{5}{c}{\textbf{compas}} \\
        
        \hline
        
        \!\!\!\!GCE\!\!\!\! & 
        \red{51\%} &
        \!\!\!\red{0\%}\!\!\! &
        \red{0\%} &
         \!\!\!\red{.016}\!\!\! &
        \blue{1.29} & 
        \red{26.5\%}&
        \!\!\!\red{0\%}\!\!\! &
        \red{0\%} &
         \!\!\!\red{.039}\!\!\!&
        \blue{3.05}\\

        \!\!\!\!PROTO\!\!\!\! & 
        \red{61.1\%} &
        \!\!\!\red{1\%}\!\!\! &
        \red{0\%} &
         \!\!\!\red{.011}\!\!\! &
        \blue{1.44} & 
        \red{50.7\%}&
        \!\!\!\red{6.6\%}\!\!\! &
        \red{0\%} &
         \!\!\!\red{.144}\!\!\!&
        \blue{1.66}\\

        \!\!\!\!MCE\!\!\!\! & 
        \red{48.5\%} &
        \!\!\!\red{0\%}\!\!\! &
        \red{0\%} &
         \!\!\!\red{.009}\!\!\! &
        \blue{1.41} & 
        \red{25.6\%}&
        \!\!\!\red{0\%}\!\!\! &
        \red{0\%} &
         \!\!\!\red{.019}\!\!\!&
        \blue{1.79}\\

        \!\!\!\!NNCE\!\!\!\! & 
        \red{76.1\%} &
        \!\!\!\red{2\%}\!\!\! &
        \blue{2\%} &
         \!\!\!\red{.032}\!\!\! &
        \blue{1.34} & 
        \red{43.3\%}&
        \!\!\!\red{8\%}\!\!\! &
        \blue{0\%} &
         \!\!\!\red{.028}\!\!\! &
        \blue{1.30}\\

        \hline

        \!\!\!\!ROAR\!\!\!\! & 
        \red{100\%} &
        \!\!\!\red{100\%}\!\!\! &
        \red{94.8\%} &
         \!\!\!\red{.877}\!\!\! &
        \blue{12.5} & 
        \red{100\%}&
        \!\!\!{100\%}\!\!\! &
        \red{94.7\%} &
         \!\!\!\red{.388}\!\!\!&
        \blue{8.44}\\

        \!\!\!\!RBR\!\!\!\! & 
        \red{90.1\%} &
        \!\!\!\red{0\%}\!\!\! &
        \blue{0\%} &
         \!\!\!\red{.025}\!\!\! &
        \red{1.33} & 
        \red{98.3\%} &
        \!\!\!\red{38\%}\!\!\! &
        \blue{0\%} &
         \!\!\!\red{.038}\!\!\! &
        \red{1.53}\\
        
        \!\!\!\!ST-CE\!\!\!\! & 
        98.7\% &
        \!\!\!20\%\!\!\! &
        \red{4\%} &
         \!\!\!\blue{.046}\!\!\! &
        \blue{1.27} & 
        99.9\% &
        \!\!\!74\%\!\!\! &
        \red{0\%} &
         \!\!\!\blue{.039}\!\!\! &
        \blue{1.23} \\
        
        \hline
                &
         \multicolumn{5}{c|}{target $\delta_{val}=0.02$} &
         \multicolumn{5}{c}{target $\delta_{val}=0.02$} \\
    
        \!\!\!\!GCE-R\!\!\!\! & 
        100\% &
        \!\!\!100\%\!\!\! &
        \red{0\%} &
         \!\!\!\blue{.048}\!\!\! &
        \blue{1.47} & 
        86.9\%&
        \!\!\!87\%\!\!\! &
        \red{0\%} &
         \!\!\!\blue{.055}\!\!\!&
       \blue{3.04}\\

        \!\!\!\!PROTO-R \!\!\!\! & 
        100\% &
        \!\!\!69\%\!\!\! &
        \red{0\%} &
         \!\!\!\blue{.042}\!\!\! &
        \blue{1.68} & 
        91\%&
        \!\!\!91\%\!\!\! &
        \red{0\%} &
         \!\!\!\blue{.049}\!\!\!&
       \blue{1.74}\\

        \!\!\!\!MCE-R \!\!\!\! & 
        100\% &
        \!\!\!100\%\!\!\! &
        \red{0\%} &
         \!\!\!\blue{.021}\!\!\! &
        \blue{1.65} & 
        99.8\%&
        \!\!\!100\%\!\!\! &
        \red{0\%} &
         \!\!\!\blue{.035}\!\!\!&
       \blue{1.68}\\
        
        \!\!\!\!\ouralgo{}-\texttt{FF} \!\!\!\! & 
        100\% &
        \!\!\!100\%\!\!\! &
        \red{4\%} &
         \!\!\!\red{.057}\!\!\! &
        \blue{1.32} & 
        100\%&
        \!\!\!100\%\!\!\! &
        \red{0\%} &
         \!\!\!\blue{.039}\!\!\!&
       \blue{1.26}\\

        \!\!\!\!\ouralgo{}-\texttt{FT} \!\!\!\! & 
        100\% &
        \!\!\!100\%\!\!\! &
        0\% &
         \!\!\!.049\!\!\! &
        1.28 & 
        100\%&
        \!\!\!100\%\!\!\! &
        0\% &
         \!\!\!.037\!\!\! &
        1.33\\
        
        \hline
           &
         \multicolumn{5}{c|}{target $\delta_{inc}=0.061$} &
         \multicolumn{5}{c}{target $\delta_{inc}=0.079$} \\
         
        \!\!\!\!GCE-R\!\!\!\! & 
        100\% &
        \!\!\!100\%\!\!\! &
        \red{0\%} &
         \!\!\!\blue{.051}\!\!\! &
        \blue{1.65} & 
        87\%&
        \!\!\!87\%\!\!\! &
        \red{67\%} &
         \!\!\!\blue{.109}\!\!\!&
       \blue{3.62}\\

        \!\!\!\!PROTO-R\!\!\!\! & 
        100\% &
        \!\!\!100\%\!\!\! &
        \red{9\%} &
         \!\!\!\blue{.072}\!\!\! &
        \blue{2.36} & 
        91\%&
        \!\!\!91\%\!\!\! &
        \red{84\%} &
         \!\!\!\blue{.108}\!\!\!&
       \blue{1.99}\\

        \!\!\!\!MCE-R \!\!\!\! & 
        100\% &
        \!\!\!100\%\!\!\! &
        \red{100\%} &
         \!\!\!\blue{.051}\!\!\! &
        \blue{2.91} & 
        100\%&
        \!\!\!100\%\!\!\! &
        \red{100\%} &
         \!\!\!\blue{.096}\!\!\!&
       \blue{2.81}\\
        
        \!\!\!\!\ouralgo{}-\texttt{FF} \!\!\!\! & 
        100\% &
        \!\!\!100\%\!\!\! &
        \red{100\%} &
         \!\!\!\red{.122}\!\!\! &
        \blue{2.78} & 
        100\%&
        \!\!\!100\%\!\!\! &
        \red{100\%} &
         \!\!\!\blue{.088}\!\!\!&
       \blue{1.11}\\

        \!\!\!\!\ouralgo{}-\texttt{FT} \!\!\!\! & 
        100\% &
        \!\!\!100\%\!\!\! &
        \red{100\%} &
         \!\!\!\red{.095}\!\!\! &
        \blue{2.70} & 
        100\%&
        \!\!\!100\%\!\!\! &
        100\% &
         \!\!\!.088\!\!\! &
        1.11\\
        
        \cline{2-11} &
         \multicolumn{5}{c|}{\textbf{gmc}} &
         \multicolumn{5}{c}{\textbf{heloc}} \\
        
        \hline
        \!\!\!\!GCE\!\!\!\! & 
        \red{75.8\%} &
        \!\!\!\red{0\%}\!\!\! &
        \red{0\%} &
         \!\!\!\red{.022}\!\!\! &
        \blue{1.52} & 
        \red{20.5\%}&
        \!\!\!\red{0\%}\!\!\! &
        \red{0\%} &
         \!\!\!\red{.019}\!\!\!&
        \blue{1.18}\\

        \!\!\!\!PROTO\!\!\!\! & 
        \red{89.1\%} &
        \!\!\!\red{1\%}\!\!\! &
        \red{0\%} &
         \!\!\!\red{.023}\!\!\! &
        \blue{1.36} & 
        \red{39.5\%}&
        \!\!\!\red{0\%}\!\!\! &
        \red{0\%} &
         \!\!\!\red{.024}\!\!\!&
        \blue{1.16}\\

        \!\!\!\!MCE\!\!\!\! & 
        \red{63.3\%} &
        \!\!\!\red{0\%}\!\!\! &
        \red{0\%} &
         \!\!\!\red{.016}\!\!\! &
        \blue{1.40} & 
        \red{22\%}&
        \!\!\!\red{0\%}\!\!\! &
        \red{0\%} &
         \!\!\!\red{.014}\!\!\!&
        \blue{1.40}\\

        \!\!\!\!NNCE\!\!\!\! & 
        \red{88.9\%} &
        \!\!\!\red{22\%}\!\!\! &
        \blue{1\%} &
         \!\!\!\red{.029}\!\!\! &
        \blue{1.23} & 
        \red{35.9\%}&
        \!\!\!\red{0\%}\!\!\! &
        \blue{0\%} &
         \!\!\!\red{.053}\!\!\! &
        \blue{1.05}\\

        \hline

        \!\!\!\!ROAR\!\!\!\! & 
        \red{99.3\%} &
        \!\!\!\red{98\%}\!\!\! &
        \red{98\%} &
         \!\!\!\red{.199}\!\!\! &
        \blue{23.1} & 
        \red{100\%}&
        \!\!\!{100\%}\!\!\! &
        \red{100\%} &
         \!\!\!\red{.454}\!\!\!&
        \blue{6.94}\\

        \!\!\!\!RBR\!\!\!\! & 
        \red{100\%} &
        \!\!\!\red{62\%}\!\!\! &
        \blue{0\%} &
         \!\!\!\red{.034}\!\!\! &
        \red{1.55} & 
        \red{58.7\%}&
        \!\!\!\red{0\%}\!\!\! &
        \blue{0\%} &
         \!\!\!\red{.038}\!\!\!&
        \red{1.08}\\
        
        \!\!\!\!ST-CE\!\!\!\! & 
        100\% &
        \!\!\!92\%\!\!\! &
        \red{6\%} &
         \!\!\!\blue{.041}\!\!\! &
        \blue{1.10} & 
        100\%&
        \!\!\!40\%\!\!\! &
        \red{0\%} &
         \!\!\!\blue{.078}\!\!\!&
       \blue{1.04}\\
        
        \hline
                &
         \multicolumn{5}{c|}{target $\delta_{val}=0.02$} &
         \multicolumn{5}{c}{target $\delta_{val}=0.02$} \\
    
        \!\!\!\!GCE-R\!\!\!\! & 
        100\% &
        \!\!\!100\%\!\!\! &
        \red{0\%} &
         \!\!\!\blue{.032}\!\!\! &
        \blue{1.79} & 
        100\% &
        \!\!\!100\%\!\!\! &
        \red{0\%} &
         \!\!\!\blue{.049}\!\!\!&
       \blue{1.32}\\

        \!\!\!\!PROTO-R \!\!\!\! & 
        100\% &
        \!\!\!100\%\!\!\! &
        \red{0\%} &
         \!\!\!\blue{.036}\!\!\! &
        \blue{1.45} & 
        100\% &
        \!\!\!100\%\!\!\! &
        \red{11\%} &
         \!\!\!\blue{.079}\!\!\!&
       \blue{1.62}\\

        \!\!\!\!MCE-R \!\!\!\! & 
        100\% &
        \!\!\!100\%\!\!\! &
        \red{0\%} &
         \!\!\!\blue{.022}\!\!\! &
        \blue{1.48} & 
        100\% &
        \!\!\!100\%\!\!\! &
        \red{0\%} &
         \!\!\!\blue{.031}\!\!\!&
       \blue{1.94}\\
        
        \!\!\!\!\ouralgo{}-\texttt{FF} \!\!\!\! & 
        100\% &
        \!\!\!100\%\!\!\! &
        \red{7\%} &
         \!\!\!\red{.040}\!\!\! &
        \blue{1.07} & 
        100\% &
        \!\!\!100\%\!\!\! &
        \red{0\%} &
         \!\!\!\blue{.083}\!\!\!&
       \blue{1.04}\\

        \!\!\!\!\ouralgo{}-\texttt{FT} \!\!\!\! & 
        100\% &
        \!\!\!100\%\!\!\! &
        \red{0\%} &
         \!\!\!.035\!\!\! &
        1.36 & 
        100\% &
        \!\!\!100\%\!\!\! &
        \red{0\%} &
         \!\!\!.080\!\!\! &
        1.04\\
        
        \hline
           &
         \multicolumn{5}{c|}{target $\delta_{inc}=0.091$} &
         \multicolumn{5}{c}{target $\delta_{inc}=0.04$} \\
         
        \!\!\!\!GCE-R\!\!\!\! & 
        100\% &
        \!\!\!100\%\!\!\! &
        \red{100\%} &
         \!\!\!\blue{.053}\!\!\! &
        \blue{3.43} & 
        100\% &
        \!\!\!100\%\!\!\! &
        \red{100\%} &
         \!\!\!\blue{.109}\!\!\!&
       \blue{2.07}\\

        \!\!\!\!PROTO-R \!\!\!\! & 
        100\% &
        \!\!\!100\%\!\!\! &
        \red{100\%} &
         \!\!\!\blue{.118}\!\!\! &
        \blue{2.49} & 
        100\% &
        \!\!\!100\%\!\!\! &
        \red{100\%} &
         \!\!\!\blue{.163}\!\!\!&
       \blue{2.41}\\

        \!\!\!\!MCE-R \!\!\!\! & 
        100\% &
        \!\!\!100\%\!\!\! &
        \red{100\%} &
         \!\!\!\blue{.032}\!\!\! &
        \blue{1.86} & 
        100\% &
        \!\!\!100\%\!\!\! &
        \red{100\%} &
         \!\!\!\blue{.049}\!\!\!&
       \blue{3.04}\\
        
        \!\!\!\!\ouralgo{}-\texttt{FF} \!\!\!\! & 
        100\% &
        \!\!\!100\%\!\!\! &
        \red{100\%} &
         \!\!\!\red{.084}\!\!\! &
        \blue{1.22} & 
        100\% &
        \!\!\!100\%\!\!\! &
        \red{100\%} &
         \!\!\!\blue{.150}\!\!\!&
       \blue{1.13}\\

        \!\!\!\!\ouralgo{}-\texttt{FT} \!\!\!\! & 
        100\% &
        \!\!\!100\%\!\!\! &
        \red{100\%} &
         \!\!\!.084\!\!\! &
        1.22 & 
        100\% &
        \!\!\!100\%\!\!\! &
        \red{100\%} &
         \!\!\!.150\!\!\! &
        1.13\\ \hline
    \end{tabular}
    }
    \caption{Quantitative evaluation of the compared CE generation methods on neural networks. The $\uparrow$ ($\downarrow$) following each metric indicates the higher (lower) the value, the better. Methods are separated by horizontal lines, indicating non-robust baselines, robust baselines, and our methods with robustness target $\Delta$ instantiated by $\delta_{val}$ and $\delta_{inc}$, respectively.} 
    \label{tab:results2-NN}
\end{table*}

Next, we rigorously benchmark the performance of our CE generation methods against various robust and non-robust baselines. Apart from the CE methods used in Section~\ref{ssec:experiments_delta_robustness_plots}, we additionally include NNCE \cite{NiceNNCE}, \textit{RBR} \cite{nguyen2022robust}, and \textit{ST-CE} \cite{DBLP:conf/icml/HammanNMMD23}. \revision{We refer {the reader} to Section~\ref{ssec:related-robustcounterfactuals} for their details.}\footnote{\revision{We only included methods which either have open-source implementations or could be re-implemented {without requiring excessive effort}}.} We also instantiate RNCE-\texttt{FT} as one of our methods. The properties of all compared methods are summarised in Table~\ref{tab:baselines}.

For each dataset, we randomly select 20 test points from the test set to generate CEs using each method. We repeat the process five times with different random seeds and report the mean and standard deviation of the results. The CEs are evaluated against three aspects using the standard metrics in the literature. For proximity, we calculate the average \FL{\emph{$\ell_1$ distance}} between the test input and its CE, which captures both closeness of CEs and sparsity of changes \cite{Wachter_17}. For plausibility, we report the average local outlier factor score \emph{lof} \cite{lof} which quantifies the local data density. A {lof} score close to value 1 indicates an inlier; the more it deviates from 1, the less plausible the CE. For robustness, we report validity after retraining \textit{vr}, i.e. the percentage of CEs correctly classified to class 1, under 15 retrained classifiers using respectively complete retraining, leave-one-out retraining, and incremental retraining (with 10\% new data). We use the same $\delta_{val}$ and $\delta_{inc}$ \JJ{obtained from Section~\ref{ssec:experiments_identify_delta_values} to instantiate $\Delta$ with different model shift magnitudes} and report the respective $\Delta$-robustness, termed \textit{v$\Delta_{val}$} and \textit{v$\Delta_{inc}$}.

Table~\ref{tab:results2-NN} reports the mean results for neural network classifiers of the benchmarking study. See \ref{app:full_experiment_results} for the standard deviation results and the evaluations for logistic regression classifiers. Next, we analyse the results by their properties, and for our methods, we first consider the results when targeting $\delta_{val}$.

\paragraph{Our methods produce the most robust CEs} \JJ{Our RNCE algorithm (both configurations) generates} the most robust CEs among the compared methods, showing 100\% vr and 100\% targeted $\Delta$-validity. For the robustified methods using Algorithm~\ref{alg:algo1}, MCE-R is the most robust, finding perfectly $\Delta$-robust and 100\% empirically robust CEs in most experiments. As discussed in Section~\ref{ssec:experiments_delta_robustness_plots}, the limited search space might be the cause of the reduced robustness for GCE-R and PROTO-R in two datasets. As a result, when compared with the robust baselines, RNCE and MCE-R both give better robustness than ROAR, ST-CE, and RBR. All robust methods have better robustness performances than the non-robust baselines, as expected.

\paragraph{Cost-robustness tradeoff} This tradeoff has been discussed in several other studies \cite{upadhyay2021towards,DBLP:conf/iclr/PawelczykDHKL23}, which we have also empirically observed. The non-robust baselines always find the most proximal CEs (lowest $\ell1$ costs). Apart from them, the next best proximity results were obtained by MCE-R (when targeting the smaller $\delta_{val}$) among all the robust methods. Considering that MCE-R also finds near-perfectly $\Delta$-robust CEs, it can be concluded that MCE-R \JJ{effectively} balances the cost-robustness tradeoff. RBR also finds CEs with low costs, but their method is not as robust. Similar remarks can be made for ST-CE as this method moves more towards the robustness end of the tradeoff. Our methods GCE-R, PROTO-R, RNCE demonstrate similar $\ell1$ costs. Setting $\texttt{optimal=True}$ in our RNCE algorithm slightly improves the proximity, as can be seen when comparing RNCE-$\texttt{FF}$ and RNCE-$\texttt{FT}$. ROAR \JJ{results in CEs with high $\ell1$ cost, and the method has been identified as being }overly costly \cite{DBLP:conf/iclr/NguyenBN23} due to their gradient-based robust optimisation procedure.

\paragraph{Plausibility results} ST-CE \FL{has} the best lof scores among all the methods, \FL{followed by RNCE which yields comparable results}. This is because these two methods select CEs from in-manifold dataset points and are thus unlikely to return outliers. By inherently addressing data density estimation, RBR also finds plausible CEs. The plausibility performances of our three robustified methods and ROAR are not as good due to less regulated search spaces or lack of plausibility constraints, with ROAR having the worst lof results.

\paragraph{The effects of robustification} For Algorithm~\ref{alg:algo}, robustifying GCE, PROTO, and MCE resulted in improved empirical and $\Delta-$robustness, but this negatively affected the proximity and plausibility results. Targeting larger-magnitude plausible model shifts (instantiated with $\delta_{inc}$) pushes these tradeoffs further. For RNCE, similar trends can be identified when compared with NNCE, but in most cases, the lof score improves. When targeting $\delta_{inc}$ instead of $\delta_{val}$, the cost also increases together with robustness, but plausibility stays comparable.

\paragraph{Concluding remarks} From the \JJ{analysis above}, we can conclude that MCE-R achieved the best robustness-cost tradeoff, finding the most proximal CEs among the robust baselines while showing near-perfect robustness results, outperforming all robust baselines. RNCE, on the other hand, finds CEs with even stronger robustness guarantees and great plausibility, at slightly higher costs. However, less costly methods than RNCE are not as robust.


\subsection{\revision{Computation time analysis}}
\label{ssec:comptime}

\revision{In this section, we discuss the computation time required by each method to obtain the results in Section~\ref{ssec:experiments_benchmarking}. The average computation times for generating CEs for 20 test points are presented in Table~\ref{tab:runtime-all}.}

\revision{While model complexity impacts all methods, the runtimes of gradient- and MILP-based optimisation methods {that are} not wrapped in an iterative approach (GCE, PROTO, MCE, RBR) are mostly sensitive to the number of attributes in the dataset, and those iteratively computed for robustness (ROAR, GCE-R, PROTO-R, MCE-R) are also easily affected by their hyperparameters concerning robustness. The methods 
{that require} traversing (part of) the training dataset (NNCE, RNCE, ST-CE) are sensitive to the number of data points in each dataset. }
\begin{table*}[ht!]
    \centering
    \resizebox{0.85\columnwidth}{!}{
    \begin{tabular}{cccccc}

        \cline{1-6}
        \textbf{Model}& \textbf{Method} & 
        \textbf{adult}&  
        \textbf{compas} &
        \textbf{gmc}&
        \textbf{heloc} \\
        
        \hline

         \multirow{6}{4.5em}{Logistic Regression} 

         &\!\!\! NNCE \!\!\! 
         & 0.039 & 0.024 & 0.636 & 0.003 \\

         &\!\!\! ROAR 
         & 11.70 & 11.49 & 4.958 & 13.71 \\
        \cline{2-6}
         &\!\!\! RNCE-\texttt{FF} $\delta_{val}$ \!\!\! 
         & 0.410 & 0.195 & 0.755 & 27.71 \\

         &\!\!\! RNCE-\texttt{FF} $\delta_{val}$ \!\!\! 
         & 1.554 & 1.035 & 1.722 & 29.75 \\
        \cline{2-6} 
         &\!\!\! RNCE-\texttt{FF} $\delta_{inc}$ \!\!\! 
         & 0.627 & 0.948 & 0.754 & 1.534 \\

         &\!\!\! RNCE-\texttt{FF} $\delta_{inc}$ \!\!\! 
         & 1.779 & 2.061 & 1.681 & 3.522 \\
         \hline
         
         \multirow{17}{5em}{Neural Network} 
         
         &\!\!\! GCE \!\!\! 
         & 27.23 & 29.06 & 25.15 & 25.08 \\

         &\!\!\! PROTO \!\!\! 
         & 604.3 & 555.8 & 542.2 & 607.6 \\

         &\!\!\! MCE \!\!\! 
         & 0.934 & 0.340 & 0.163 & 1.221 \\

         &\!\!\! NNCE \!\!\! 
         & 0.095 & 0.018 & 0.514 & 0.039 \\

         &\!\!\! ROAR \!\!\! 
         & 20.03 & 2.535 & 1.740 & 2.007 \\

         &\!\!\! RBR \!\!\! 
         & 187.2 & 118.4 & 112.2 & 110.5 \\

         &\!\!\! ST-CE \!\!\! 
         & 0.579 & 1.29 & 0.347 & 1.419 \\

        \cline{2-6}
        
         &\!\!\! GCE-R $\delta_{val}$ \!\!\! 
         & 101.7 & 56.97 & 86.32 & 67.33 \\

         &\!\!\! PROTO-R $\delta_{val}$ \!\!\! 
         & 1602 & 756.9 & 690.5 & 965.2 \\

         &\!\!\! MCE-R $\delta_{val}$\!\!\! 
         & 18.77 & 2.364 & 1.281 & 13.05 \\

         &\!\!\! RNCE-\texttt{FF} $\delta_{val}$ \!\!\! 
         & 15.64 & 4.125 & 3.447 & 32.76 \\

         &\!\!\! RNCE-\texttt{FT} $\delta_{val}$ \!\!\!  
         & 27.96 & 6.744 & 5.307 & 43.50 \\

         \cline{2-6} 

         &\!\!\! GCE-R $\delta_{inc}$ \!\!\! 
         & 155.2 & 117.08 & 116.01 & 120.9 \\

         &\!\!\! PROTO-R $\delta_{inc}$ \!\!\! 
         & 1920 & 1824 & 1625 & 1469 \\

         &\!\!\! MCE-R $\delta_{inc}$\!\!\! 
         & 37.45 & 9.152 & 2.771 & 19.43 \\

         &\!\!\! RNCE-\texttt{FF} $\delta_{inc}$ \!\!\! 
         & 218.3 & 48.07 & 163.2 & 670.6 \\

         &\!\!\! RNCE-\texttt{FT} $\delta_{inc}$ \!\!\!  
         & 229.9 & 51.51 & 165.5 & 681.6 \\
         \hline

         

    \end{tabular}
    }
    \caption{\revision{Computation time (in seconds) of the compared methods on logistic regression models and neural networks. The adult dataset has 48832 data points with 13 attributes, the compas dataset has 6172 data points with 7 attributes, the gmc dataset has 115527 data points with 10 attributes, the heloc dataset has 9871 data points and 21 attributes.}} 
    \label{tab:runtime-all}
\end{table*}

\revision{
MCE-R is the fastest among the methods we devised, showing better runtime than ROAR and RBR when targeting {a} smaller validation $\delta$, and producing close CEs with robustness guarantees. On the other hand, our PROTO-R method is the slowest, while {it} also {fails to give} the best CEs in terms of the desired properties (Table~\ref{tab:results2-NN}). Similar remarks can be made for GCE-R, which has comparable runtime as RBR. These three robustified methods (GCE-R, PROTO-R, MCE-R) largely inherit the runtime of their respective base method (GCE, PROTO, MCE). For RNCE, the number of attributes and model complexity largely impact the time required to compute the solution for a single MILP program, while the number of dataset points and targeted $\delta$ value jointly impact the number of MILP programs that need to be checked. Therefore, the RNCE runtime varies to a large extent in our experiments. On logistic regression classifiers, our approach is faster than the other robust baseline, ROAR. On neural networks, for {a} smaller validation $\delta$, the runtimes are in the same order of magnitude as ROAR and are much faster than RBR, but they deteriorate when switching to {a} larger $\delta_{inc}$. }

\revision{We conclude that our best performing methods, MCE-R and RNCE, when targeting the $\delta$ values found by the validation set method, are able to find better-quality and more robust CEs than the robust baselines while demonstrating comparable runtimes. When trying to achieve robustness guarantees for higher $\delta$ values, worse runtimes can be observed. This result is expected, given that proving robustness with respect to model changes has been shown to be an NP-complete problem~\cite{marzari2024}.}

\subsection{Multi-Class Classification}
\label{ssec:experiments_multi_class}

Next, we demonstrate the applicability of our RNCE method to multi-class classification settings, which is not supported by any robust CE generation baselines listed in Table~\ref{tab:baselines}. We use the Iris dataset \cite{anderson1936species,fisher1936use}, a small-scale dataset for three-class classification tasks, and the California Housing dataset \cite{pace1997sparse}, in which we transform the regression labels into three classes. \JJ{Both datasets are available in sklearn \cite{scikit-learn}}. We trained \FL{two} neural network models with cross validation accuracy of 0.93 and 0.70, respectively. Again, we randomly select (five times with different random seeds) 20 test instances from the test sets which are classified to class 0, and we specify a desired label 2 for CEs. Following \FL{the same} experimental \FL{protocol described} in Section~\ref{ssec:experiments_benchmarking}, we use the validation set strategy to identify $\delta$ values as our targeted plausible model changes magnitude, and we report the same evaluation metrics for NNCE and RNCE.

\begin{table*}[ht!]
    \centering
    \resizebox{0.95\columnwidth}{!}{
    \begin{tabular}{cccccc}

        \cline{1-6}
        \textbf{Dataset}& \textbf{Method} & 
        \textbf{vr}$\uparrow$&  
        \textbf{v$\Delta_{val}$$\uparrow$} &
        $\ell_1$$\downarrow$ &
        \textbf{lof$\downarrow$}  \\

        \cline{1-6}
         \multirow{3}{5.2em}{\textbf{iris\\ $\delta_{val}=0.015$}} & \!\!\!\!NNCE\!\!\!\! & 
        \red{75\%$\pm.004$} &
        \!\!\!\red{0\%$\pm.0$}\!\!\! &
         \!\!\!\red{.393$\pm.002$}\!\!\! &
        \blue{1.48$\pm.026$} \\

        & \!\!\!\!\ouralgo{}-\texttt{FF} \!\!\!\! & 
        100\%$\pm.0$ &
        \!\!\!100\%$\pm.0$\!\!\! &
         \!\!\!\red{.438$\pm.003$}\!\!\! &
        \blue{1.50$\pm.002$} \\

        & \!\!\!\!\ouralgo{}-\texttt{FT} \!\!\!\! & 
        100\%$\pm.0$ &
        \!\!\!100\%$\pm.0$\!\!\! &
         \!\!\!\red{.438$\pm.003$}\!\!\! &
        \blue{1.50$\pm.002$}\\ \hline

         \multirow{3}{6em}{\textbf{housing\\ $\delta_{val}=0.040$}} & \!\!\!\!NNCE\!\!\!\! & 
        \red{18.1\%$\pm.035$}&
        \!\!\!\red{0\%$\pm.0$}\!\!\! &
         \!\!\!\red{.032$\pm.003$}\!\!\! &
        \blue{1.10$\pm.020$} \\

        & \!\!\!\!\ouralgo{}-\texttt{FF} \!\!\!\! & 
        100\%$\pm.0$ &
        \!\!\!100\%$\pm.0$\!\!\! &
         \!\!\!\red{.053$\pm.004$}\!\!\! &
        \blue{1.33$\pm.037$}\\

        & \!\!\!\!\ouralgo{}-\texttt{FT} \!\!\!\! & 
        100\%$\pm.0$ &
        \!\!\!100\%$\pm.0$\!\!\! &
         \!\!\!\red{.052$\pm.004$}\!\!\! &
        \blue{1.31$\pm.035$}\\ \hline
        
    \end{tabular}
    }
    \caption{Quantitative evaluation of the compared methods on neural networks in multi-class classification tasks. The evaluation metrics are the same as in Table~\ref{tab:results2-NN}.} 
    \label{tab:results-multiclass}
\end{table*}

Table~\ref{tab:results-multiclass} presents \FL{the results we obtained}. Similarly to the results for binary classification, the CEs computed by the NNCE method are not robust against realistically retrained models (indicated by the low \textbf{vr}). Measured by the $\Delta$-validity, they also fail to satisfy the $\Delta$-robustness tests for multi-class classifications. With a slight tradeoff with \FL{$\ell_1$} costs and lof scores, our two configurations of RNCE are both able to find perfectly robust CEs in this study. This demonstrates that the $\Delta$-robustness notion can be used to evaluate provable robustness guarantees for CEs, and our proposed algorithms succeed at finding more robust CEs in the multi-class classification setting.

\section{Discussion and Future Work}
\label{sec:conclusion}
Despite the recent advances in achieving robustness against model changes for CEs, state-of-the-art lack rigorous robustness guarantees on the CEs they produce. By introducing a formal robustness notion, $\Delta$-robustness, and \JJ{a novel interval abstraction technique}, we provided the first method in the literature to obtain CEs with certified robustness guarantees. We showed how such $\Delta$-robustness can be practically tested in binary and multi-class classification settings by solving optimisation problems via MILP. Furthermore, we proposed two algorithms to generate CEs that are provably $\Delta$-robust. To demonstrate how our methods can be used in practice, two strategies for identifying the appropriate hyperparameters in our method have been investigated, linking to three model retraining strategies. We then presented an extensive empirical evaluation involving eleven algorithms for generating CEs, including \revision{seven} algorithms specifically designed to generate robust explanations. Our results show that our MCE-R algorithm finds CEs with the lowest costs along with the best robustness results, and our \ouralgo{} outperforms existing approaches and is able to generate CEs that are both provably robust and plausible, achieving a balance in the robustness-cost and plausibility-cost tradeoffs. \revision{We also compared the runtimes and observed that our best-performing methods have comparable runtimes with the robust baselines when targeting small $\delta$ values.} We see these outcomes as important contributions towards complementing existing formal approaches for XAI and making them applicable in practice. 

\revision{Despite the positive results obtained here, we also identified a number of limitations of our approach. Firstly, white-box access to both the training dataset and the classifiers is required. As a result, our approach is {primarily aimed at expert users, e.g.} model developers. Secondly, our method only works for parametric machine learning classifiers whose forward pass can be precisely represented by a MILP program. We have considered two examples in this paper, but other models could be considered. Thirdly, the computation time for our method, when targeting high $\delta$ values on complex models and large datasets, can be high as {that for} providing exact robustness guarantees for CEs under model changes{, which} is an NP-complete problem~\cite{marzari2024}. Nevertheless, in this work we demonstrated that our method leads to better-quality CEs than existing methods.}


This work opens up several promising avenues for future work. One such direction is to investigate relaxed forms of robustness, e.g., when the output intervals in $\abst{\theta}{\Delta}$ for different classes overlap, with similar interval abstraction techniques. In this work, we considered the deterministic robustness guarantees, aiming at ensuring CEs' validity against even worst-case model parameter perturbations encoded by $\Delta$. However, we observed that this notion is conservative in that the worst-case perturbations might not always occur in realistic model retraining. One potential way to overcome this is to determine the hyperparameter $\delta$ which controls the magnitude of model changes in $\Delta$ using a validation set, as illustrated in Section~\ref{ssec:experiments_identify_delta_values}. However, that requires additional computational efforts. Therefore, probabilistic relaxations of our $\Delta$-robustness notion allowing more fine-grained analyses would be valuable. 

Another possibility is to investigate the $\Delta$-robustness of CEs in a causal setting. CEs which conform to some structural causal models are usually within data distribution, and they reflect the true causes of predictions, making causality a desirable property \cite{DBLP:conf/nips/KarimiKSV20,DBLP:conf/fat/KarimiSV21}. It has also been found that plausible CEs are likely to be more robust against model shifts \cite{pawelczyk2020counterfactual}. Therefore, there could be intrinsic links between causality and robustness. \citet{Dominguez-Olmedo_22} have proposed a framework to compute CEs robust to noisy executions under a causal setting. It would be \JJ{interesting} to also investigate the interplay of $\Delta$-robustness and causality to facilitate the development of higher-quality CEs.

Finally, it would be possible to apply similar interval abstraction techniques to provide robustness guarantees for other forms of CE robustness. In our $\Delta$-robustness tests, a fixed-value CE is fed into the interval-valued abstraction of a classification model. Intuitively, if \JJ{instead an interval-valued CE is passed as input into }a fixed-valued classification model, it would be similar to reasoning about the robustness of CEs against noisy executions. Going further, \JJ{it would be interesting to} investigate training techniques incorporating $\Delta$-robustness notions to train models which can produce robust CEs by plain gradient-based optimisation methods.

\section*{Acknowledgements}
Jiang, Rago and Toni were partially funded by J.P. Morgan and by the Royal Academy of Engineering under the Research Chairs and Senior Research Fellowships scheme. 
Leofante was funded by Imperial College London through under the Imperial College Research Fellowship scheme. 
Rago and Toni were partially funded by the European Research Council (ERC) under the European Union’s Horizon 2020 research and innovation programme (grant agreement No. 101020934). 
Any views or opinions expressed herein are solely those of the authors listed.
 
\bibliographystyle{style} 
\bibliography{refs}

\newpage
\clearpage

\appendix
\section{Computation Time of RNCE and the Impact of \texttt{robustInit}}
\label{app:a_robustinit_computation_time}
Since querying for the nearest neighbour from a k-d tree is in logarithm time complexity wrt the tree size, the computation time bottleneck of RNCE is the total time for the more complex $\Delta$-robustness tests. Therefore, the computation time of \ouralgo{} mainly depends on two factors, the time required for each $\Delta$-robustness test, and the number of $\Delta$-robustness tests performed in total. For the former factor, each $\Delta$-robustness test is a MILP program whose problem size, affected by the number of attributes in the dataset and the architecture of the classifier in our setting, determines its computation time. The latter factor, however, is directly controlled by the algorithm parameter, \texttt{robustInit}. When  \texttt{robustInit}$=$\texttt{F}, the number of $\Delta$-robustness tests is the product of the number of inputs for which CEs are generated and the average number of times querying for the next nearest neighbour (Alg.3, lines 4-6) before reaching a $\Delta$-robust CE, which is further affected by the value of $\delta$ constructing $\Delta$. When \texttt{robustInit}$=$\texttt{T}, the number of $\Delta$-robustness tests is upper-bounded by the dataset size. 

We empirically compare the computation time when \texttt{robustInit}$=$\texttt{T} with \texttt{robustInit}$=$\texttt{F}. Specifically, since the line search controlled by the parameter \texttt{optimal} (Alg.~\ref{alg:algo3}, lines 7-10) is independent of the impact of \texttt{robustInit}, we report results for \texttt{optimal}$=$\texttt{F}, i.e., \ouralgo{}-\texttt{FF} and \ouralgo{}-\texttt{TF}, under two different settings, in Tables~\ref{tab:timeforrobustinit100} and \ref{tab:timeforrobustinitall}. 

\begin{table}[h]
    \centering
    \resizebox{1\columnwidth}{!}{
    \begin{tabular}{c|cc|cc|cc|cc}
        \hline 
        \multirow{2}{*}{\textbf{configuration}} & 
        \multicolumn{2}{c}{\textbf{adult}} &
        \multicolumn{2}{|c}{\textbf{compas}} & 
        \multicolumn{2}{|c}{\textbf{gmc}} & 
        \multicolumn{2}{|c}{\textbf{heloc}}\\ 
         & 
        NN & 
        LR & 
        NN & 
        LR &
        NN & 
        LR & 
        NN & 
        LR\\
        \hline

        \ouralgo{}-$\texttt{F}\texttt{F}$ &
        7.69 & 
        0.47 & 
        11.47 & 
        4.34 &
        4.33 & 
        0.41 &
        55.83 & 
        0.49\\
        \ouralgo{}-$\texttt{T}\texttt{F}$ incl. time for obtaining $\tree$ &
        59.55 & 
        3.84 & 
        20.21 & 
        1.77 &
        268.35 & 
        43.20 &
        44.23 & 
        3.17\\
        \ouralgo{}-$\texttt{T}\texttt{F}$ excl. time for obtaining $\tree$ &
        0.002 & 
        0.002 & 
        0.001 & 
        0.001 &
        0.008 & 
        0.007 &
        0.001 & 
        0.002\\
        
        \hline
    \end{tabular}
    }
    \caption{\ouralgo{} computation time {(seconds)} of generating robust CEs for 100 randomly selected inputs using neural networks (NN) or logistic regressions (LR), when configured with different \texttt{robustInit} {settings}.}
    \label{tab:timeforrobustinit100}
\end{table}

\begin{table}[h]
    \centering
    \resizebox{1\columnwidth}{!}{
    \begin{tabular}{c|cc|cc|cc|cc}
        \hline 
        \multirow{2}{*}{Configuration} & 
        \multicolumn{2}{c}{\textbf{adult} (15500)} &
        \multicolumn{2}{|c}{\textbf{compas} (200)} & 
        \multicolumn{2}{|c}{\textbf{gmc} (700)} & 
        \multicolumn{2}{|c}{\textbf{heloc} (2000)}\\ 
         & 
        NN & 
        LR & 
        NN & 
        LR &
        NN & 
        LR & 
        NN & 
        LR\\
        \hline
        \ouralgo{}-$\texttt{F}\texttt{F}$ &
        1306 & 
        59.55 & 
        27.16 & 
        5.63 &
        42.44 & 
        1.22 &
        1052 & 
        9.79\\
        \ouralgo{}-$\texttt{T}\texttt{F}$ incl. time for obtaining $\tree$ &
        59.75 & 
        4.15 & 
        20.21 & 
        1.77 &
        268.42 & 
        43.21 &
        44.24 & 
        3.19\\
        \ouralgo{}-$\texttt{T}\texttt{F}$ excl. time for obtaining $\tree$ &
        0.21 & 
        0.31 & 
        0.001 & 
        0.001 &
        0.08 & 
        0.02 &
        0.01 & 
        0.02\\
        \hline
    \end{tabular}
    }
    \caption{\ouralgo{} computation time {(seconds)} of generating robust CEs for all data points in the training dataset that are classified to class 0 using neural networks (NN) or logistic regressions (LR), when configured with different \texttt{robustInit} {settings}. The numbers following the dataset names represent the approximate number of inputs for each dataset.}
    \label{tab:timeforrobustinitall}
\end{table}


The results empirically support our analysis. The computation times of both RNCE-\texttt{TF} and RNCE-\texttt{FF} for neural networks are much higher than those for logistic regressions due to the fact that neural networks are more complex in terms of model architecture. The computation time of RNCE-\texttt{FF} tends to be lower than RNCE-\texttt{TF} when the number of inputs is smaller because fewer $\Delta$-robustness tests are required. When \texttt{robustInit}$=$\texttt{F}, the time is almost proportional to the number of inputs. When \texttt{robustInit}$=$\texttt{T}, the computation times are almost identical regardless of the number of inputs because building the k-d tree $\tree$ is more time-consuming than querying for CEs. Our \ouralgo{}-$\texttt{T}\texttt{F}$ excl. time for obtaining $\tree$ results show that, when \texttt{robustInit}$=$\texttt{T}, after obtaining $\tree$, the time needed for querying CEs is almost negligible, since for every input, the first query will be the closest $\Delta$-robust CE.

\section{Full Experiment Results}
\label{app:full_experiment_results}

\begin{table*}[h!]
    \centering
    \resizebox{0.85\columnwidth}{!}{
    \begin{tabular}{cccccc|ccccc}

        \cline{2-11}
        & 
        \textbf{vr}&  
        \textbf{v$\Delta_{val}$}&
        \!\!\!\textbf{v$\Delta_{inc}$}\!\!\!&
        $\ell_1$&
        \!\!\! \textbf{lof} \!\!\!&
        
        \textbf{vr}&  
        \textbf{v$\Delta_{val}$} &
        \!\!\!\textbf{v$\Delta_{inc}$}\!\!\!&
        $\ell_1$ &
        \!\!\! \textbf{lof} \!\!\! \\

        \cline{2-11}
        &
         \multicolumn{5}{c|}{\textbf{adult}} &
         \multicolumn{5}{c}{\textbf{compas}} \\
        
        \hline
        
        \hline
        \!\!\!\!GCE\!\!\!\! & 
        \red{.223} &
        \!\!\!\red{0}\!\!\! &
        \red{0} &
         \!\!\!\red{.003}\!\!\! &
        \blue{.036} & 
        \red{.011} &
        \!\!\!\red{0}\!\!\! &
        \red{0} &
         \!\!\!\red{.007}\!\!\! &
        \blue{.697}\\

        \!\!\!\!PROTO\!\!\!\! & 
        \red{.178} &
        \!\!\!.020\!\!\! &
        \red{0} &
         \!\!\!\red{.002}\!\!\! &
        \blue{.089} & 
        \red{.172} &
        \!\!\!\red{.055}\!\!\! &
        \red{0} &
         \!\!\!\red{.062}\!\!\! &
        \blue{.192}\\

        \!\!\!\!MCE\!\!\!\! & 
        \red{.201} &
        \!\!\!\red{0}\!\!\! &
        \red{0} &
         \!\!\!\red{.001}\!\!\! &
        \blue{.063} & 
        \red{.126} &
        \!\!\!\red{0}\!\!\! &
        \red{0} &
         \!\!\!\red{.006}\!\!\! &
        \blue{.047}\\

        \!\!\!\!NNCE\!\!\!\! & 
        \red{.139} &
        \!\!\!\red{.024}\!\!\! &
        \blue{.024} &
         \!\!\!\red{.004}\!\!\! &
        \blue{.114} & 
        \red{.123} &
        \!\!\!\red{.024}\!\!\! &
        \red{0} &
         \!\!\!\red{.003}\!\!\! &
        \blue{.057}\\

        \hline

        \!\!\!\!ROAR\!\!\!\! & 
        \red{0} &
        \!\!\!\red{0}\!\!\! &
        \red{.065} &
         \!\!\!\red{.208}\!\!\! &
        \blue{4.187} & 
        \red{0} &
        \!\!\!\red{0}\!\!\! &
        \red{.033} &
         \!\!\!\red{.033}\!\!\! &
        \blue{.574}\\

        \!\!\!\!RBR\!\!\!\! & 
        \red{.092} &
        \!\!\!\red{0}\!\!\! &
        \blue{0} &
         \!\!\!\red{.002}\!\!\! &
        \red{.051} & 
        \red{.008} &
        \!\!\!\red{.024}\!\!\! &
        \red{0} &
         \!\!\!\red{.003}\!\!\! &
        \blue{.170}\\
        
        \!\!\!\!ST-CE\!\!\!\! & 
        .014 &
        \!\!\!.071\!\!\! &
        \red{.020} &
         \!\!\!\blue{.008}\!\!\! &
        \blue{.087} & 
        \red{.003} &
        \!\!\!\red{.128}\!\!\! &
        \red{0} &
         \!\!\!\red{.003}\!\!\! &
        \blue{.057}\\
        
        \hline
                &
         \multicolumn{5}{c|}{target $\delta_{val}=0.02$} &
         \multicolumn{5}{c}{target $\delta_{val}=0.02$} \\
    
        \!\!\!\!GCE-R\!\!\!\! & 
        0 &
        \!\!\!0\!\!\! &
        \red{0} &
         \!\!\!\blue{.007}\!\!\! &
        \blue{.079} & 
        \red{.041} &
        \!\!\!\red{.040}\!\!\! &
        \red{0} &
         \!\!\!\red{.006}\!\!\! &
        \blue{.664}\\

        \!\!\!\!PROTO-R \!\!\!\! & 
        0&
        \!\!\!.203\!\!\! &
        \red{0} &
         \!\!\!\blue{.010}\!\!\! &
        \blue{.119} & 
        \red{.037} &
        \!\!\!\red{.037}\!\!\! &
        \red{0} &
         \!\!\!\red{.008}\!\!\! &
        \blue{.299}\\

        \!\!\!\!MCE-R \!\!\!\! & 
        0&
        \!\!\!0\!\!\! &
        \red{0} &
         \!\!\!\blue{.001}\!\!\! &
        \blue{.099} & 
        \red{.002} &
        \!\!\!\red{0}\!\!\! &
        \red{0} &
         \!\!\!\red{.003}\!\!\! &
        \blue{.161}\\
        
        \!\!\!\!\ouralgo{}-\texttt{FF} \!\!\!\! & 
        0&
        \!\!\!0\!\!\! &
        \red{.020} &
         \!\!\!\blue{.005}\!\!\! &
        \blue{.021} & 
        \red{0} &
        \!\!\!\red{0}\!\!\! &
        \red{0} &
         \!\!\!\red{.003}\!\!\! &
        \blue{.062}\\

        \!\!\!\!\ouralgo{}-\texttt{FT} \!\!\!\! & 
        0 &
        \!\!\!0\!\!\! &
        \red{0} &
         \!\!\!.003\!\!\! &
        .045 & 
        \red{0} &
        \!\!\!\red{0}\!\!\! &
        \red{0} &
         \!\!\!\red{.003}\!\!\! &
        \blue{.060}\\
        
        \hline
           &
         \multicolumn{5}{c|}{target $\delta_{inc}=0.061$} &
         \multicolumn{5}{c}{target $\delta_{inc}=0.079$} \\
         
        \!\!\!\!GCE-R\!\!\!\! & 
        0 &
        \!\!\!0\!\!\! &
        \red{0} &
         \!\!\!\blue{.008}\!\!\! &
        \blue{.130} & 
        \red{.040} &
        \!\!\!\red{.040}\!\!\! &
        \red{.051} &
         \!\!\!\red{.005}\!\!\! &
        \blue{.565}\\

        \!\!\!\!PROTO-R \!\!\!\! & 
        0&
        \!\!\!0\!\!\! &
        \red{.091} &
         \!\!\!\blue{.053}\!\!\! &
        \blue{.356} & 
        \red{.037} &
        \!\!\!\red{.037}\!\!\! &
        \red{.058} &
         \!\!\!\red{.009}\!\!\! &
        \blue{.244}\\

        \!\!\!\!MCE-R \!\!\!\! & 
        0&
        \!\!\!0\!\!\! &
        \red{0} &
         \!\!\!\blue{.005}\!\!\! &
        \blue{.054} & 
        \red{0} &
        \!\!\!\red{0}\!\!\! &
        \red{0} &
         \!\!\!\red{.004}\!\!\! &
        \blue{.235}\\
        
        \!\!\!\!\ouralgo{}-\texttt{FF} \!\!\!\! & 
        0&
        \!\!\!0\!\!\! &
        \red{0} &
         \!\!\!\blue{.006}\!\!\! &
        \blue{.168} & 
        \red{0} &
        \!\!\!\red{0}\!\!\! &
        \red{0} &
         \!\!\!\red{.004}\!\!\! &
        \blue{.086}\\

        \!\!\!\!\ouralgo{}-\texttt{FT} \!\!\!\! & 
        0 &
        \!\!\!0\!\!\! &
        \red{0} &
         \!\!\!.011\!\!\! &
        .085 & 
        \red{0} &
        \!\!\!\red{0}\!\!\! &
        \red{0} &
         \!\!\!\red{.004}\!\!\! &
        \blue{.086}\\
        
        \cline{2-11} &
         \multicolumn{5}{c|}{\textbf{gmc}} &
         \multicolumn{5}{c}{\textbf{heloc}} \\
        
        \hline
        \!\!\!\!GCE\!\!\!\! & 
        \red{.003} &
        \!\!\!\red{0}\!\!\! &
        \red{0} &
         \!\!\!\red{.005}\!\!\! &
        \blue{.064} & 
        \red{.042} &
        \!\!\!\red{0}\!\!\! &
        \red{0} &
         \!\!\!\red{.002}\!\!\! &
        \blue{.045}\\

        \!\!\!\!PROTO\!\!\!\! & 
        \red{.012} &
        \!\!\!.020\!\!\! &
        \red{0} &
         \!\!\!\red{.004}\!\!\! &
        \blue{.057} & 
        \red{.052} &
        \!\!\!\red{0}\!\!\! &
        \red{0} &
         \!\!\!\red{.002}\!\!\! &
        \blue{.036}\\

        \!\!\!\!MCE\!\!\!\! & 
        \red{.032} &
        \!\!\!\red{0}\!\!\! &
        \red{0} &
         \!\!\!\red{.003}\!\!\! &
        \blue{.103} & 
        \red{.053} &
        \!\!\!\red{0}\!\!\! &
        \red{0} &
         \!\!\!\red{.001}\!\!\! &
        \blue{.034}\\

        \!\!\!\!NNCE\!\!\!\! & 
        \red{.027} &
        \!\!\!\red{.068}\!\!\! &
        \blue{.020} &
         \!\!\!\red{.003}\!\!\! &
        \blue{.031} & 
        \red{.053} &
        \!\!\!\red{0}\!\!\! &
        \red{0} &
         \!\!\!\red{.003}\!\!\! &
        \blue{.011}\\

        \hline

        \!\!\!\!ROAR\!\!\!\! & 
        \red{.015} &
        \!\!\!\red{.025}\!\!\! &
        \red{.025} &
         \!\!\!\red{.014}\!\!\! &
        \blue{9.69} & 
        \red{0} &
        \!\!\!\red{0}\!\!\! &
        \red{0} &
         \!\!\!\red{.010}\!\!\! &
        \blue{.164}\\

        \!\!\!\!RBR\!\!\!\! & 
        \red{0} &
        \!\!\!\red{.117}\!\!\! &
        \blue{0} &
         \!\!\!\red{.004}\!\!\! &
        \red{.105} & 
        \red{.070} &
        \!\!\!\red{0}\!\!\! &
        \red{0} &
         \!\!\!\red{.003}\!\!\! &
        \blue{.022}\\
        
        \!\!\!\!ST-CE\!\!\!\! & 
        0 &
        \!\!\!.087\!\!\! &
        \red{.058} &
         \!\!\!\blue{.005}\!\!\! &
        \blue{.068} & 
        \red{0} &
        \!\!\!\red{.152}\!\!\! &
        \red{0} &
         \!\!\!\red{.004}\!\!\! &
        \blue{.016}\\
        
        \hline
                &
         \multicolumn{5}{c|}{target $\delta_{val}=0.02$} &
         \multicolumn{5}{c}{target $\delta_{val}=0.02$} \\
    
        \!\!\!\!GCE-R\!\!\!\! & 
        0 &
        \!\!\!0\!\!\! &
        \red{0} &
         \!\!\!\blue{.006}\!\!\! &
        \blue{.019} & 
        \red{0} &
        \!\!\!\red{0}\!\!\! &
        \red{0} &
         \!\!\!\red{.003}\!\!\! &
        \blue{.043}\\

        \!\!\!\!PROTO-R \!\!\!\! & 
        0&
        \!\!\!0\!\!\! &
        \red{0} &
         \!\!\!\blue{.004}\!\!\! &
        \blue{.019} & 
        \red{0} &
        \!\!\!\red{0}\!\!\! &
        \red{.073} &
         \!\!\!\red{.002}\!\!\! &
        \blue{.061}\\

        \!\!\!\!MCE-R \!\!\!\! & 
        0&
        \!\!\!0\!\!\! &
        \red{0} &
         \!\!\!\blue{.003}\!\!\! &
        \blue{.103} & 
        \red{.002} &
        \!\!\!\red{0}\!\!\! &
        \red{0} &
         \!\!\!\red{.001}\!\!\! &
        \blue{.043}\\
        
        \!\!\!\!\ouralgo{}-\texttt{FF} \!\!\!\! & 
        0&
        \!\!\!0\!\!\! &
        \red{.087} &
         \!\!\!\blue{.004}\!\!\! &
        \blue{.033} & 
        \red{0} &
        \!\!\!\red{0}\!\!\! &
        \red{0} &
         \!\!\!\red{.002}\!\!\! &
        \blue{.019}\\

        \!\!\!\!\ouralgo{}-\texttt{FT} \!\!\!\! & 
        0 &
        \!\!\!0\!\!\! &
        \red{0} &
         \!\!\!.004\!\!\! &
        .157 & 
        \red{0} &
        \!\!\!\red{0}\!\!\! &
        \red{0} &
         \!\!\!\red{.003}\!\!\! &
        \blue{.019}\\
        
        \hline
           &
         \multicolumn{5}{c|}{target $\delta_{inc}=0.091$} &
         \multicolumn{5}{c}{target $\delta_{inc}=0.04$} \\
         
        \!\!\!\!GCE-R\!\!\!\! & 
        0 &
        \!\!\!0\!\!\! &
        \red{0} &
         \!\!\!\blue{.006}\!\!\! &
        \blue{.021} & 
        \red{0} &
        \!\!\!\red{0}\!\!\! &
        \red{0} &
         \!\!\!\red{.010}\!\!\! &
        \blue{.114}\\

        \!\!\!\!PROTO-R \!\!\!\! & 
        0&
        \!\!\!0\!\!\! &
        \red{0} &
         \!\!\!\blue{.011}\!\!\! &
        \blue{.014} & 
        \red{0} &
        \!\!\!\red{0}\!\!\! &
        \red{0} &
         \!\!\!\red{.003}\!\!\! &
        \blue{.025}\\

        \!\!\!\!MCE-R \!\!\!\! & 
        0&
        \!\!\!0\!\!\! &
        \red{0} &
         \!\!\!\blue{.003}\!\!\! &
        \blue{.100} & 
        \red{0} &
        \!\!\!\red{0}\!\!\! &
        \red{0} &
         \!\!\!\red{.002}\!\!\! &
        \blue{.055}\\
        
        \!\!\!\!\ouralgo{}-\texttt{FF} \!\!\!\! & 
        0&
        \!\!\!0\!\!\! &
        \red{0} &
         \!\!\!\blue{.007}\!\!\! &
        \blue{.036} & 
        \red{0} &
        \!\!\!\red{0}\!\!\! &
        \red{0} &
         \!\!\!\red{.003}\!\!\! &
        \blue{.022}\\

        \!\!\!\!\ouralgo{}-\texttt{FT} \!\!\!\! & 
        0 &
        \!\!\!0\!\!\! &
        \red{0} &
         \!\!\!.007\!\!\! &
        .036 & 
        \red{0} &
        \!\!\!\red{0}\!\!\! &
        \red{0} &
         \!\!\!\red{.003}\!\!\! &
        \blue{.022}\\
         \hline
    \end{tabular}
    }
    \caption{Standard deviation results of the quantitative evaluation of the compared CE generation methods on neural networks.} 
    \label{tab:results2-NN-std}
\end{table*}

\begin{table*}[h!]
    \centering
    \resizebox{0.95\columnwidth}{!}{
    \begin{tabular}{cccccc|ccccc}

        \cline{2-11}& 
        \textbf{vr}$\uparrow$&  
        \textbf{v$\Delta_{val}$$\uparrow$} &
        \!\!\!\textbf{v$\Delta_{inc}$$\uparrow$}\!\!\!&
        $\ell_1$$\downarrow$ &
        \!\!\! \textbf{lof$\downarrow$} \!\!\!&
        
        \textbf{vr}$\uparrow$&  
        \textbf{v$\Delta_{val}$$\uparrow$} &
        \!\!\!\textbf{v$\Delta_{inc}$$\uparrow$}\!\!\!&
        $\ell_1$$\downarrow$ &
        \!\!\! \textbf{lof$\downarrow$} \!\!\! \\

        \cline{2-11}
        &
         \multicolumn{5}{c|}{\textbf{adult}} &
         \multicolumn{5}{c}{\textbf{compas}} \\
        
        \hline

        \!\!\!\!NNCE\!\!\!\! & 
        \red{97.9\%} &
        \!\!\!\red{45\%}\!\!\! &
        \blue{42\%} &
         \!\!\!\red{.078}\!\!\! &
        \blue{1.99} & 
        \red{73.6\%}&
        \!\!\!\red{1\%}\!\!\! &
        \blue{49\%} &
         \!\!\!\red{.034}\!\!\! &
        \blue{1.32}\\

        \hline

        \!\!\!\!ROAR\!\!\!\! & 
        \red{100\%} &
        \!\!\!\red{100\%}\!\!\! &
        \red{100\%} &
         \!\!\!\red{.265}\!\!\! &
        \blue{1.69} & 
        \red{100\%}&
        \!\!\!{100\%}\!\!\! &
        \red{96.8\%} &
         \!\!\!\red{.220}\!\!\!&
        \blue{2.59}\\
        
        \hline
                &
         \multicolumn{5}{c|}{target $\delta_{val}=0.04$} &
         \multicolumn{5}{c}{target $\delta_{val}=0.08$} \\

        \!\!\!\!\ouralgo{}-\texttt{FF} \!\!\!\! & 
        100\% &
        \!\!\!100\%\!\!\! &
        \red{70\%} &
         \!\!\!\red{.085}\!\!\! &
        \blue{2.21} & 
        100\%&
        \!\!\!100\%\!\!\! &
        \red{100\%} &
         \!\!\!\blue{.045}\!\!\!&
       \blue{1.22}\\

        \!\!\!\!\ouralgo{}-\texttt{FT} \!\!\!\! & 
        100\%&
        \!\!\!100\%\!\!\! &
        \red{16\%} &
         \!\!\!\blue{.060}\!\!\!&
       \blue{1.71} &
        100\%&
        \!\!\!100\%\!\!\! &
        100\% &
         \!\!\!.043\!\!\! &
        1.30\\
        
        \hline
           &
         \multicolumn{5}{c|}{target $\delta_{inc}=0.063$} &
         \multicolumn{5}{c}{target $\delta_{inc}=0.01$} \\

        \!\!\!\!\ouralgo{}-\texttt{FF} \!\!\!\! & 
        100\% &
        \!\!\!100\%\!\!\! &
        \red{100\%} &
         \!\!\!\red{.087}\!\!\! &
        \blue{2.18} & 
        86.1\%&
        \!\!\!1\%\!\!\! &
        \red{100\%} &
         \!\!\!\blue{.035}\!\!\!&
       \blue{1.30}\\

        \!\!\!\!\ouralgo{}-\texttt{FT} \!\!\!\! & 
        100\% &
        \!\!\!100\%\!\!\! &
        \red{100\%} &
         \!\!\!\red{.064}\!\!\! &
        \blue{1.74} & 
        78.1\%&
        \!\!\!0\%\!\!\! &
        100\% &
         \!\!\!.033\!\!\! &
        1.34\\
        
        \cline{2-11} &
         \multicolumn{5}{c|}{\textbf{gmc}} &
         \multicolumn{5}{c}{\textbf{heloc}} \\
        
        \hline

        \!\!\!\!NNCE\!\!\!\! & 
        \red{92.7\%} &
        \!\!\!\red{57\%}\!\!\! &
        \blue{51\%} &
         \!\!\!\red{.035}\!\!\! &
        \blue{1.21} & 
        \red{77.5\%}&
        \!\!\!\red{5\%}\!\!\! &
        \blue{49\%} &
         \!\!\!\red{.057}\!\!\! &
        \blue{1.06}\\

        \hline

        \!\!\!\!ROAR\!\!\!\! & 
        \red{100\%} &
        \!\!\!\red{100\%}\!\!\! &
        \red{100\%} &
         \!\!\!\red{.119}\!\!\! &
        \blue{2.76} & 
        \red{100\%}&
        \!\!\!{100\%}\!\!\! &
        \red{100\%} &
         \!\!\!\red{.089}\!\!\!&
        \blue{1.42}\\

        \hline
                &
         \multicolumn{5}{c|}{target $\delta_{val}=0.06$} &
         \multicolumn{5}{c}{target $\delta_{val}=0.07$} \\

        \!\!\!\!\ouralgo{}-\texttt{FF} \!\!\!\! & 
        100\% &
        \!\!\!100\%\!\!\! &
        \red{92\%} &
         \!\!\!\red{.043}\!\!\! &
        \blue{1.23} & 
        100\% &
        \!\!\!100\%\!\!\! &
        \red{100\%} &
         \!\!\!\blue{.070}\!\!\!&
       \blue{1.06}\\

        \!\!\!\!\ouralgo{}-\texttt{FT} \!\!\!\! & 
        100\% &
        \!\!\!100\%\!\!\! &
        \red{35\%} &
         \!\!\!.034\!\!\! &
        1.23 & 
        100\% &
        \!\!\!100\%\!\!\! &
        \red{0\%} &
         \!\!\!.066\!\!\! &
        1.05\\
        
        \hline
           &
         \multicolumn{5}{c|}{target $\delta_{inc}=0.079$} &
         \multicolumn{5}{c}{target $\delta_{inc}=0.019$} \\
         
        \!\!\!\!\ouralgo{}-\texttt{FF} \!\!\!\! & 
        100\% &
        \!\!\!100\%\!\!\! &
        \red{100\%} &
         \!\!\!\red{.043}\!\!\! &
        \blue{1.17} & 
        98.8\% &
        \!\!\!10\%\!\!\! &
        \red{100\%} &
         \!\!\!\blue{.061}\!\!\!&
       \blue{1.06}\\

        \!\!\!\!\ouralgo{}-\texttt{FT} \!\!\!\! & 
        100\% &
        \!\!\!100\%\!\!\! &
        \red{100\%} &
         \!\!\!.035\!\!\! &
        1.23 & 
        96.7\% &
        \!\!\!0\%\!\!\! &
        \red{100\%} &
         \!\!\!.054\!\!\! &
        1.06\\ \hline
    \end{tabular}
    }
    \caption{Quantitative evaluation of the compared CE generation methods on logistic regression models. The evaluation metrics are the same as in Table~\ref{tab:results2-NN}.} 
    \label{tab:results2-LR}
\end{table*}

\begin{table*}[h!]
    \centering
    \resizebox{0.85\columnwidth}{!}{
    \begin{tabular}{cccccc|ccccc}

        \cline{2-11}
        & 
        \textbf{vr}&  
        \textbf{v$\Delta_{val}$}&
        \!\!\!\textbf{v$\Delta_{inc}$}\!\!\!&
        $\ell_1$&
        \!\!\! \textbf{lof} \!\!\!&
        
        \textbf{vr}&  
        \textbf{v$\Delta_{val}$} &
        \!\!\!\textbf{v$\Delta_{inc}$}\!\!\!&
        $\ell_1$ &
        \!\!\! \textbf{lof} \!\!\! \\

        \cline{2-11}
        &
         \multicolumn{5}{c|}{\textbf{adult}} &
         \multicolumn{5}{c}{\textbf{compas}} \\
        
        \hline

        \!\!\!\!NNCE\!\!\!\! & 
        \red{.035} &
        \!\!\!\red{.130}\!\!\! &
        \blue{.144} &
         \!\!\!\red{.017}\!\!\! &
        \blue{.188} & 
        \red{.025}&
        \!\!\!\red{.020}\!\!\! &
        \blue{.037} &
         \!\!\!\red{.003}\!\!\! &
        \blue{.068}\\

        \hline

        \!\!\!\!ROAR\!\!\!\! & 
        \red{0} &
        \!\!\!\red{0}\!\!\! &
        \red{0} &
         \!\!\!\red{.028}\!\!\! &
        \blue{.020} & 
        \red{0}&
        \!\!\!{0}\!\!\! &
        \red{0} &
         \!\!\!\red{.002}\!\!\!&
        \blue{.151}\\
        
        \hline
                &
         \multicolumn{5}{c|}{target $\delta_{val}=0.04$} &
         \multicolumn{5}{c}{target $\delta_{val}=0.08$} \\

        \!\!\!\!\ouralgo{}-\texttt{FF} \!\!\!\! & 
        0 &
        \!\!\!0\!\!\! &
        \red{.122} &
         \!\!\!\red{.016}\!\!\! &
        \blue{.251} & 
        0&
        \!\!\!0\!\!\! &
        \red{0} &
         \!\!\!\blue{.003}\!\!\!&
       \blue{.032}\\

        \!\!\!\!\ouralgo{}-\texttt{FT} \!\!\!\! & 
        0 &
        \!\!\!0\!\!\! &
        \red{.092} &
         \!\!\!\red{.010}\!\!\! &
        \blue{.070} & 
        0&
        \!\!\!0\!\!\! &
        \red{0} &
         \!\!\!\blue{.002}\!\!\!&
       \blue{.060}\\
        
        \hline
           &
         \multicolumn{5}{c|}{target $\delta_{inc}=0.063$} &
         \multicolumn{5}{c}{target $\delta_{inc}=0.01$} \\

        \!\!\!\!\ouralgo{}-\texttt{FF} \!\!\!\! & 
        0 &
        \!\!\!0\!\!\! &
        \red{0} &
         \!\!\!\red{.015}\!\!\! &
        \blue{.273} & 
        .015&
        \!\!\!.020\!\!\! &
        0 &
         \!\!\!.003\!\!\! &
        .068\\

        \!\!\!\!\ouralgo{}-\texttt{FT} \!\!\!\! & 
        0 &
        \!\!\!0\!\!\! &
        \red{0} &
         \!\!\!\red{.009}\!\!\! &
        \blue{.104} & 
        .002&
        \!\!\!0\!\!\! &
        0 &
         \!\!\!.003\!\!\! &
        .093\\
        
        \cline{2-11} &
         \multicolumn{5}{c|}{\textbf{gmc}} &
         \multicolumn{5}{c}{\textbf{heloc}} \\
        
        \hline

        \!\!\!\!NNCE\!\!\!\! & 
        \red{.028} &
        \!\!\!\red{.129}\!\!\! &
        \blue{.159} &
         \!\!\!\red{.003}\!\!\! &
        \blue{.021} & 
        \red{.023}&
        \!\!\!\red{.045}\!\!\! &
        \blue{.102} &
         \!\!\!\red{.003}\!\!\! &
        \blue{.013}\\

        \hline

        \!\!\!\!ROAR\!\!\!\! & 
        \red{0} &
        \!\!\!\red{0}\!\!\! &
        \red{0} &
         \!\!\!\red{.010}\!\!\! &
        \blue{.225} & 
        \red{0}&
        \!\!\!{0}\!\!\! &
        \red{0} &
         \!\!\!\red{.003}\!\!\!&
        \blue{.038}\\

        \hline
                &
         \multicolumn{5}{c|}{target $\delta_{val}=0.06$} &
         \multicolumn{5}{c}{target $\delta_{val}=0.07$} \\

        \!\!\!\!\ouralgo{}-\texttt{FF} \!\!\!\! & 
        0 &
        \!\!\!0\!\!\! &
        \red{.112} &
         \!\!\!\red{.004}\!\!\! &
        \blue{.055} & 
        0 &
        \!\!\!0\!\!\! &
        \red{0} &
         \!\!\!\blue{.003}\!\!\!&
       \blue{.021}\\

        \!\!\!\!\ouralgo{}-\texttt{FT} \!\!\!\! & 
        0 &
        \!\!\!0\!\!\! &
        \red{.055} &
         \!\!\!\red{.004}\!\!\! &
        \blue{.048} & 
        0 &
        \!\!\!0\!\!\! &
        \red{0} &
         \!\!\!\blue{.004}\!\!\!&
       \blue{.019}\\

        \hline
           &
         \multicolumn{5}{c|}{target $\delta_{inc}=0.079$} &
         \multicolumn{5}{c}{target $\delta_{inc}=0.019$} \\
         
        \!\!\!\!\ouralgo{}-\texttt{FF} \!\!\!\! & 
        0 &
        \!\!\!0\!\!\! &
        \red{0} &
         \!\!\!\red{.003}\!\!\! &
        \blue{.039} & 
        .010 &
        \!\!\!.008\!\!\! &
        0 &
         \!\!\!\blue{.003}\!\!\!&
       \blue{.016}\\

        \!\!\!\!\ouralgo{}-\texttt{FT} \!\!\!\! & 
        0 &
        \!\!\!0\!\!\! &
        \red{0} &
         \!\!\!\red{.004}\!\!\! &
        \blue{.053} & 
        .012 &
        \!\!\!0\!\!\! &
        0 &
         \!\!\!\blue{.004}\!\!\!&
       \blue{.015}\\ \hline
    \end{tabular}
    }
    \caption{Standard deviation results of the quantitative evaluation of the compared CE generation methods on logistic regression models.} 
    \label{tab:results2-LR-std}
\end{table*}


Standard deviation results accompanying Table~\ref{tab:results2-NN} are presented in Table~\ref{tab:results2-NN-std}.

For linear regression models, we find $\delta_{inc}$ and $\delta_{val}$ using the same strategy for neural networks as stated in Section~\ref{ssec:experiments_identify_delta_values}. We quantitatively compare the CEs found by NNCE, ROAR, and two configurations of our algorithm RNCE using the same evaluation metrics introduced in Section~\ref{ssec:experiments_benchmarking}. Tables~\ref{tab:results2-LR} and \ref{tab:results2-LR-std} report the mean and standard deviation results for logistic regression classifiers.

Slightly different from the neural network results, the $\delta_{inc}$ values, found by incrementally retraining on 10\% of $\dataset_2$, can be smaller than the $\delta_{val}$, and sometimes insufficient to induce 100\% empirical robustness (indicated by vr). Comparing RNCE with baselines, similar to the results for neural networks in Section~\ref{ssec:experiments_benchmarking}, RNCE produces more robust CEs than the NNCE method, and is less costly and more plausible than ROAR. The impact of changing \texttt{optimal} to \texttt{True} is more obvious in this set of experiments, with $\ell_1$ costs decreasing to a greater extent and plausibility remaining comparable to RNCE-\texttt{FF}.

\end{document}